\newcommand{\revision}[1]{{\color{black}#1}}
\newcommand{\rebuttal}[1]{{\color{black}#1}}
\newcommand{\first}[1]{\textbf{\textcolor{red}{#1}}}
\newcommand{\second}[1]{\textbf{\textcolor{violet}{#1}}}
\newcommand{\third}[1]{\textbf{\textcolor{black}{#1}}}
\definecolor{Gray}{gray}{0.9}
\newcommand\ourmethod{\textsc{Granola}}
\def\mH{{\mathbf{H}}}
\def\mX{{\mathbf{X}}}
\def\mA{{\mathbf{A}}}
\def\mR{{\mathbf{R}}}
\def\mW{{\mathbf{W}}}
\def\mS{{\mathbf{S}}}
\def\mI{{\mathbf{I}}}
\def\mZ{{\mathbf{Z}}}
\theoremstyle{plain}
\newtheorem{theorem}{Theorem}[section]
\theoremstyle{definition}
\theoremstyle{remark}
\newtheorem{remark}[theorem]{Remark}
\newtheorem{example}[theorem]{Example}
\title{GRANOLA: Adaptive Normalization for Graph Neural Networks}
\author{%
Moshe Eliasof\thanks{Equal Contribution}\\
University of Cambridge\\
\texttt{me532@cam.ac.uk} \\
\And
Beatrice Bevilacqua$^*$ \\
Purdue University \\
\texttt{bbevilac@purdue.edu} \\
\AND 
Carola-Bibiane Schönlieb \\
University of Cambridge\\
\texttt{cbs31@cam.ac.uk} \\
\And
Haggai Maron \\
Technion \& NVIDIA Research \\
\texttt{hmaron@nvidia.com} \\
}
\begin{document}

\maketitle

\begin{abstract}
Despite the widespread adoption of Graph Neural Networks (GNNs), these models often incorporate off-the-shelf normalization layers like BatchNorm or InstanceNorm, which were not originally designed for GNNs. Consequently, these normalization layers may not effectively capture the unique characteristics of graph-structured data, potentially even weakening the expressive power of the overall architecture. 
While existing graph-specific normalization layers have been proposed, they often struggle to offer substantial and consistent benefits.
In this paper, we propose \ourmethod, a novel graph-adaptive normalization layer. Unlike existing normalization layers, \ourmethod\ normalizes node features by adapting to the specific characteristics of the graph, particularly by generating expressive representations of its nodes, obtained by leveraging \revision{the propagation of} Random Node Features (RNF) in the graph. We provide theoretical results that support our design choices
as well as an extensive empirical evaluation demonstrating the superior performance of \ourmethod\ over existing normalization techniques. Furthermore, \ourmethod\ emerges as the top-performing method among all baselines in the same time complexity class of Message Passing Neural Networks (MPNNs).
\end{abstract}

\section{Introduction}\label{sec:intro}
Graph Neural Networks (GNNs) have achieved remarkable success in several application domains~\cite{jumper2021highly,wang2019dynamic}, showcasing their ability to leverage the rich structural information within graph data. Recently, a plethora of different layer designs has been proposed, each tailored to address specific challenges in the context of GNNs, such as limited expressive power~\cite{xu2019how, morris2019weisfeiler,maron2019provably} and oversmoothing~\cite{nt2019revisiting}.
Notably, analogously to architectures in other domains~\citep{he2016identity, Devlin2019BERTPO}, these GNN layers are often interleaved with normalization layers, as the integration of normalization methods has empirically proven beneficial in optimizing neural networks, facilitating convergence and enhancing generalization~\citep{ioffe2015batch,bjorck2018understanding,santurkar2018does}. 

In practice, most existing GNN architectures employ standard normalization techniques, such as BatchNorm~\citep{ioffe2015batch}, LayerNorm~\citep{ba2016layer}, or InstanceNorm~\citep{ulyanov2016instance}. However, these widely adopted normalization techniques were not originally designed with graphs and GNNs in mind. Consequently, they may not effectively capture the unique characteristics of graph-structured data, and can also hinder the expressive power of the overall architecture~\citep{cai2021graphnorm}. These observations highlight the need for graph-specific normalization layers.

Recent works have taken initial steps in this direction, mainly targeting oversmoothing~\citep{zhao2020pairnorm,yang2020revisiting,zhou2020towards} or the expressive power of the overall architecture~\citep{cai2021graphnorm,chen2020can}. Despite the promise shown by these methods, a consensus on a single normalization technique best suited for diverse tasks remains elusive, with no single normalization technique proving clearly superior across all benchmarks and scenarios.

\textbf{Our approach.} \revision{
In this paper, we identify adaptivity to the input graph structure as a desirable property for an effective normalization layer in graph learning. 
Intuitively, this property ensures that the normalization is tailored to the specific input graph, capturing attributes such as the graph size, node degrees, and connectivity.
Importantly, we claim and demonstrate that, given the limitations of practical GNNs, achieving full adaptivity requires expressive architectures that can detect and disambiguate graph substructures, thereby better adapting to input graphs.
}

Guided by \revision{this desirable property}, which is absent in existing normalization methods, we introduce our proposed approach -- \ourmethod\ (\underline{Gr}aph \underline{A}daptive \underline{No}rmalization \underline{La}yer).
\ourmethod\ aims at dynamically adjusting node features at each layer by leveraging learnable \revision{characteristics} of the node neighborhood structure derived through the utilization of Random Node Features (RNF)~\citep{murphy2019relational,abboud2020surprising,puny2020global,sato2021random,dasoulas2021coloring}. 
More precisely, \ourmethod\ samples RNF and uses them in an additional \emph{normalization} GNN to obtain expressive intermediate node representations. The intermediate representations are then used to scale and shift the node representations obtained by the preceding GNN layer.

We present theoretical results that justify the primary design choices behind our method. Specifically, we demonstrate that \ourmethod\ is fully adaptive to the input graph, which, in other words, means that \ourmethod\ can predict different normalization values for non-isomorphic nodes. This property arises from the maximal expressive power of the normalization GNN we employ (MPNN augmented with RNF ~\cite{abboud2020surprising,puny2020global}). In addition, we show that our method inherits this expressive power. Lastly, we show that using standard MPNN layers without RNF within \ourmethod\ cannot result in a fully adaptive method or in any additional expressive power.

Empirically, we show that \ourmethod\ significantly and consistently outperforms all existing standard as well as GNN-specific normalization schemes on a variety of different graph benchmarks and architectures. Furthermore, \ourmethod\ proves to be the best-performing method among all baselines that have the same time complexity as the most widely adopted GNNs, namely the family of MPNNs.

\textbf{Contributions.} Our contributions are as follows:
\begin{enumerate*}[label=(\arabic*)]
    \item We provide an overview of different normalization schemes in graph learning, outlining \revision{adaptivity as a} desirable property of normalization layers that existing methods are missing,
    \item We introduce \ourmethod, a novel normalization technique adjusting node features based on learnable \revision{characteristics} of their neighborhood structure, 
    \item We present an intuitive theoretical analysis of our method, giving insights into the design choices we have made, and
    \item We conduct an extensive empirical study, providing a thorough benchmarking of existing normalization methods and showcasing the consistently superior performance of \ourmethod.
\end{enumerate*}

\section{Normalization layers for GNNs}
\subsection{Basic setup and definitions}
Let $G = (\mA, \mX)$ denote graph with $N \in \mathbb{N}$ nodes, where $\mA \in \mathbb{R}^{N \times N}$ is the adjacency matrix and $\mX \in \mathbb{R}^{N \times C}$ is the node feature matrix, with $C \in \mathbb{N}$ the feature dimension. Consider a batch of $B \in \mathbb{N}$ graphs encoded by the adjacency matrices $\{\mA_{b} \}_{b=0}^{B-1}$, and, for simplicity, assume that all graphs in the batch have the same number of nodes $N$.
We consider a model composed of $L$ GNN layers, with $L \in \mathbb{N}$. Each GNN layer is followed by a normalization layer $\textsc{Norm}$ and an activation function $\phi$.
At any layer $\ell \in [L]$, the output of the GNN layer for a batch of graphs consists of (intermediate) node representations, which can be gathered in a matrix $\tilde{\mH}^{(\ell)} \in \mathbb{R}^{B \times N \times C}$ (since all graphs have the same number of nodes per our assumption\footnote{It is always possible to meet this assumption by padding to the max number of nodes.}). These undergo a normalization and an activation layer, resulting in node representations denoted by $\mH^{(\ell)} \in \mathbb{R}^{B \times N \times C}$, which serve as input of the next GNN layer, with $\mH^{(0)} \coloneqq \mX$.\footnote{For simplicity and without loss of generality, we assume that the feature dimension of all GNN layers is $C$.} Throughout this paper, for any three-dimensional tensor, we use subscripts to denote access to a corresponding dimension. For example, we denote the intermediate node representations of graph $b \in [B]$ by $ \tilde{\mH}^{(\ell)}_b \in \mathbb{R}^{N \times C}$, and by $\tilde{h}^{(\ell)}_{b, n, c}$ the value of feature $c \in [C]$ in node $n \in [N]$ of graph $b\in [B]$. 

Formally, the intermediate, \emph{pre-normalized} node features for the $b$-th graph in the batch are defined as
\begin{equation}
    \label{eq:intermediateFeature}
    \tilde{\mH}^{(\ell)}_b = \text{GNN}^{(\ell-1)}_{\textsc{layer}}\left(\mA_{b}, \mH_b^{(\ell-1)} \right).
\end{equation}
Then, the overall update rule for the batch of $B$ graphs can be written as
\begin{equation}
    \label{eq:gnnAndNorm}
    \mH^{(\ell)} = \phi\left(\textsc{Norm}\left(\tilde{\mH}^{(\ell)} ;  \ell\right)\right),
\end{equation}
for $\ell \in [L]$. \cref{eq:gnnAndNorm} serves as a general blueprint, and in what follows we will show different ways to customize it in order to implement different existing normalization techniques. 
\begin{figure}[t]
    \centering
    \includegraphics[width=0.8\columnwidth]{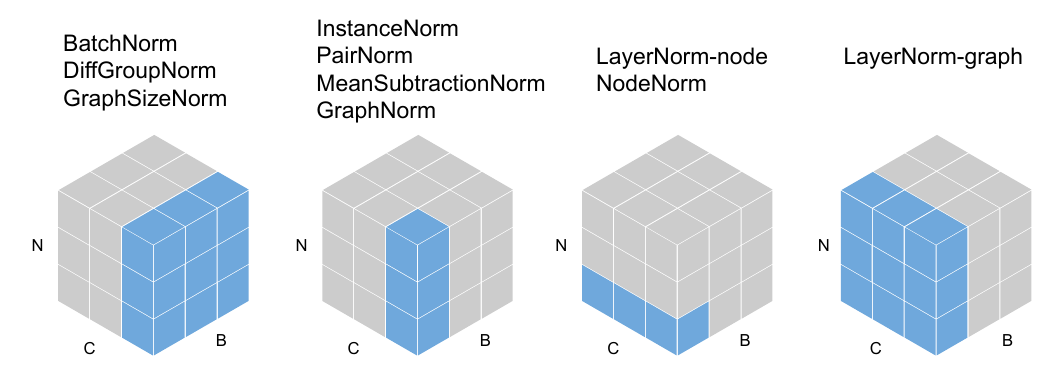}
    \caption{Illustration of normalization layers. We denote by $B$, $N$ and $C$ the number of graphs (batch size), nodes, and channels (node features), respectively. For simplicity of presentation, we use the same number of nodes for all graphs. We color in blue the elements used to compute the statistics employed inside the normalization layer.}
    \label{fig:intro}
    \vspace{-10pt}
\end{figure}

We consider normalization layers based on the standardization \cite{larsen2005introduction} of their inputs, as this represents the most common choice in widely used normalizations.
Generally, a standardization-based normalization layer first shifts each element $\tilde{h}^{(\ell)}_{b, n, c}$ by some mean $\mu_{b,n,c}$, and then scales it by the corresponding standard deviation $\sigma_{b,n,c}$, i.e.,
\begin{align}\label{eq:norm}
    \textsc{Norm}(\tilde{h}^{(\ell)}_{b, n, c}; \tilde{\mH}^{(\ell)}, \ell) = \gamma^{(\ell)}_c \frac{\tilde{h}^{(\ell)}_{b, n, c} - \mu_{b,n,c} }{\sigma_{b,n,c}} + \beta^{(\ell)}_c,
\end{align}
where $\gamma^{(\ell)}_c$, $\beta^{(\ell)}_c \in \mathbb{R}$ are learnable \emph{affine} parameters, that do not depend on $b$ nor $n$.

\subsection{Current normalization layers for GNNs} 
\label{sec:currentGNNnorms}
The difference among different normalization schemes lies in the set of values used to compute the mean and standard deviation statistics for each element, or, more precisely, across which dimensions of $\tilde{\mH}^{(\ell)}$ they are computed. We present them below, and visualize them in \Cref{fig:intro}.

\textbf{BatchNorm.} BatchNorm \cite{ioffe2015batch} computes the statistics across all nodes and all graphs 
in the batch, for each feature separately. Therefore, we have
\begin{equation}
\label{eq:bn}
    \mu_{b,n,c} = \frac{1}{B N} \sum_{b=1}^{B}  \sum_{n=1}^{N} \tilde{h}^{(\ell)}_{b, n, c}, 
    \sigma^2_{b,n,c} = \frac{1}{B N} \sum_{b=1}^{B}  \sum_{n=1}^{N} (\tilde{h}^{(\ell)}_{b, n, c} - \mu_{b,n,c})^2,
\end{equation}
which implies that $\mu_{b,n,c} =  \mu_{b',n',c}$ for any $b' \in [B]$ and any $n' \in [N]$ (and similarly for $\sigma^2_{b,n,c}$).
Despite the widespread use of BatchNorm in graph learning~\citep{xu2019how}, it is important to recognize scenarios where alternative normalization schemes might be more suitable. A concrete example illustrating this need is presented in \cref{fig:BNfailExample}, focusing on the task of predicting the degree of each node.\footnote{The node degree is a fundamental feature in graph-based methods, see for example \citet{newman2010networks}.} 
In this example, BatchNorm, by subtracting the mean computed across the batch, results in negative values for nodes with outputs below the mean. The subsequent ReLU application, as standard practice \cite{xu2019how, kipf2016semi, morris2019weisfeiler}, zeros out these negative values, leading to predictions of 0 for such nodes, irrespective of their actual degree. Additional insights are further discussed in \cref{ex:BNlimitations} in \Cref{app:limitations}, where it is demonstrated that relying on the affine parameter $\beta^{(\ell)}_c$ to shift the negative output does not provide a definitive solution, since $\beta^{(\ell)}_c$ is the same for all graphs.

\begin{wrapfigure}[22]{r}{0.55\textwidth}
\vspace{10pt}
  \centering
  \begin{subfigure}{0.5\textwidth}
    \centering \includegraphics[width=\linewidth]{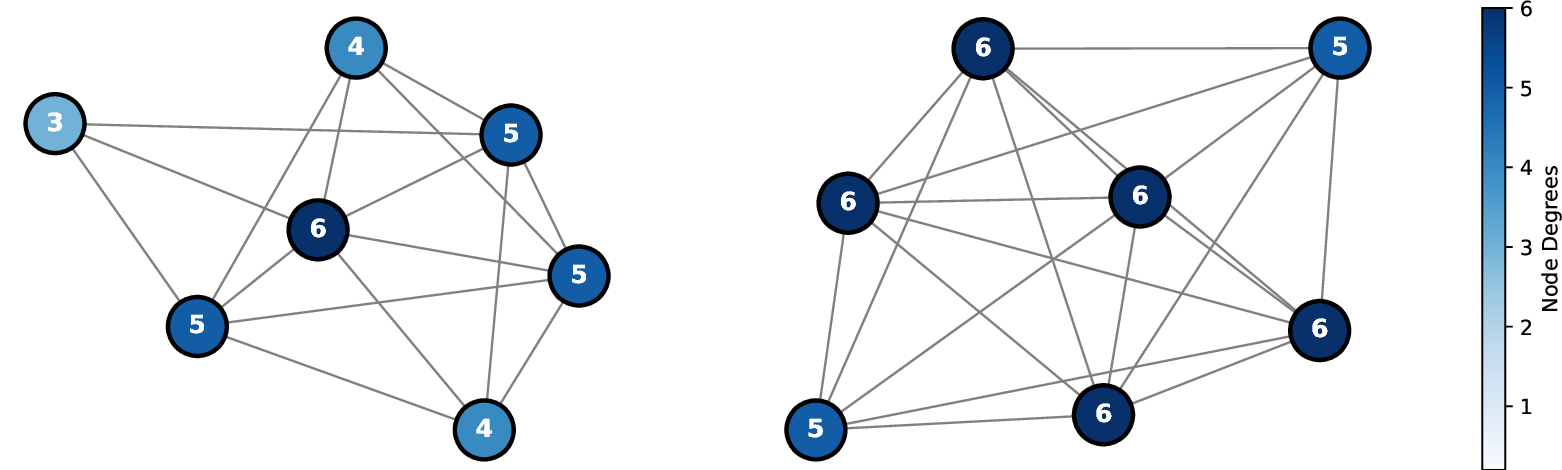}
    \caption{Node degrees, computed by one message-passing layer.}
    \label{fig:sub1}
  \end{subfigure}
  \hspace{1.5cm} 
  \begin{subfigure}{.5\textwidth}
    \centering
\includegraphics[width=\linewidth]{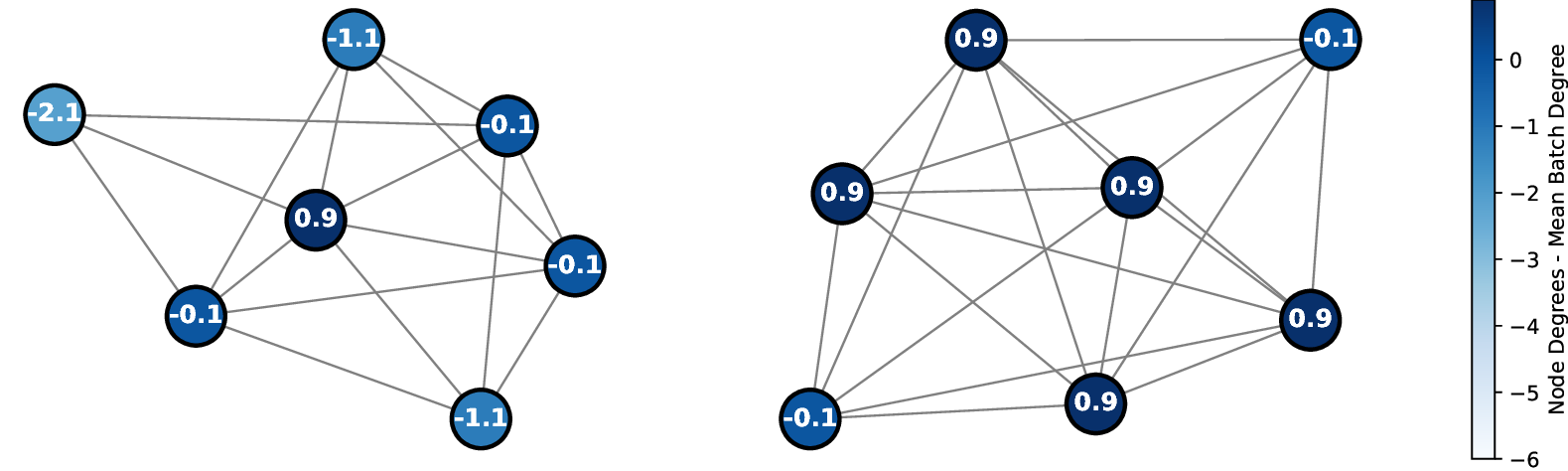}
    \caption{Subtraction of the mean node degree. The features of nodes with degree less than average turn negative.}
    \label{fig:sub2}
  \end{subfigure}
  \hspace{1.5cm} 
  \begin{subfigure}{.5\textwidth}
    \centering
    \includegraphics[width=\linewidth]{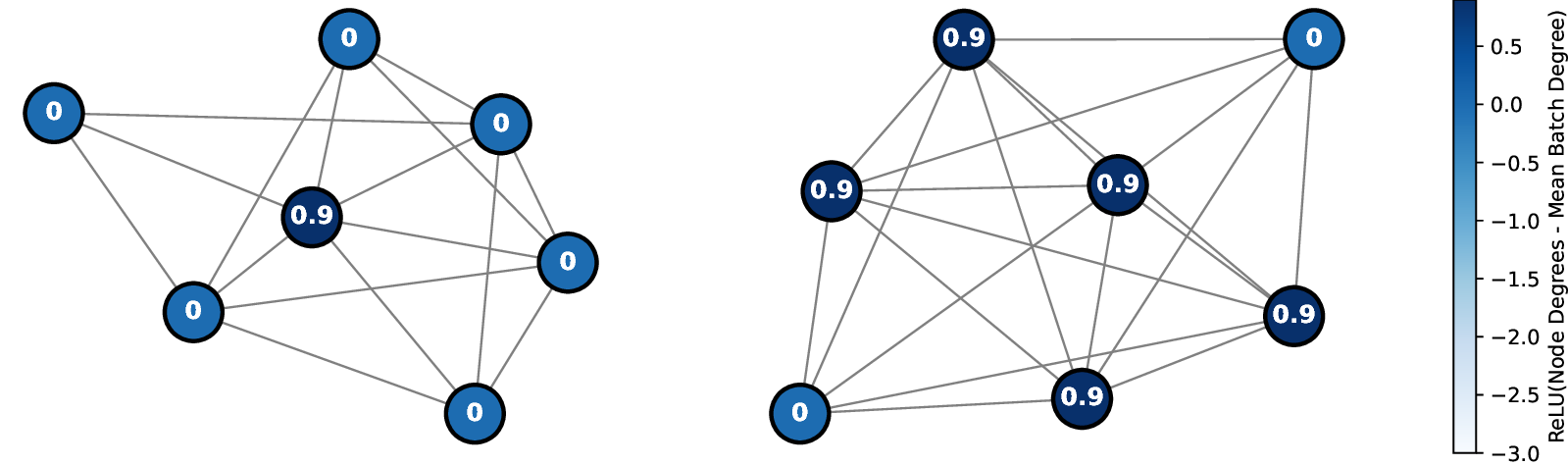}
    \caption{After ReLU, nodes with negative values are mapped to the same value of 0, despite having different degrees.}
    \label{fig:sub3}
  \end{subfigure}
  \caption{A batch of two graphs, where subtracting the mean of the node features computed across the batch, as in BatchNorm and related methods, results in the loss of capacity to compute node degrees. }
  \label{fig:BNfailExample}
\end{wrapfigure}
\textbf{InstanceNorm.} InstanceNorm \cite{ulyanov2016instance} is similar to BatchNorm, but considers each graph separately, that is 
\begin{equation}
\label{eq:in}
\begin{aligned}
    \mu_{b,n,c} &= \frac{1}{N} \sum_{n=1}^{N} \tilde{h}^{(\ell)}_{b, n, c},  \\
    \sigma^2_{b,n,c} &= \frac{1}{N} \sum_{n=1}^{N} (\tilde{h}^{(\ell)}_{b, n, c} - \mu_{b,n,c})^2,
\end{aligned}
\end{equation}
which implies that $\mu_{b,n,c} =  \mu_{b,n',c}$ for any $n' \in [N]$ (and similarly for $\sigma^2_{b,n,c}$). Notably, the example in \cref{fig:BNfailExample} can be extended to InstanceNorm by considering all graphs in the batch as disconnected components in a single graph, as we show in \cref{ex:INlimitations} in \Cref{app:limitations}.

\textbf{LayerNorm.} LayerNorm \cite{ba2016layer} can be defined in two ways in the context of graphs~\citep{pyg2019}. The first, which we call \emph{LayerNorm-node}, behaves similarly to LayerNorm in Transformer architectures~\citep{vaswani2017attention}, and computes statistics across the features for each node separately, that is
\begin{equation}
    \label{eq:ln-node}
\begin{aligned}
    \mu_{b,n,c} &= \frac{1}{C} \sum_{c=1}^{C} \tilde{h}^{(\ell)}_{b, n, c}, \\
    \sigma^2_{b,n,c} &= \frac{1}{C} \sum_{c=1}^{C}  (\tilde{h}^{(\ell)}_{b, n, c} - \mu_{b,n,c})^2,
\end{aligned}
\end{equation}
and therefore $\mu_{b,n,c} =  \mu_{b,n,c'}$ for any $c' \in [C]$ (and similarly for $\sigma^2_{b,n,c}$). 
The second variant, which we call \emph{LayerNorm-graph}, is similar to LayerNorm in Computer Vision~\citep{wu2018group}, and computes statistics across the features and across all the nodes in each graph, that is
\begin{equation}
\label{eq:ln-graph}
    \mu_{b,n,c} = \frac{1}{NC} \sum_{n=1}^{N} \sum_{c=1}^{C} \tilde{h}^{(\ell)}_{b, n, c}, \qquad
    \sigma^2_{b,n,c} =  \frac{1}{NC} \sum_{n=1}^{N} \sum_{c=1}^{C}  (\tilde{h}^{(\ell)}_{b, n, c} - \mu_{b,n,c})^2,
\end{equation}
and therefore $\mu_{b,n,c} =  \mu_{b,n',c'}$ for any $n' \in [N]$ and any $c' \in [C]$ (and similarly for $\sigma^2_{b,n,c}$). \cref{ex:LNlimitations} in \Cref{app:limitations} presents a motivating example similar to \Cref{fig:BNfailExample} for LayerNorm.

\textbf{Graph-Specific normalizations.} Several normalization methods tailored to graphs have been recently proposed. We categorize them based on which dimensions (the batch dimension $B$, the node dimension $N$ within each graph, and the channel dimension $C$) are used to compute the statistics employed within the normalization layer. We illustrate this categorization in \cref{fig:intro}. DiffGroupNorm~\citep{zhou2020towards} and GraphSizeNorm~\citep{dwivedi2023benchmarking} normalize features considering nodes across different graphs, akin to BatchNorm. Similarly to InstanceNorm, PairNorm~\citep{zhao2020pairnorm} and MeanSubtractionNorm~\citep{yang2020revisiting} shift the input by the mean computed per channel across all the nodes in the graph, with differences in the scaling strategies. GraphNorm~\citep{cai2021graphnorm} extends InstanceNorm by incorporating a multiplicative factor to the mean. NodeNorm~\citep{zhou2021understanding} behaves similarly to LayerNorm\revision{-node} but only scales the input, without shifting it.
We provide additional details in \Cref{app:related,app:norm-eqs}, and include methods that normalize the adjacency matrix before the GNN layers, which, however, fall outside the scope of this work.

\section{Method}
\label{sec:method}
The following section describes the proposed \ourmethod\ framework. We start with identifying adaptivity as a desired property in a graph normalization layer.
\paragraph{Motivation.} In the previous section, we observed that current normalization schemes used within GNNs are mostly borrowed or adapted from other domains and that they possess two main limitations: (i)  Using BatchNorm and InstanceNorm (as well as many methods derived from them, including graph-specific methods) can limit the expressive power of GNNs that use them, preventing them from being able to represent even very simple functions such as computing node degrees; (ii) As shown in previous works, as well as in our experiments in Section \ref{sec:experiments}, these methods do not provide a consistent benefit in downstream performance on different tasks and datasets.

One possible reason for these limitations is that the discussed methods use the same affine parameters in the normalization process, irrespective of the input graph. Crucially, unlike other more structured data types, such as images and time series, graphs do not differ merely in the values associated with each element or time step. Instead, graphs fundamentally vary in the actual connectivity between the nodes.
Therefore, it might make sense to employ different (adaptive) normalization schemes based on the features and the connectivity of the input graph. Next, we explore this concept and show that carefully accounting for the graph structure in the normalization process may enhance the expressive power of the GNN, overcoming the previously mentioned failure cases, and providing consistent experimental behavior and overall improved performance.

\subsection{Design considerations and overview} \label{sec:adaptiveNorm}
In an adaptive normalization, instead of using the same affine parameters $\gamma^{(\ell)}_c$ and $\beta^{(\ell)}_c$ (\Cref{eq:norm}) for all the nodes in all the graphs, the normalization method utilizes specific parameters conditioned on the input graph. Importantly, in other domains, adaptivity in normalization techniques has proven to be a valuable property~\citep{dumoulin2016learned, huang2017arbitrary,park2019semantic,zhu2020sean,ogasawara2010adaptive, passalis2019deep}. 
In \ourmethod, we achieve this property by generating affine parameters using an additional \emph{normalization} GNN that takes the graph as input, similarly to Hypernetworks~\citep{ha2016hypernetworks} that generate weights for another network. 
Importantly, for full graph adaptivity, where the network can assign different normalization parameters for every pair of non-isomorphic nodes, the normalization GNN should have maximal expressive power. Due to the limited expressive power of MPNNs \cite{morris2019weisfeiler, xu2019how}, when designing \ourmethod, we advocate for using a normalization GNN with high expressiveness. Notably, most expressive GNNs come at the expense of significantly increased computational complexity \cite{cotta2021reconstruction, zhang2021nested,bevilacqua2022equivariant,zhang2023complete,maron2019provably,morris2021weisfeiler}. For this reason, we parameterize the normalization GNN using an MPNN augmented with Random Node Features (RNF), which provide strong function approximation guarantees \cite{abboud2020surprising, puny2020global} while retaining the efficiency of standard MPNNs.

\begin{figure*}[t]
    \centering
    \includegraphics[width=0.8\columnwidth]{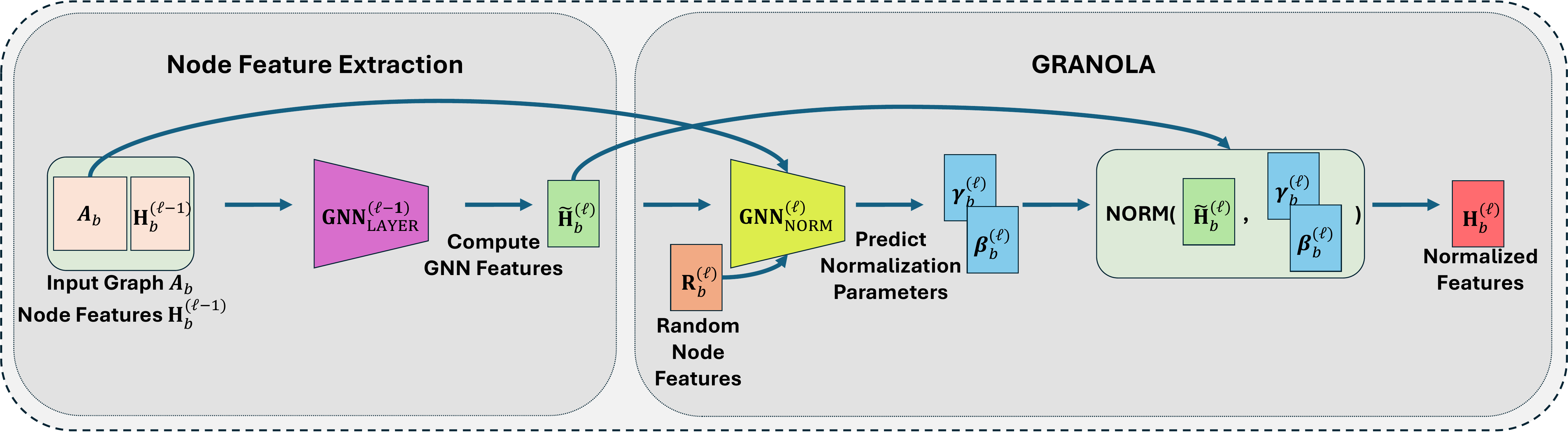}
    \caption{\revision{Illustration of a \ourmethod\ layer. Given node features ${\mH}^{(\ell-1)}_b$ and the adjacency matrix $\mathbf{A}_b$, we feed them to a $\text{GNN}^{(\ell-1)}_{\textsc{layer}}$ to extract intermediate node features $\tilde{\mH}_b^{(\ell)}$. Then, we predict normalization parameters using $\text{GNN}^{(\ell)}_{\textsc{norm}}$, which takes sampled RNF  $\mR^{(\ell)}_b$,  $\tilde{\mH}^{(\ell)}_b$,  $\mathbf{A}_b$. 
    Including $\mR_b^{(\ell)}$ with $\mathbf{A}_b$ and $\tilde{\mH}^{(\ell)}_b$ enhances the expressiveness of \ourmethod\ ensuring full adaptivity.}
    }
    \label{fig:granola-new}
\end{figure*}

\begin{algorithm}[t]
\caption{GRANOLA Layer}\label{alg:granola}
\textbf{Input:} Node features $\tilde{\mH}_b^{(\ell)} \in \mathbb{R}^{N \times  C}$, obtained from $\textsc{GNN}_\textsc{layer}^{(\ell-1)}$.\\
\textbf{Output:} Normalized features per node $n$ $\in$ $[N]$ and channel $c$ $\in$ $[C]$. 
\begin{algorithmic}[1]
\STATE Sample random node features  $\mR^{(\ell)}_b \in \mathbb{R}^{N \times K}$.  \\
   \STATE Compute $    \mZ^{(\ell)}_b = \text{GNN}^{(\ell)}_{\textsc{norm}}(\mA_{b}, \tilde{\mH}_b^{(\ell)} \oplus \mR^{(\ell)}_b )$.  \\
   \STATE Compute affine parameters: \\ $    \gamma^{(\ell)}_{b, n} =   f_1^{(\ell)} ( z^{(\ell)}_{b, n}), \qquad
    \beta^{(\ell)}_{b, n} =  f_2^{(\ell)} (z^{(\ell)}_{b, n})$.
    \STATE Compute  mean and standard-deviation $\mu_{b,n,c}$ and $\sigma_{b,n,c}$ of $\tilde{\mH}_b^{(\ell)}$.
    \STATE Return $ \gamma^{(\ell)}_{b, n, c} \frac{\tilde{h}^{(\ell)}_{b, n, c} - \mu_{b,n,c} }{\sigma_{b,n,c}} + \beta^{(\ell)}_{b, n, c}$.
 \end{algorithmic}
\end{algorithm}

\subsection{\ourmethod}
We are now ready to describe our \ourmethod\ approach, which is based on the utilization of a\revision{n additional} GNN to compute the affine parameters and the integration of RNF~\citep{sato2021random}. These affine parameters are then used to normalize node representations obtained by the GNN layer preceding the normalization. An overview of \ourmethod\ is visualized in \Cref{fig:granola-new} and described in \Cref{alg:granola}. 

\paragraph{\ourmethod\ layer.} At any given layer $\ell \in [L]$, for each graph $b \in [B]$,
we sample RNF $\mR^{(\ell)}_b \in \mathbb{R}^{N \times K}$, $K \in \mathbb{N}$, from some joint 
probability distribution, e.g., Normal, $\mR^{(\ell)}_b  \sim \mathcal{D}$.
Then, we concatenate $\mR^{(\ell)}_b $ with the intermediate node features $\tilde{\mH}_b^{(\ell)}$ obtained from  $\textsc{GNN}_\textsc{layer}^{(\ell-1)}$
\revision{(\Cref{eq:intermediateFeature})}, and pass the resulting feature matrix through a\revision{n additional} GNN, i.e.,
\begin{align}\label{eq:hat-h}
    \mZ^{(\ell)}_b = \text{GNN}^{(\ell)}_{\textsc{norm}}(\mA_{b}, \tilde{\mH}_b^{(\ell)} \oplus \mR^{(\ell)}_b ),
\end{align}
where $\oplus$ denotes feature-wise concatenation, and $\text{GNN}^{(\ell)}_{\textsc{norm}}$  is a shallow GNN architecture, with $L^{(\ell)}_{\textsc{norm}}$ layers. %
The affine parameters are then computed from $\mZ^{(\ell)}_b \in \mathbb{R}^{N \times C}$, that is 
\begin{align}\label{eq:f1-f2}
    \gamma^{(\ell)}_{b, n} =   f_1^{(\ell)} ( z^{(\ell)}_{b, n}), \qquad
    \beta^{(\ell)}_{b, n} =  f_2^{(\ell)} (z^{(\ell)}_{b, n}).
\end{align}
where $f_1^{(\ell)},f_2^{(\ell)}$ are learnable functions (e.g., MLPs), and $\gamma^{(\ell)}_{b, n}, \beta^{(\ell)}_{b, n} \in \mathbb{R}^{C}$. \revision{Then}, a  \ourmethod\ layer is defined as
\begin{align}\label{eq:granola}
    \textsc{Norm}(\tilde{h}^{(\ell)}_{b, n, c} ; \tilde{\mH}^{(\ell)}, \ell ) = \gamma^{(\ell)}_{b, n, c} \frac{\tilde{h}^{(\ell)}_{b, n, c} - \mu_{b,n,c} }{\sigma_{b,n,c}} + \beta^{(\ell)}_{b, n, c} ,
\end{align}
where $\mu_{b,n,c}$ and $\sigma_{b,n,c}$ are the mean and std of $\tilde{\mH}^{(\ell)}$, computed per node across the feature dimension, exactly as in LayerNorm-node (\Cref{eq:ln-node}).
We highlight here a noticeable difference compared to standard normalization formulations, as presented in \Cref{eq:norm}: in \ourmethod, $\gamma^{(\ell)}_{b, n, c}, \beta^{(\ell)}_{b, n, c}$ not only depend on $c$, but they also have a dependency on $b$ and $n$ and indeed vary for different graphs and nodes. Notably, this is possible because our normalization is adaptive to the input graph. Methods that disregard this information are compelled to use the same learnable normalization values for all graphs, as they operate without knowledge of the specific input graph being considered.

\paragraph{\ourmethod-\textsc{no-rnf}.} We consider a variant of \ourmethod\ that does not sample RNF and instead uses only $\tilde{\mH}^{(\ell)}_b$ to obtain $\mZ^{(\ell)}_b$ in \Cref{eq:hat-h}, which therefore becomes
\begin{align}\label{eq:hat-h-features}
    \mZ^{(\ell)}_b = \text{GNN}^{(\ell)}_{\textsc{norm}}(\mA, \tilde{\mH}_b^{(\ell)} ).
\end{align}
\revision{We refer to this variant as \ourmethod-\textsc{no-rnf}, as it allows us to directly quantify the contribution of 
the expressiveness offered by augmenting it with RNF as in \Cref{eq:hat-h}.}

\paragraph{Complexity.} We conclude by remarking that both \ourmethod\ and its variant, \ourmethod-\textsc{no-rnf}, do not impact the asymptotic time and space complexity of standard MPNNs, which remains linear in the number of nodes and edges, as we show in \cref{app:complexity}.

\section{Theoretical Analysis}
\label{sec:theory}
In this section, we explain the main design choices taken in
the development of \ourmethod. Specifically, we elaborate on the advantages obtained by utilizing RNF as part of the normalization scheme as opposed to relying solely on the node features.
We assume all GNN layers, including those within \ourmethod\ (\Cref{eq:hat-h}) are message-passing layers, as the ones in \citet{xu2019how,morris2019weisfeiler}.

\textbf{The advantages of using RNF-augmented MPNNs for normalization.} We start by observing that the integration of our \ourmethod\ in an MPNN allows to easily default to an MPNN augmented with RNF~\citep{sato2021random}, as we formalize in \cref{theo:implement} in \Cref{app:proofs}. 
The idea of the proof lies in the ability of the first normalization layer to default to outputting its input RNF, enabling the rest of the architecture to function as an MPNN augmented with these RNF. The significance of this result lies in the fact that MPNN + \ourmethod\ inherits the $(\epsilon, \delta)$-universal approximation properties previously proved for MPNNs augmented with RNF  \citep{abboud2020surprising,puny2020global}. %
This naturally solves the limitations of existing normalizations identified in \Cref{sec:currentGNNnorms}, as an MPNN + \ourmethod\ can leverage universality to approximate such functions. 
Importantly, the universality of MPNNs augmented with RNF further implies that \ourmethod\ is \emph{fully adaptive}, as it can approximate any equivariant function on the input graph~\citep{abboud2020surprising}, and therefore can approximate functions returning different normalization values for non-isomorphic nodes. 

\textbf{Why RNF are necessary?} While \ourmethod\ employs RNF to compute the normalization affine parameters $\gamma^{(\ell)}_{b, n, c}$ and $\beta^{(\ell)}_{b, n, c}$, the same procedure can be applied without the use of RNF, a variant we denoted as \ourmethod-\textsc{no-rnf} in \cref{sec:method} (\Cref{eq:hat-h-features}). However, we theoretically demonstrate next that an MPNN + \ourmethod-\textsc{no-rnf} not only loses the universal approximation properties, but is also not more expressive than standard MPNNs.

\begin{restatable}[RNF are necessary in \ourmethod\ for increased expressive power]{proposition}{norf}\label{theo:no-rnf}
    Assume our input domain consists of graphs of a specific size. For every MPNN with \ourmethod-\textsc{no-rnf} (\Cref{eq:hat-h-features}) there exits a standard MPNN with the same expressive power.
\end{restatable}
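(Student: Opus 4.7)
The plan is to exploit the well-known characterization of MPNN expressive power by the 1-Weisfeiler-Leman (1-WL) test on fixed-size graphs, and to argue that \ourmethod-\textsc{no-rnf} adds no distinguishing power beyond 1-WL. Since standard MPNNs already saturate 1-WL in this setting, the two function classes will coincide, yielding the statement.

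I would proceed in three steps. First, by induction on the layer index $\ell$, I would establish that both $\tilde{\mH}^{(\ell)}_b$ and $\mH^{(\ell)}_b$ are \emph{1-WL-invariant}: any two nodes (possibly across different input graphs of the fixed size) sharing the same $\ell$-th iteration 1-WL color receive identical feature vectors. The base case is immediate because the raw node features are used as the initial coloring. For the inductive step, the main MPNN layer preserves 1-WL invariance by the standard argument of \citet{xu2019how, morris2019weisfeiler}. The crucial observation for \ourmethod-\textsc{no-rnf} is that $\text{GNN}^{(\ell)}_{\textsc{norm}}$ is itself an MPNN applied to $(\mA_b, \tilde{\mH}^{(\ell)}_b)$, so its output $\mZ^{(\ell)}_b$ is 1-WL-invariant; the node-wise MLPs $f_1^{(\ell)}, f_2^{(\ell)}$ then produce affine parameters $\gamma^{(\ell)}_{b,n}, \beta^{(\ell)}_{b,n}$ that inherit this invariance; the per-node mean and standard deviation $\mu_{b,n,c}, \sigma_{b,n,c}$, which depend only on $\tilde{h}^{(\ell)}_{b,n,:}$ in the LayerNorm-node style of \Cref{eq:ln-node}, are likewise 1-WL-invariant; and the final node-wise affine normalization in \Cref{eq:granola} preserves the invariance. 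Hence $\mH^{(\ell)}_b$ is a node-wise function of the refined 1-WL colors, as required.

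Second, I would invoke the classical universal approximation result stating that, on graphs of a fixed size, standard MPNNs of sufficient depth and width approximate any continuous permutation-equivariant function that respects 1-WL equivalence. Since every MPNN $+$ \ourmethod-\textsc{no-rnf} model computes such a function by the previous step, it can be simulated (or approximated) by a standard MPNN, giving the desired equivalence of expressive power and completing the proof.

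The main obstacle will be handling two technicalities: (i) the potential division by small or zero $\sigma_{b,n,c}$, which I would circumvent either by the usual $\epsilon$-regularization commonly adopted in implementations or by restricting to the (generic) inputs on which the standard deviation is bounded away from zero, so that the node-wise normalization remains a continuous 1-WL-invariant map; and (ii) making precise the notion of ``same expressive power,'' which I would interpret as equality of separation power on pairs of nodes and graphs together with universal approximation on the quotient space induced by 1-WL refinement. With these conventions fixed, the inductive 1-WL invariance argument combined with MPNN universality closes the proof cleanly.
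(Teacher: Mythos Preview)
Your approach is correct and takes a genuinely different route from the paper. The paper gives a \emph{constructive simulation}: it explicitly builds a standard MPNN that reproduces each \ourmethod-\textsc{no-rnf} layer, first carrying both $\tilde{\mH}^{(\ell)}_b$ and the output of $\text{GNN}^{(\ell)}_{\textsc{norm}}$ side by side through extra channels (using the ReLU identity $x=\phi(x)-\phi(-x)$ to preserve sign through the nonlinearity), and then invoking MLP memorization of finitely many input--output pairs \citep{yun2019small,yehudai2021local} to realize the node-wise normalization map of \Cref{eq:granola} in a few further layers. You instead go through the 1-WL characterization: every sub-operation in \ourmethod-\textsc{no-rnf} (the inner MPNN, the node-wise MLPs $f_1,f_2$, and the per-node LayerNorm statistics) preserves 1-WL color classes, so the entire model computes a 1-WL-respecting function, and then MPNN universality for such functions on fixed-size graphs closes the argument. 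Your route is conceptually cleaner and sidesteps the explicit layer-by-layer construction; the paper's route is more hands-on and does not need to invoke an external universality theorem, at the price of leaning on a memorization lemma that, strictly read, also requires a finite or compact input set rather than merely a fixed graph size. Both approaches have to finesse the same wrinkle---the potentially discontinuous division by $\sigma_{b,n,c}$---and both do so informally.
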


\Cref{theo:no-rnf} is proven by showing that an MPNN with \ourmethod-\textsc{no-rnf} can be implemented by a standard MPNN, and, therefore, its expressive power is bounded by the expressive power of a standard MPNN. The proof can be found in \Cref{app:proofs}. This limitation underscores the significance of RNF within \ourmethod. Furthermore, our experiments in \cref{sec:experiments} show that omitting the RNF within the normalization results in degraded performance, as \ourmethod\ always outperforms \ourmethod-\textsc{no-rnf}. 
Finally, we remark that this theoretical result emphasizes the necessity of RNF for increased expressiveness and, consequently, for ensuring full adaptivity to the input graph. That is, any normalization generated by \ourmethod-\textsc{no-rnf} will be limited by the expressive power of standard MPNNs while \ourmethod\ does not have this limitation.

\textbf{Relation to expressive GNNs.} The results in this section highlight the connection between \ourmethod\ and expressive GNNs, as our method inherits enhanced expressiveness of MPNNs augmented with RNF. Notably, while MPNNs with RNF have demonstrated theoretical improvements in expressiveness, their practical performance on real-world datasets has not consistently reflected these benefits \citep{eliasof2023graph}. Our experimental results indicate that \ourmethod\ serves as a valuable bridge between theory and practice. Specifically, our findings address the gap between the theoretical expressiveness of MPNNs that use RNF, and their limited practical utility. This is achieved by effectively incorporating RNF within the normalization process rather than simply treating it as an additional input to the MPNN. Importantly, \ourmethod\ gives rise to a method that is efficient, provably expressive, and performs well in practice, something that, to our knowledge, has never been accomplished by any previous architecture. \rebuttal{This provides an additional perspective to our contribution: beyond designing an effective and efficient normalization layer, which is the main scope of our work, we offer a practical approach to realizing the theoretical benefits of RNF.} Finally, we conclude this section by remarking that $\text{GNN}^{(\ell)}_{\textsc{norm}}$ in \Cref{eq:hat-h} can be modified to be any other expressive architecture, and our design choice was mainly guided by the computational practicality of RNF, that allows \ourmethod\ to offer increased expressive power while retaining the linear complexity of MPNNs, as discussed in Appendix \ref{app:complexity}. We refer the reader to~\citet{morris2021weisfeiler} for a survey on expressive methods. 

\vspace{-5pt}
\section{Experimental Results}
\label{sec:experiments}
In this section, we evaluate the performance of \ourmethod. In particular, we seek to address the following questions:
\begin{enumerate*}[label=(\arabic*)]
    \item \emph{How does \ourmethod\ compare to other normalization methods?}
    \item \emph{Does \ourmethod\ achieve better performance than its natural baselines that also leverage RNF, that is, how important is the graph-adaptivity within \ourmethod?}
    \item \emph{How does \ourmethod\ compare to its variant  \ourmethod-\textsc{no-rnf}, which does not leverage RNF within the normalization, that is, how important are the RNF within \ourmethod?}
\end{enumerate*}
In what follows, we present our main results and refer to \Cref{app:exp,app:complexity,app:additionalResults} for details and additional experiments, including timings and ablation studies.  Our code is available at \url{https://github.com/MosheEliasof/GRANOLA}.

\paragraph{Baselines.} For each task, we consider the following baselines: (1) Standard normalization layers,
(2) Graph-designated normalization layers
and
(3) Our natural baselines, namely:
\begin{enumerate*}[label=(\roman*)]
    \item \ourmethod-\textsc{no-rnf}, which corresponds to the variant of \ourmethod\ that uses only the intermediate features, without RNF, inside the normalization (\Cref{eq:hat-h-features}) to assess the practical importance of RNF for full graph-adaptivity;
    \item RNF as a positional encoding (RNF-PE), where we augment only the initial input features with RNF, \revision{as in} \citet{sato2021random};
    \item The normalization that uses only RNF, without any message passing layer inside the normalization, to compute the normalization affine parameters $\gamma^{(\ell)}_{b,n,c}$ and $\beta^{(\ell)}_{b,n,c}$. This is achieved by considering $\mZ^{(\ell)}_{b}=\mR^{(\ell)}_b$ in \revision{\Cref{eq:hat-h}}. We denote this baseline by RNF-NORM and remark that it corresponds to a version of \ourmethod\ without graph-adaptivity. 
\end{enumerate*}
For a fair comparison, in all the baselines, as well as in our method, we employ GIN~\citep{xu2019how} or GINE~\citep{hu2020strategies} layers, which are maximally-expressive within the MPNN family. Furthermore, we compare \ourmethod\ to GNNs in the same complexity class and refer the reader to \Cref{app:additionalBaselines} for additional comparisons. 

\begin{wraptable}[22]{r}{0.5\textwidth}
\vspace{-10pt}

\centering
\caption{Comparison of \ourmethod\ with various baselines on the \textsc{ZINC-12k} dataset. All methods obey to the 500k parameter budget. The top three methods are marked by \textbf{\textcolor{red}{First}}, \textbf{\textcolor{violet}{Second}}, \textbf{Third}.}
 \scriptsize
\begin{tabular}{l c}
    \toprule
        Method & \textsc{ZINC (MAE $\downarrow$)} \\
        \midrule  
        \textbf{\textsc{Natural Baselines}} \\
        $\,$ \textsc{GIN} + \revision{BatchNorm +} RNF-PE \cite{sato2021random} & 0.1621$\pm$0.014\\
            $\,$ \textsc{GIN} + RNF-NORM & \second{0.1562}$\pm$0.013\\
        \midrule
                        \textbf{\textsc{Standard Normalization Layers}} \\
        $\,$ \textsc{GIN} + BatchNorm~\citep{xu2019how}        & 0.1630$\pm$0.004\\
        $\,$ \textsc{GIN} + InstanceNorm \cite{ulyanov2016instance} & 0.2984$\pm$0.017 \\
        $\,$ \textsc{GIN} + LayerNorm-node \cite{ba2016layer} & 0.1649$\pm$0.009 \\
        $\,$ \textsc{GIN} + LayerNorm-graph \cite{ba2016layer} & 0.1609$\pm$0.014  \\
        $\,$ \textsc{GIN} + Identity & 0.2209$\pm$0.018 \\ 
                        \midrule
\textbf{\textsc{Graph Normalization Layers}} \\
  
        $\,$ \textsc{GIN} + PairNorm \cite{zhao2020pairnorm} & 0.3519$\pm$0.008 \\
        $\,$ \revision{\textsc{GIN} + MeanSubtractionNorm} \citep{yang2020revisiting} & 0.1632$\pm$0.021    \\
        $\,$ \textsc{GIN} + DiffGroupNorm \cite{zhou2020towards} & 0.2705$\pm$0.024 \\
        $\,$ \revision{\textsc{GIN} + NodeNorm} \citep{zhou2021understanding} & 0.2119$\pm$0.017 \\
        $\,$ \textsc{GIN} + GraphNorm \cite{cai2021graphnorm}&   0.3104$\pm$0.012 \\
        $\,$ \textsc{GIN} + GraphSizeNorm \cite{dwivedi2023benchmarking} & 0.1931$\pm$0.016 \\
        $\,$ \revision{\textsc{GIN} + SuperNorm} \citep{chen2023improving}& \third{0.1574}$\pm$0.018\\
        \midrule
        \textsc{GIN} + \ourmethod-\textsc{no-rnf}& 0.1497$\pm$0.008\\
        \textsc{GIN} + \ourmethod & \first{0.1203}$\pm$0.006\\
        \bottomrule
\end{tabular}
\label{tab:zinc}
\end{wraptable}
\textbf{ZINC.} We experiment with the \textsc{ZINC-12k} molecular dataset~\citep{ZINCdataset, gomez2018auto,dwivedi2023benchmarking}, where the goal is to regress the solubility of molecules. As can be seen from \Cref{tab:zinc}, \ourmethod\ achieves significantly lower (better) mean-absolute-error compared to existing standard and graph-designated normalization layers, while outperforming all its natural baselines. Moreover, \ourmethod\ emerges as the best-performing model with the same time complexity of a standard MPNN.

\begin{table*}[t]
\centering
\scriptsize
\caption{A comparison to natural baselines, standard and graph normalization layers, demonstrating the practical advantages of \ourmethod. The top three methods are marked by \textbf{\textcolor{red}{First}}, \textbf{\textcolor{violet}{Second}}, \textbf{Third}. 
}
\begin{tabular}{l c c c c}
    \toprule
        \multirow{2}*{Method $\downarrow$ / Dataset $\rightarrow$} & \textsc{molesol} & \textsc{moltox21} & \textsc{molbace} &\textsc{molhiv} \\
        & \textsc{RMSE $\downarrow$} & \textsc{ROC-AUC $\uparrow$} &\textsc{ROC-AUC $\uparrow$} &\textsc{ROC-AUC $\uparrow$} \\
        \midrule  
        \textbf{\textsc{Natural Baselines}} \\
        $\,$ \textsc{GIN} + \revision{BatchNorm +} RNF-PE \cite{sato2021random} & 1.052$\pm$0.041 & \second{75.14}$\pm$0.67 &  74.28$\pm$3.80 & 75.98$\pm$1.63\\
               $\,$  \textsc{GIN} + RNF-NORM & \third{1.039}$\pm$0.040 & \third{75.12}$\pm$0.92 & \second{77.96}$\pm$4.36 & \third{{77.61}}$\pm$1.64\\ 
        \midrule
                \textbf{\textsc{Standard Normalization Layers}} \\

        $\,$ \textsc{GIN} + BatchNorm~\citep{xu2019how}        & 1.173$\pm$0.057 & 74.91$\pm$0.51& 72.97$\pm$4.00 & 75.58$\pm$1.40 \\
        $\,$ \textsc{GIN} + InstanceNorm \cite{ulyanov2016instance} & 1.099$\pm$0.038 & 73.82$\pm$0.96 & 74.86$\pm$3.37 & 76.88$\pm$1.93 \\
        $\,$ \textsc{GIN} + LayerNorm-node \cite{ba2016layer} & 1.058$\pm$0.024 & 74.81$\pm$0.44 & \third{77.12}$\pm$2.70 & 75.24$\pm$1.71 \\
        $\,$ \textsc{GIN} + LayerNorm-graph \cite{ba2016layer} & 1.061$\pm$0.043 & {75.03}$\pm$1.24 &76.49$\pm$4.07 & 76.13$\pm$1.84  \\
        $\,$ \textsc{GIN} + Identity & 1.164$\pm$0.059 & 73.34$\pm$1.08 & 72.55$\pm$2.98 & 71.89$\pm$1.32   \\
        \midrule
                \textbf{\textsc{Graph Normalization Layers}} \\
        $\,$ \textsc{GIN} + PairNorm \cite{zhao2020pairnorm} & 1.084$\pm$0.031 & 73.27$\pm$1.05 & 75.11$\pm$4.24 & 76.18$\pm$1.47 \\
        $\,$ \revision{\textsc{GIN} + MeanSubtractionNorm} \citep{yang2020revisiting} & 1.062$\pm$0.045 & 74.98$\pm$0.62 & 76.36$\pm$4.47 & 76.37$\pm$1.40  \\
        $\,$ \textsc{GIN} + DiffGroupNorm \cite{zhou2020towards} & 1.087$\pm$0.063 & 74.48$\pm$0.76 &  75.96$\pm$3.79 & 74.37$\pm$1.68 \\
        $\,$ \revision{\textsc{GIN} + NodeNorm} \citep{zhou2021understanding} & 1.068$\pm$0.029 & 73.27$\pm$0.83 & 75.67$\pm$4.03 & 75.50$\pm$1.32  \\
        $\,$ \textsc{GIN} + GraphNorm \cite{cai2021graphnorm} & {1.044}$\pm$0.027 & 73.54$\pm$0.80 & 73.23$\pm$3.88 & \second{78.08}$\pm$1.16 \\
        $\,$ \textsc{GIN} + GraphSizeNorm \cite{dwivedi2023benchmarking} & 1.121$\pm$0.051 &  74.07$\pm$0.30 & 76.18$\pm$3.52& 75.44$\pm$1.51\\
        $\,$ \revision{\textsc{GIN} + SuperNorm} \citep{chen2023improving}& \second{1.037}$\pm$0.044 & 75.08$\pm$0.98 & 75.12$\pm$3.38 & 76.55$\pm$1.76  \\
        \midrule
         \textsc{GIN} + \ourmethod-\textsc{no-rnf} & 1.088$\pm$0.032 & 75.87$\pm$0.72  &  76.23$\pm$2.06 & 77.09$\pm$1.49\\

        \textsc{GIN} + \ourmethod & \first{0.960}$\pm$0.020 & \first{77.19}$\pm$0.85 & \first{79.92}$\pm$2.56 & \first{78.98}$\pm$1.17 \\
        \bottomrule
\end{tabular}
\label{tab:ogb}
\end{table*}

\textbf{OGB.} We test our \ourmethod\ on the OGB collection~\citep{hu2020open}, and report its performance in  \Cref{tab:ogb}. Notably, \ourmethod\ consistently improves over existing standard normalization methods while retaining similar asymptotic complexity. For instance, on \textsc{molbace}, we achieve an accuracy of 79.92\% compared to 72.97\% when using BatchNorm, an improvement of 6.95\%. Compared with graph-designated normalization methods, \ourmethod\ also offers a consistent improvement. As an example, on \textsc{molesol} we obtain a RMSE of (lower is better) 0.960, compared to the second best graph normalization layer, GraphNorm, that achieves 1.044. It is also noteworthy to mention the performance gap between \ourmethod\ and its natural baselines, \revision{such as RNF-PE,} which emphasizes the practical benefit of graph-adaptivity within \ourmethod.

\textbf{TUDatasets.} We experimented with popular datasets from the TUD \cite{TUDMORRIS} repository. Our results are reported in \cref{tab:tud_datasets_appendix} in \Cref{app:additionalBaselines}, suggesting that \ourmethod\  consistently achieves higher accuracy compared to its natural baselines, as well as standard and graph-designated normalization techniques. For example, on the NCI109 dataset, \ourmethod\ achieves an accuracy of $83.7\%$, compared to the second-best normalization technique, GraphNorm, with an accuracy $82.4\%$. 

\begin{figure}[t]
\centering
\begin{subfigure}{.32\textwidth}
    \centering
    \includegraphics[width=\linewidth]{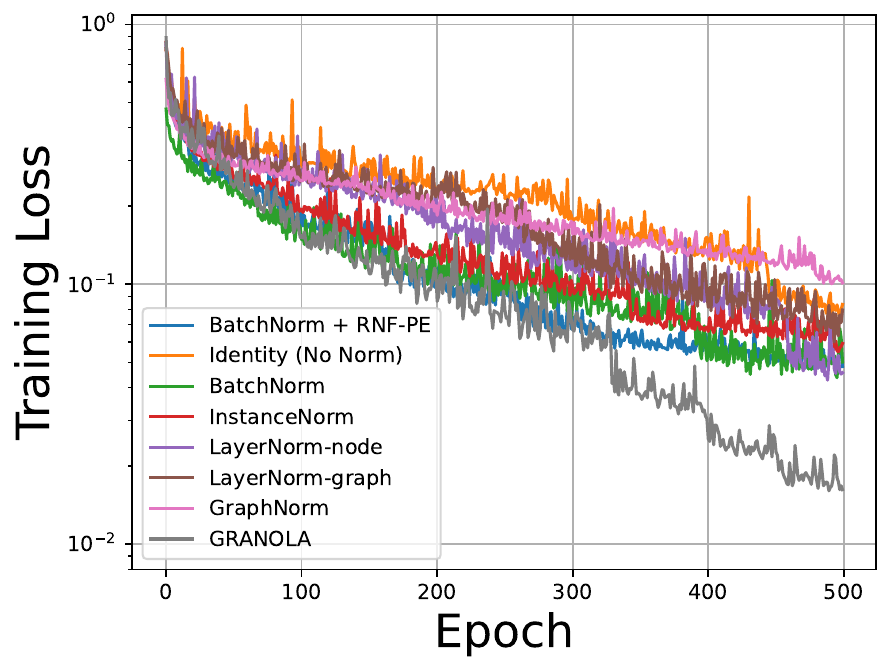}
    \caption{\textsc{ZINC-12k}}
    \label{fig:conv-zinc}
\end{subfigure}
\hfill
\begin{subfigure}{.32\textwidth}
    \centering
    \includegraphics[width=\linewidth]{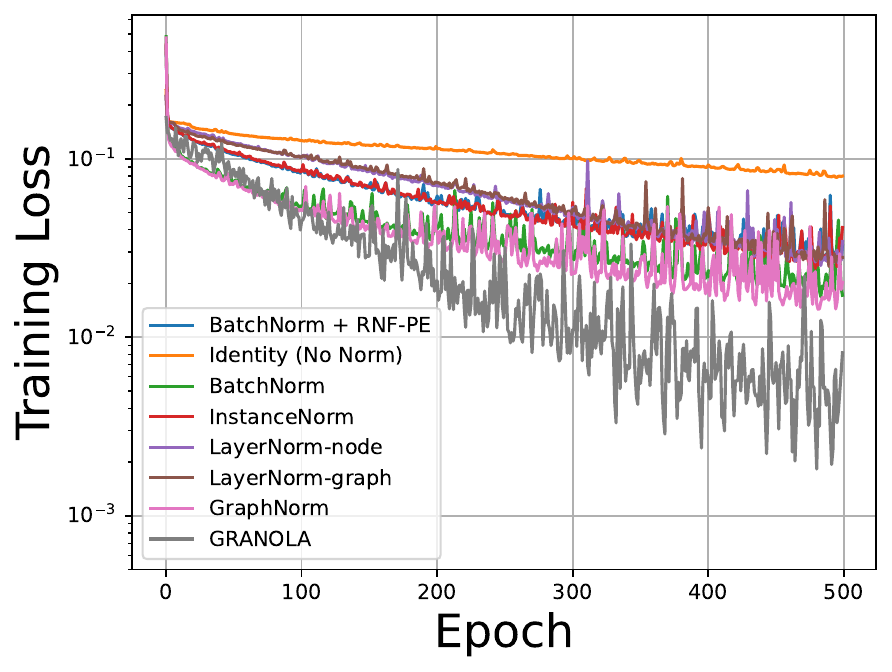}
    \caption{\textsc{molhiv}}
    \label{fig:conv-hiv}
\end{subfigure}
\hfill
\begin{subfigure}{.32\textwidth}
    \centering
    \includegraphics[width=\linewidth]{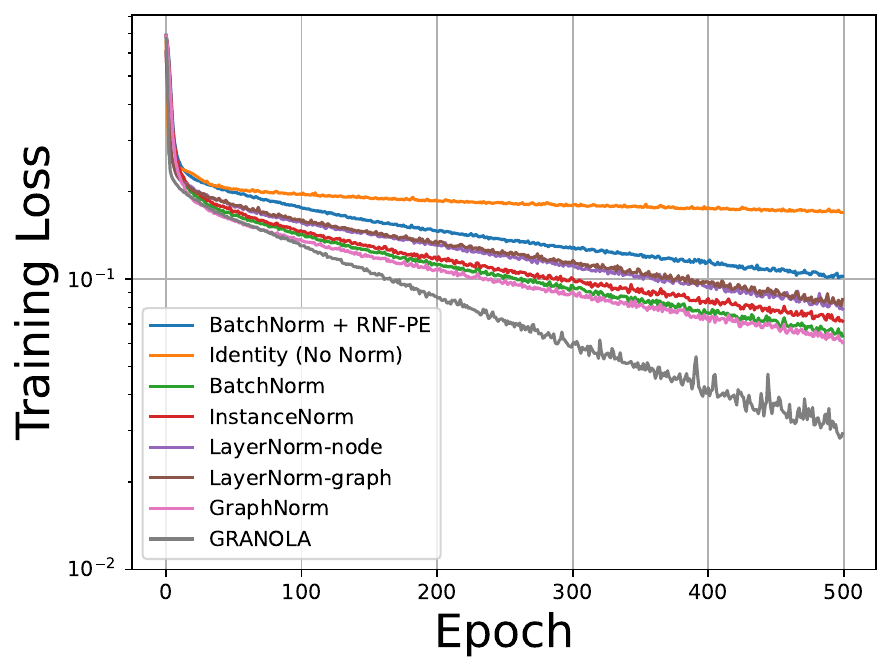}
    \caption{\textsc{moltox21}}
    \label{fig:conv-tox}
\end{subfigure}
\caption{Training convergence of \ourmethod\ compared with existing normalization techniques show that \ourmethod\ achieves faster convergence and overall lower (better) MAE.}
\label{fig:convergence}
\end{figure}

\textbf{Training convergence of \ourmethod.}
\label{app:convergence}
\revision{In addition to improved downstream task performance being one of the main benefits of a normalization layer, accelerated training convergence is also an important desired property \cite{ioffe2015batch, cai2021graphnorm}. \Cref{fig:convergence} shows that \ourmethod\ offers faster convergence and lower MAE compared to other methods.}

\textbf{Combining \ourmethod\ with expressive methods.}
Since our goal is to understand the impact of normalization layers, our experiments focus on studying the benefit of augmenting standard and well-understood MPNNs with \ourmethod. However, it is also interesting to understand if expressive, domain-expert approaches such as GSN \cite{bouritsas2020improving} can also benefit from \ourmethod. To this end, we augment GSN with \ourmethod, and report the results in \Cref{table:granolaGSN}. These results further underscore the versatility of \ourmethod, which can be coupled with any GNN layer and improve its performance.

\begin{table*}[t]
\scriptsize
    \centering
    \caption{Empirical results of GSN with \ourmethod\ show that \ourmethod\ can also improve the performance of expressive methods. }
    \begin{tabular}{lccccc}
    \toprule
    \multirow{2}{*}{Method} &  \textsc{ZINC} & \textsc{molesol} & \textsc{moltox21} & \textsc{molbace} &\textsc{molhiv}\\
     & \textsc{MAE $\downarrow$}         & \textsc{RMSE $\downarrow$} & \textsc{ROC-AUC $\uparrow$} &\textsc{ROC-AUC $\uparrow$} &\textsc{ROC-AUC $\uparrow$} \\
             \midrule
    GSN  \cite{bouritsas2020improving} & 0.1010$\pm$0.010 & 1.003$\pm$0.037 & 76.08$\pm$0.79 & 77.40$\pm$2.92 & 80.39$\pm$0.90  \\
GSN + \ourmethod  & 0.0766$\pm$0.008 & 0.941$\pm$0.024  & 77.84$\pm$0.63 & 80.41$\pm$2.07 & 81.12$\pm$0.79 \\
       \bottomrule
    \end{tabular}
    \label{table:granolaGSN}
    \vspace{-10pt}
\end{table*}

\textbf{Discussion.} Our experimental results cover standard normalization layers, as well as graph normalization methods, evaluated on 11 datasets from diverse sources, and applied to various tasks. Throughout all experiments, a common theme is the performance \emph{consistency} of \ourmethod. Specifically, \ourmethod\ always improves over its natural baselines and other normalization techniques across all datasets. In contrast, other existing methods exhibit less clear trends in their performance. While some methods achieve competitive results on certain datasets, they may struggle on others. Notable examples are GraphNorm and PairNorm, which, despite offering improved performance compared to BatchNorm on most of the OGB datasets, show worse results on \textsc{ZINC-12k}. Furthermore, standard normalization layers also lack consistency. As an example, consider InstanceNorm, which is beneficial in some OGB datasets, yet, it does not offer favorable results on \textsc{ZINC-12k}.

\vspace{-5pt}
\section{Conclusions}\label{sec:conclusions}
In this paper, we discuss the existing landscape of feature normalization techniques in Graph Neural Networks (GNNs). Despite recent advances in designing graph normalization techniques, the optimal choice remains unclear, with methods not offering consistent performance improvements across tasks. To address this challenge, we identify \revision{a desirable} propert\revision{y of} graph normalization layers, \revision{namely} adaptivity \revision{to the input graph, and argue that it can be obtained only with expressive architectures}. To incorporate \revision{this} propert\revision{y}, we present \ourmethod, a normalization layer that adjusts node features based on the specific input graph, leveraging Random Node Features (RNF). Our theoretical analyses support the design choices of \ourmethod, demonstrating its increased expressiveness. Empirical evaluations across diverse benchmarks consistently highlight the superior performance of \ourmethod\ over existing normalization methods, as well as other baselines with the same time complexity.

\textbf{Limitations and impact.} Although \ourmethod\ exhibits promising results, there are areas for potential improvement in future research. For instance, investigating alternative designs for $\text{GNN}^{(\ell)}_{\textsc{norm}}$ could further enhance the performance, as, in certain datasets, there is still a gap between \ourmethod\ and expressive GNNs.  
\rebuttal{Additionally, exploring ways to reduce memory and time complexity (which are still linear in the number of nodes) represents an important avenue for future research.}
Furthermore, by improving the performance of GNN through \ourmethod\ we envision a positive impact in domains such as drug discovery.


\section*{Acknowledgements}
\rebuttal{ME is funded by the Blavatnik-Cambridge fellowship, the Cambridge Accelerate Programme for Scientific Discovery, and the Maths4DL EPSRC Programme. HM is the Robert J. Shillman Fellow and is supported by the Israel Science
Foundation through a personal grant (ISF 264/23) and an equipment grant (ISF 532/23).}

\bibliography{example_paper}

\begin{thebibliography}{76}
\providecommand{\natexlab}[1]{#1}
\providecommand{\url}[1]{\texttt{#1}}
\expandafter\ifx\csname urlstyle\endcsname\relax
  \providecommand{\doi}[1]{doi: #1}\else
  \providecommand{\doi}{doi: \begingroup \urlstyle{rm}\Url}\fi

\bibitem[Abboud et~al.(2020)Abboud, Ceylan, Grohe, and
  Lukasiewicz]{abboud2020surprising}
Ralph Abboud, Ismail~Ilkan Ceylan, Martin Grohe, and Thomas Lukasiewicz.
\newblock The surprising power of graph neural networks with random node
  initialization.
\newblock \emph{arXiv preprint arXiv:2010.01179}, 2020.

\bibitem[Awais et~al.(2020)Awais, Iqbal, and Bae]{awais2020revisiting}
Muhammad Awais, Md~Tauhid~Bin Iqbal, and Sung-Ho Bae.
\newblock Revisiting internal covariate shift for batch normalization.
\newblock \emph{IEEE Transactions on Neural Networks and Learning Systems},
  32\penalty0 (11):\penalty0 5082--5092, 2020.

\bibitem[Ba et~al.(2016)Ba, Kiros, and Hinton]{ba2016layer}
Jimmy~Lei Ba, Jamie~Ryan Kiros, and Geoffrey~E Hinton.
\newblock Layer normalization.
\newblock \emph{arXiv preprint arXiv:1607.06450}, 2016.

\bibitem[Bevilacqua et~al.(2022)Bevilacqua, Frasca, Lim, Srinivasan, Cai,
  Balamurugan, Bronstein, and Maron]{bevilacqua2022equivariant}
Beatrice Bevilacqua, Fabrizio Frasca, Derek Lim, Balasubramaniam Srinivasan,
  Chen Cai, Gopinath Balamurugan, Michael~M Bronstein, and Haggai Maron.
\newblock Equivariant subgraph aggregation networks.
\newblock In \emph{International Conference on Learning Representations}, 2022.

\bibitem[Bevilacqua et~al.(2023)Bevilacqua, Eliasof, Meirom, Ribeiro, and
  Maron]{bevilacqua2023efficient}
Beatrice Bevilacqua, Moshe Eliasof, Eli Meirom, Bruno Ribeiro, and Haggai
  Maron.
\newblock Efficient subgraph gnns by learning effective selection policies.
\newblock \emph{arXiv preprint arXiv:2310.20082}, 2023.

\bibitem[Biewald(2020)]{wandb}
Lukas Biewald.
\newblock Experiment tracking with weights and biases, 2020.
\newblock Software available from wandb.com.

\bibitem[Bjorck et~al.(2018)Bjorck, Gomes, Selman, and
  Weinberger]{bjorck2018understanding}
Nils Bjorck, Carla~P Gomes, Bart Selman, and Kilian~Q Weinberger.
\newblock Understanding batch normalization.
\newblock \emph{Advances in neural information processing systems}, 31, 2018.

\bibitem[Bodnar et~al.(2021{\natexlab{a}})Bodnar, Frasca, Otter, Wang, Li{\`o},
  Mont{\'u}far, and Bronstein]{bodnar2021weisfeiler2}
Cristian Bodnar, Fabrizio Frasca, Nina Otter, Yu~Guang Wang, Pietro Li{\`o},
  Guido Mont{\'u}far, and Michael Bronstein.
\newblock Weisfeiler and lehman go cellular: {CW} networks.
\newblock In \emph{Advances in Neural Information Processing Systems},
  volume~34, 2021{\natexlab{a}}.

\bibitem[Bodnar et~al.(2021{\natexlab{b}})Bodnar, Frasca, Wang, Otter,
  Mont{\'u}far, Lio, and Bronstein]{bodnar2021weisfeiler}
Cristian Bodnar, Fabrizio Frasca, Yu~Guang Wang, Nina Otter, Guido
  Mont{\'u}far, Pietro Lio, and Michael Bronstein.
\newblock Weisfeiler and lehman go topological: Message passing simplicial
  networks.
\newblock \emph{ICML}, 2021{\natexlab{b}}.

\bibitem[Bouritsas et~al.(2022)Bouritsas, Frasca, Zafeiriou, and
  Bronstein]{bouritsas2020improving}
Giorgos Bouritsas, Fabrizio Frasca, Stefanos Zafeiriou, and Michael~M
  Bronstein.
\newblock Improving graph neural network expressivity via subgraph isomorphism
  counting.
\newblock \emph{IEEE Transactions on Pattern Analysis and Machine
  Intelligence}, 2022.

\bibitem[Cai et~al.(2021)Cai, Luo, Xu, He, Liu, and Wang]{cai2021graphnorm}
Tianle Cai, Shengjie Luo, Keyulu Xu, Di~He, Tie-yan Liu, and Liwei Wang.
\newblock Graphnorm: A principled approach to accelerating graph neural network
  training.
\newblock In \emph{International Conference on Machine Learning}, pages
  1204--1215. PMLR, 2021.

\bibitem[Chen et~al.(2023)Chen, Liu, Zhu, Qiao, Su, Tian, Zheng, Zhang, Feng,
  Ye, et~al.]{chen2023improving}
Kaixuan Chen, Shunyu Liu, Tongtian Zhu, Ji~Qiao, Yun Su, Yingjie Tian, Tongya
  Zheng, Haofei Zhang, Zunlei Feng, Jingwen Ye, et~al.
\newblock Improving expressivity of gnns with subgraph-specific factor embedded
  normalization.
\newblock In \emph{Proceedings of the 29th ACM SIGKDD Conference on Knowledge
  Discovery and Data Mining}, pages 237--249, 2023.

\bibitem[Chen et~al.(2022)Chen, Tang, Qi, Li, and Xiao]{chen2022learning}
Yihao Chen, Xin Tang, Xianbiao Qi, Chun-Guang Li, and Rong Xiao.
\newblock Learning graph normalization for graph neural networks.
\newblock \emph{Neurocomputing}, 493:\penalty0 613--625, 2022.

\bibitem[Chen et~al.(2020)Chen, Chen, Villar, and Bruna]{chen2020can}
Zhengdao Chen, Lei Chen, Soledad Villar, and Joan Bruna.
\newblock Can graph neural networks count substructures?
\newblock \emph{Advances in neural information processing systems},
  33:\penalty0 10383--10395, 2020.

\bibitem[Corso et~al.(2020)Corso, Cavalleri, Beaini, Li\`{o}, and
  Veli\v{c}kovi\'{c}]{corso2020pna}
Gabriele Corso, Luca Cavalleri, Dominique Beaini, Pietro Li\`{o}, and Petar
  Veli\v{c}kovi\'{c}.
\newblock Principal neighbourhood aggregation for graph nets.
\newblock In \emph{Advances in Neural Information Processing Systems}, 2020.

\bibitem[Cotta et~al.(2021)Cotta, Morris, and Ribeiro]{cotta2021reconstruction}
Leonardo Cotta, Christopher Morris, and Bruno Ribeiro.
\newblock Reconstruction for powerful graph representations.
\newblock In \emph{NeurIPS}, 2021.

\bibitem[Dasoulas et~al.(2021)Dasoulas, Santos, Scaman, and
  Virmaux]{dasoulas2021coloring}
George Dasoulas, Ludovic~Dos Santos, Kevin Scaman, and Aladin Virmaux.
\newblock Coloring graph neural networks for node disambiguation.
\newblock In \emph{Proceedings of the Twenty-Ninth International Conference on
  International Joint Conferences on Artificial Intelligence}, pages
  2126--2132, 2021.

\bibitem[Devlin et~al.(2019)Devlin, Chang, Lee, and
  Toutanova]{Devlin2019BERTPO}
Jacob Devlin, Ming-Wei Chang, Kenton Lee, and Kristina Toutanova.
\newblock Bert: Pre-training of deep bidirectional transformers for language
  understanding.
\newblock In \emph{North American Chapter of the Association for Computational
  Linguistics}, 2019.

\bibitem[Dumoulin et~al.(2016)Dumoulin, Shlens, and
  Kudlur]{dumoulin2016learned}
Vincent Dumoulin, Jonathon Shlens, and Manjunath Kudlur.
\newblock A learned representation for artistic style.
\newblock \emph{arXiv preprint arXiv:1610.07629}, 2016.

\bibitem[Dwivedi et~al.(2022)Dwivedi, Luu, Laurent, Bengio, and
  Bresson]{dwivedi2022graph}
Vijay~Prakash Dwivedi, Anh~Tuan Luu, Thomas Laurent, Yoshua Bengio, and Xavier
  Bresson.
\newblock Graph neural networks with learnable structural and positional
  representations.
\newblock In \emph{International Conference on Learning Representations}, 2022.

\bibitem[Dwivedi et~al.(2023)Dwivedi, Joshi, Luu, Laurent, Bengio, and
  Bresson]{dwivedi2023benchmarking}
Vijay~Prakash Dwivedi, Chaitanya~K Joshi, Anh~Tuan Luu, Thomas Laurent, Yoshua
  Bengio, and Xavier Bresson.
\newblock Benchmarking graph neural networks.
\newblock \emph{Journal of Machine Learning Research}, 24\penalty0
  (43):\penalty0 1--48, 2023.

\bibitem[Eliasof et~al.(2023)Eliasof, Frasca, Bevilacqua, Treister, Chechik,
  and Maron]{eliasof2023graph}
Moshe Eliasof, Fabrizio Frasca, Beatrice Bevilacqua, Eran Treister, Gal
  Chechik, and Haggai Maron.
\newblock Graph positional encoding via random feature propagation.
\newblock In \emph{International Conference on Machine Learning}, pages
  9202--9223, 2023.

\bibitem[Fey and Lenssen(2019{\natexlab{a}})]{fey2019pyg}
Matthias Fey and Jan~E. Lenssen.
\newblock Fast graph representation learning with {PyTorch Geometric}.
\newblock In \emph{ICLR Workshop on Representation Learning on Graphs and
  Manifolds}, 2019{\natexlab{a}}.

\bibitem[Fey and Lenssen(2019{\natexlab{b}})]{pyg2019}
Matthias Fey and Jan~E. Lenssen.
\newblock Fast graph representation learning with {PyTorch Geometric}.
\newblock In \emph{ICLR Workshop on Representation Learning on Graphs and
  Manifolds}, 2019{\natexlab{b}}.

\bibitem[Frasca et~al.(2022)Frasca, Bevilacqua, Bronstein, and
  Maron]{frasca2022understanding}
Fabrizio Frasca, Beatrice Bevilacqua, Michael Bronstein, and Haggai Maron.
\newblock Understanding and extending subgraph gnns by rethinking their
  symmetries.
\newblock \emph{Advances in Neural Information Processing Systems},
  35:\penalty0 31376--31390, 2022.

\bibitem[Ghiasi et~al.(2017)Ghiasi, Lee, Kudlur, Dumoulin, and
  Shlens]{ghiasi2017exploring}
Golnaz Ghiasi, Honglak Lee, Manjunath Kudlur, Vincent Dumoulin, and Jonathon
  Shlens.
\newblock Exploring the structure of a real-time, arbitrary neural artistic
  stylization network.
\newblock \emph{arXiv preprint arXiv:1705.06830}, 2017.

\bibitem[Goodfellow et~al.(2014)Goodfellow, Pouget-Abadie, Mirza, Xu,
  Warde-Farley, Ozair, Courville, and Bengio]{goodfellow2014generative}
Ian Goodfellow, Jean Pouget-Abadie, Mehdi Mirza, Bing Xu, David Warde-Farley,
  Sherjil Ozair, Aaron Courville, and Yoshua Bengio.
\newblock Generative adversarial nets.
\newblock \emph{Advances in neural information processing systems}, 27, 2014.

\bibitem[Gómez-Bombarelli et~al.(2018)Gómez-Bombarelli, Wei, Duvenaud,
  Hernández-Lobato, Sánchez-Lengeling, Sheberla, Aguilera-Iparraguirre,
  Hirzel, Adams, and Aspuru-Guzik]{gomez2018auto}
Rafael Gómez-Bombarelli, Jennifer~N. Wei, David Duvenaud, José~Miguel
  Hernández-Lobato, Benjamín Sánchez-Lengeling, Dennis Sheberla, Jorge
  Aguilera-Iparraguirre, Timothy~D. Hirzel, Ryan~P. Adams, and Alán
  Aspuru-Guzik.
\newblock Automatic chemical design using a data-driven continuous
  representation of molecules.
\newblock \emph{ACS Central Science}, 4\penalty0 (2):\penalty0 268–276, Jan
  2018.

\bibitem[Ha et~al.(2016)Ha, Dai, and Le]{ha2016hypernetworks}
David Ha, Andrew Dai, and Quoc~V. Le.
\newblock Hypernetworks, 2016.

\bibitem[He et~al.(2016)He, Zhang, Ren, and Sun]{he2016identity}
Kaiming He, Xiangyu Zhang, Shaoqing Ren, and Jian Sun.
\newblock Identity mappings in deep residual networks.
\newblock In \emph{Computer Vision--ECCV 2016: 14th European Conference,
  Amsterdam, The Netherlands, October 11--14, 2016, Proceedings, Part IV 14},
  pages 630--645. Springer, 2016.

\bibitem[Hu et~al.(2020{\natexlab{a}})Hu, Fey, Zitnik, Dong, Ren, Liu, Catasta,
  and Leskovec]{hu2020open}
Weihua Hu, Matthias Fey, Marinka Zitnik, Yuxiao Dong, Hongyu Ren, Bowen Liu,
  Michele Catasta, and Jure Leskovec.
\newblock Open graph benchmark: Datasets for machine learning on graphs.
\newblock In \emph{Advances in Neural Information Processing Systems},
  volume~33, 2020{\natexlab{a}}.

\bibitem[Hu et~al.(2020{\natexlab{b}})Hu, Liu, Gomes, Zitnik, Liang, Pande, and
  Leskovec]{hu2020strategies}
Weihua Hu, Bowen Liu, Joseph Gomes, Marinka Zitnik, Percy Liang, Vijay Pande,
  and Jure Leskovec.
\newblock Strategies for pre-training graph neural networks.
\newblock In \emph{International Conference on Learning Representations},
  2020{\natexlab{b}}.

\bibitem[Huang and Belongie(2017)]{huang2017arbitrary}
Xun Huang and Serge Belongie.
\newblock Arbitrary style transfer in real-time with adaptive instance
  normalization.
\newblock In \emph{Proceedings of the IEEE international conference on computer
  vision}, pages 1501--1510, 2017.

\bibitem[Ioffe and Szegedy(2015)]{ioffe2015batch}
Sergey Ioffe and Christian Szegedy.
\newblock Batch normalization: Accelerating deep network training by reducing
  internal covariate shift.
\newblock In \emph{International conference on machine learning}, pages
  448--456. pmlr, 2015.

\bibitem[Jumper et~al.(2021)Jumper, Evans, Pritzel, Green, Figurnov,
  Ronneberger, Tunyasuvunakool, Bates, {\v{Z}}{\'\i}dek, Potapenko,
  et~al.]{jumper2021highly}
John Jumper, Richard Evans, Alexander Pritzel, Tim Green, Michael Figurnov,
  Olaf Ronneberger, Kathryn Tunyasuvunakool, Russ Bates, Augustin
  {\v{Z}}{\'\i}dek, Anna Potapenko, et~al.
\newblock Highly accurate protein structure prediction with alphafold.
\newblock \emph{Nature}, 596\penalty0 (7873):\penalty0 583--589, 2021.

\bibitem[Karras et~al.(2019)Karras, Laine, and Aila]{karras2019style}
Tero Karras, Samuli Laine, and Timo Aila.
\newblock A style-based generator architecture for generative adversarial
  networks.
\newblock In \emph{Proceedings of the IEEE/CVF conference on computer vision
  and pattern recognition}, pages 4401--4410, 2019.

\bibitem[Kipf and Welling(2017)]{kipf2016semi}
Thomas~N Kipf and Max Welling.
\newblock Semi-supervised classification with graph convolutional networks.
\newblock In \emph{ICLR}, 2017.

\bibitem[Larsen and Marx(2005)]{larsen2005introduction}
Richard~J Larsen and Morris~L Marx.
\newblock \emph{An introduction to mathematical statistics}.
\newblock Prentice Hall Hoboken, NJ, 2005.

\bibitem[Li et~al.(2020)Li, Xiong, Thabet, and Ghanem]{li2020deepergcn}
Guohao Li, Chenxin Xiong, Ali Thabet, and Bernard Ghanem.
\newblock Deepergcn: All you need to train deeper gcns, 2020.

\bibitem[Maron et~al.(2019)Maron, Ben-Hamu, Serviansky, and
  Lipman]{maron2019provably}
Haggai Maron, Heli Ben-Hamu, Hadar Serviansky, and Yaron Lipman.
\newblock Provably powerful graph networks.
\newblock In \emph{NeurIPS}, 2019.

\bibitem[Morris et~al.(2019)Morris, Ritzert, Fey, Hamilton, Lenssen, Rattan,
  and Grohe]{morris2019weisfeiler}
Christopher Morris, Martin Ritzert, Matthias Fey, William~L Hamilton, Jan~Eric
  Lenssen, Gaurav Rattan, and Martin Grohe.
\newblock Weisfeiler and leman go neural: Higher-order graph neural networks.
\newblock In \emph{Proceedings of the AAAI conference on artificial
  intelligence}, volume~33, 2019.

\bibitem[Morris et~al.(2020)Morris, Kriege, Bause, Kersting, Mutzel, and
  Neumann]{TUDMORRIS}
Christopher Morris, Nils~M. Kriege, Franka Bause, Kristian Kersting, Petra
  Mutzel, and Marion Neumann.
\newblock Tudataset: A collection of benchmark datasets for learning with
  graphs.
\newblock In \emph{ICML 2020 Workshop on Graph Representation Learning and
  Beyond (GRL+ 2020)}, 2020.

\bibitem[Morris et~al.(2021)Morris, Lipman, Maron, Rieck, Kriege, Grohe, Fey,
  and Borgwardt]{morris2021weisfeiler}
Christopher Morris, Yaron Lipman, Haggai Maron, Bastian Rieck, Nils~M Kriege,
  Martin Grohe, Matthias Fey, and Karsten Borgwardt.
\newblock Weisfeiler and leman go machine learning: The story so far.
\newblock \emph{arXiv preprint arXiv:2112.09992}, 2021.

\bibitem[Murphy et~al.(2019)Murphy, Srinivasan, Rao, and
  Ribeiro]{murphy2019relational}
Ryan Murphy, Balasubramaniam Srinivasan, Vinayak Rao, and Bruno Ribeiro.
\newblock Relational pooling for graph representations.
\newblock In \emph{International Conference on Machine Learning}, pages
  4663--4673. PMLR, 2019.

\bibitem[Newman(2010)]{newman2010networks}
Mark Newman.
\newblock \emph{Networks: An Introduction}.
\newblock Oxford university press, 2010.

\bibitem[Nt and Maehara(2019)]{nt2019revisiting}
Hoang Nt and Takanori Maehara.
\newblock Revisiting graph neural networks: All we have is low-pass filters.
\newblock \emph{arXiv preprint arXiv:1905.09550}, 2019.

\bibitem[Ogasawara et~al.(2010)Ogasawara, Martinez, De~Oliveira, Zimbr{\~a}o,
  Pappa, and Mattoso]{ogasawara2010adaptive}
Eduardo Ogasawara, Leonardo~C Martinez, Daniel De~Oliveira, Geraldo
  Zimbr{\~a}o, Gisele~L Pappa, and Marta Mattoso.
\newblock Adaptive normalization: A novel data normalization approach for
  non-stationary time series.
\newblock In \emph{The 2010 International Joint Conference on Neural Networks
  (IJCNN)}, pages 1--8. IEEE, 2010.

\bibitem[Park et~al.(2019)Park, Liu, Wang, and Zhu]{park2019semantic}
Taesung Park, Ming-Yu Liu, Ting-Chun Wang, and Jun-Yan Zhu.
\newblock Semantic image synthesis with spatially-adaptive normalization.
\newblock In \emph{Proceedings of the IEEE/CVF conference on computer vision
  and pattern recognition}, pages 2337--2346, 2019.

\bibitem[Passalis et~al.(2019)Passalis, Tefas, Kanniainen, Gabbouj, and
  Iosifidis]{passalis2019deep}
Nikolaos Passalis, Anastasios Tefas, Juho Kanniainen, Moncef Gabbouj, and
  Alexandros Iosifidis.
\newblock Deep adaptive input normalization for time series forecasting.
\newblock \emph{IEEE transactions on neural networks and learning systems},
  31\penalty0 (9):\penalty0 3760--3765, 2019.

\bibitem[Paszke et~al.(2019)Paszke, Gross, Massa, Lerer, Bradbury, Chanan,
  Killeen, Lin, Gimelshein, Antiga, Desmaison, Kopf, Yang, DeVito, Raison,
  Tejani, Chilamkurthy, Steiner, Fang, Bai, and Chintala]{pytorch}
Adam Paszke, Sam Gross, Francisco Massa, Adam Lerer, James Bradbury, Gregory
  Chanan, Trevor Killeen, Zeming Lin, Natalia Gimelshein, Luca Antiga, Alban
  Desmaison, Andreas Kopf, Edward Yang, Zachary DeVito, Martin Raison, Alykhan
  Tejani, Sasank Chilamkurthy, Benoit Steiner, Lu~Fang, Junjie Bai, and Soumith
  Chintala.
\newblock Pytorch: An imperative style, high-performance deep learning library.
\newblock In H.~Wallach, H.~Larochelle, A.~Beygelzimer, F.~d\textquotesingle
  Alch\'{e}-Buc, E.~Fox, and R.~Garnett, editors, \emph{Advances in Neural
  Information Processing Systems 32}, pages 8024--8035. Curran Associates,
  Inc., 2019.

\bibitem[Puny et~al.(2020)Puny, Ben-Hamu, and Lipman]{puny2020global}
Omri Puny, Heli Ben-Hamu, and Yaron Lipman.
\newblock Global attention improves graph networks generalization.
\newblock \emph{arXiv preprint arXiv:2006.07846}, 2020.

\bibitem[Puny et~al.(2023)Puny, Lim, Kiani, Maron, and
  Lipman]{puny2023equivariant}
Omri Puny, Derek Lim, Bobak Kiani, Haggai Maron, and Yaron Lipman.
\newblock Equivariant polynomials for graph neural networks.
\newblock In \emph{International Conference on Machine Learning}, pages
  28191--28222. PMLR, 2023.

\bibitem[Ramp{\'a}{\v{s}}ek et~al.(2022)Ramp{\'a}{\v{s}}ek, Galkin, Dwivedi,
  Luu, Wolf, and Beaini]{rampavsek2022recipe}
Ladislav Ramp{\'a}{\v{s}}ek, Michael Galkin, Vijay~Prakash Dwivedi, Anh~Tuan
  Luu, Guy Wolf, and Dominique Beaini.
\newblock Recipe for a general, powerful, scalable graph transformer.
\newblock \emph{Advances in Neural Information Processing Systems},
  35:\penalty0 14501--14515, 2022.

\bibitem[Rong et~al.(2019)Rong, Huang, Xu, and Huang]{rong2019dropedge}
Yu~Rong, Wenbing Huang, Tingyang Xu, and Junzhou Huang.
\newblock Dropedge: Towards deep graph convolutional networks on node
  classification.
\newblock In \emph{International Conference on Learning Representations}, 2019.

\bibitem[Santurkar et~al.(2018)Santurkar, Tsipras, Ilyas, and
  Madry]{santurkar2018does}
Shibani Santurkar, Dimitris Tsipras, Andrew Ilyas, and Aleksander Madry.
\newblock How does batch normalization help optimization?
\newblock \emph{Advances in neural information processing systems}, 31, 2018.

\bibitem[Sato et~al.(2021)Sato, Yamada, and Kashima]{sato2021random}
Ryoma Sato, Makoto Yamada, and Hisashi Kashima.
\newblock Random features strengthen graph neural networks.
\newblock In \emph{Proceedings of the 2021 SIAM international conference on
  data mining (SDM)}, pages 333--341. SIAM, 2021.

\bibitem[Sterling and Irwin(2015)]{ZINCdataset}
Teague Sterling and John~J. Irwin.
\newblock {ZINC 15} -- ligand discovery for everyone.
\newblock \emph{Journal of Chemical Information and Modeling}, 55\penalty0
  (11):\penalty0 2324--2337, 11 2015.
\newblock \doi{10.1021/acs.jcim.5b00559}.

\bibitem[Ulyanov et~al.(2016)Ulyanov, Vedaldi, and
  Lempitsky]{ulyanov2016instance}
Dmitry Ulyanov, Andrea Vedaldi, and Victor Lempitsky.
\newblock Instance normalization: The missing ingredient for fast stylization.
\newblock \emph{arXiv preprint arXiv:1607.08022}, 2016.

\bibitem[Vaswani et~al.(2017)Vaswani, Shazeer, Parmar, Uszkoreit, Jones, Gomez,
  Kaiser, and Polosukhin]{vaswani2017attention}
Ashish Vaswani, Noam Shazeer, Niki Parmar, Jakob Uszkoreit, Llion Jones,
  Aidan~N Gomez, {\L}ukasz Kaiser, and Illia Polosukhin.
\newblock Attention is all you need.
\newblock \emph{Advances in neural information processing systems}, 30, 2017.

\bibitem[Veličković et~al.(2018)Veličković, Cucurull, Casanova, Romero,
  Liò, and Bengio]{veličković2018graph}
Petar Veličković, Guillem Cucurull, Arantxa Casanova, Adriana Romero, Pietro
  Liò, and Yoshua Bengio.
\newblock Graph attention networks.
\newblock In \emph{International Conference on Learning Representations}, 2018.

\bibitem[Wang et~al.(2019)Wang, Sun, Liu, Sarma, Bronstein, and
  Solomon]{wang2019dynamic}
Yue Wang, Yongbin Sun, Ziwei Liu, Sanjay~E Sarma, Michael~M Bronstein, and
  Justin~M Solomon.
\newblock Dynamic graph cnn for learning on point clouds.
\newblock \emph{ACM Transactions on Graphics (tog)}, 38\penalty0 (5):\penalty0
  1--12, 2019.

\bibitem[Wu and He(2018)]{wu2018group}
Yuxin Wu and Kaiming He.
\newblock Group normalization.
\newblock In \emph{Proceedings of the European conference on computer vision
  (ECCV)}, pages 3--19, 2018.

\bibitem[Xu et~al.(2019)Xu, Hu, Leskovec, and Jegelka]{xu2019how}
Keyulu Xu, Weihua Hu, Jure Leskovec, and Stefanie Jegelka.
\newblock How powerful are graph neural networks?
\newblock In \emph{International Conference on Learning Representations}, 2019.

\bibitem[Yang et~al.(2020)Yang, Wang, Yao, Liu, and
  Abdelzaher]{yang2020revisiting}
Chaoqi Yang, Ruijie Wang, Shuochao Yao, Shengzhong Liu, and Tarek Abdelzaher.
\newblock Revisiting over-smoothing in deep gcns.
\newblock \emph{arXiv preprint arXiv:2003.13663}, 2020.

\bibitem[Yehudai et~al.(2021)Yehudai, Fetaya, Meirom, Chechik, and
  Maron]{yehudai2021local}
Gilad Yehudai, Ethan Fetaya, Eli Meirom, Gal Chechik, and Haggai Maron.
\newblock From local structures to size generalization in graph neural
  networks.
\newblock In \emph{International Conference on Machine Learning}, pages
  11975--11986. PMLR, 2021.

\bibitem[Ying et~al.(2021)Ying, Cai, Luo, Zheng, Ke, He, Shen, and
  Liu]{ying2021transformers}
Chengxuan Ying, Tianle Cai, Shengjie Luo, Shuxin Zheng, Guolin Ke, Di~He,
  Yanming Shen, and Tie-Yan Liu.
\newblock Do transformers really perform badly for graph representation?
\newblock \emph{Advances in neural information processing systems},
  34:\penalty0 28877--28888, 2021.

\bibitem[You et~al.(2021)You, Gomes-Selman, Ying, and
  Leskovec]{you2021identity}
Jiaxuan You, Jonathan~M Gomes-Selman, Rex Ying, and Jure Leskovec.
\newblock Identity-aware graph neural networks.
\newblock In \emph{Proceedings of the AAAI Conference on Artificial
  Intelligence}, volume~35, pages 10737--10745, 2021.

\bibitem[Yun et~al.(2019)Yun, Sra, and Jadbabaie]{yun2019small}
Chulhee Yun, Suvrit Sra, and Ali Jadbabaie.
\newblock Small relu networks are powerful memorizers: a tight analysis of
  memorization capacity.
\newblock \emph{Advances in Neural Information Processing Systems}, 32, 2019.

\bibitem[Zhang et~al.(2023{\natexlab{a}})Zhang, Feng, Du, He, and
  Wang]{zhang2023complete}
Bohang Zhang, Guhao Feng, Yiheng Du, Di~He, and Liwei Wang.
\newblock A complete expressiveness hierarchy for subgraph gnns via subgraph
  weisfeiler-lehman tests.
\newblock \emph{arXiv preprint arXiv:2302.07090}, 2023{\natexlab{a}}.

\bibitem[Zhang et~al.(2023{\natexlab{b}})Zhang, Luo, Wang, and
  He]{zhang2023rethinking}
Bohang Zhang, Shengjie Luo, Liwei Wang, and Di~He.
\newblock Rethinking the expressive power of {GNN}s via graph biconnectivity.
\newblock In \emph{The Eleventh International Conference on Learning
  Representations}, 2023{\natexlab{b}}.

\bibitem[Zhang and Li(2021)]{zhang2021nested}
Muhan Zhang and Pan Li.
\newblock Nested graph neural networks.
\newblock In \emph{Advances in Neural Information Processing Systems},
  volume~34, 2021.

\bibitem[Zhao and Akoglu(2020)]{zhao2020pairnorm}
Lingxiao Zhao and Leman Akoglu.
\newblock Pairnorm: Tackling oversmoothing in gnns.
\newblock In \emph{International Conference on Learning Representations}, 2020.

\bibitem[Zhao et~al.(2022)Zhao, Jin, Akoglu, and Shah]{zhao2022from}
Lingxiao Zhao, Wei Jin, Leman Akoglu, and Neil Shah.
\newblock From stars to subgraphs: Uplifting any {GNN} with local structure
  awareness.
\newblock In \emph{International Conference on Learning Representations}, 2022.

\bibitem[Zhou et~al.(2020)Zhou, Huang, Li, Zha, Chen, and Hu]{zhou2020towards}
Kaixiong Zhou, Xiao Huang, Yuening Li, Daochen Zha, Rui Chen, and Xia Hu.
\newblock Towards deeper graph neural networks with differentiable group
  normalization.
\newblock \emph{Advances in neural information processing systems},
  33:\penalty0 4917--4928, 2020.

\bibitem[Zhou et~al.(2021)Zhou, Dong, Wang, Lee, Hooi, Xu, and
  Feng]{zhou2021understanding}
Kuangqi Zhou, Yanfei Dong, Kaixin Wang, Wee~Sun Lee, Bryan Hooi, Huan Xu, and
  Jiashi Feng.
\newblock Understanding and resolving performance degradation in deep graph
  convolutional networks.
\newblock In \emph{Proceedings of the 30th ACM International Conference on
  Information \& Knowledge Management}, pages 2728--2737, 2021.

\bibitem[Zhu et~al.(2020)Zhu, Abdal, Qin, and Wonka]{zhu2020sean}
Peihao Zhu, Rameen Abdal, Yipeng Qin, and Peter Wonka.
\newblock Sean: Image synthesis with semantic region-adaptive normalization.
\newblock In \emph{Proceedings of the IEEE/CVF Conference on Computer Vision
  and Pattern Recognition}, pages 5104--5113, 2020.

\end{thebibliography}
\bibliographystyle{plainnat}


\newpage
\appendix
\onecolumn


\section{Additional Related Work} \label{app:related}
\textbf{Standard normalization layers.}  BatchNorm~\citep{ioffe2015batch} is arguably one of the most widely used normalization schemes in deep learning. 
Despite its success, there is little consensus on the exact reason behind the improvements it generally yields. While some studies argue that the effectiveness of BatchNorm lies in its ability to control the problem of internal covariate shift~\citep{ioffe2015batch,awais2020revisiting}, other works~\citep{bjorck2018understanding,santurkar2018does} attribute the success to the promotion of faster convergence.
Similarly to the other domains, also in graph-learning, BatchNorm normalizes each channel (feature) independently by computing the mean and standard deviation across all the elements in the batch, i.e., across all the nodes in all the graphs in the batch. Another normalization approach that was adopted in graph-learning is InstanceNorm~\citep{ulyanov2016instance}, which was originally introduced in image style-transfer tasks to remove image-specific contrast and style information. In the context of graphs, InstanceNorm normalizes each channel (feature) independently by computing the mean and standard deviation across all nodes within each graph separately. Additionally, LayerNorm~\citep{ba2016layer} was originally proposed for recurrent models, and it is also widely used in Transformers architectures~\citep{vaswani2017attention}. In the context of graph learning, LayerNorm can take two different versions: Layernorm-node and LayerNorm-graph~\citep{fey2019pyg}. The former normalizes each node independently, by computing the mean and standard deviation across all channels within each node separately, while the latter uses the mean and standard deviation across all the channels of all nodes in the entire graph.
We visualize these variants in~\Cref{fig:intro}. The common theme of these three discussed methods is that they were not originally designed with graph-learning in mind. Next, we discuss graph-designated normalization techniques.

\textbf{Graph-designated normalization layers.} PairNorm \citep{zhao2020pairnorm} was introduced to mitigate the issue of oversmoothing in GNNs, where repeated graph convolutions eventually make node representations indistinguishable from one and the other. The key idea of PairNorm is to ensure that the total pairwise feature distances remain constant across layers, preventing the features of distant nodes from becoming overly similar or indistinguishable. \citet{yang2020revisiting} proposed a different understanding of the oversmoothing issue in GNNs: the authors show that upon initialization GNNs suffer from oversmoothing, but GNNs learn to anti-oversmooth during training. 
Based on this conclusion, the authors design MeanSubtractionNorm, which removes the mean from the inputs, without rescaling the features. When coupled with GCN~\citep{kipf2016semi}, MeanSubtractionNorm leads to a revised Power Iteration that converges to the Fiedler vector, resulting in a coarse graph partition and faster training. To enhance the performance of GNNs on large graphs, \citet{li2020deepergcn} propose MessageNorm, a method that normalizes the aggregated message for each node before combining it with its node features, demonstrating experimental benefits. DiffGroupNorm~\citep{zhou2020towards} was designed to alleviate oversmoothing by softly clustering nodes and normalizing nodes within the same group to increase their smoothness while also separating node distributions among different groups.  \citet{zhou2021understanding} explored the impact of transformation operators, such as normalizations, that transform node features after the propagation step in GNNs, and argue that they tend to amplify
node-wise feature variance. This effect is shown to lead to performance drop in deep GNNs. To mitigate this issue, they introduced NodeNorm, which scales the feature vector of each node by a root of the standard deviation of its features. This approach also shares similarities with  LayerNorm-node.  
While the aforementioned normalization techniques have made significant strides in mitigating the oversmoothing phenomenon in GNNs, other normalization methods have been proposed to address problems beyond this specific concern.
\citet{cai2021graphnorm} showed that InstanceNorm serves as a preconditioner for GNNs, thus accelerating their training convergence. However, it also provably degrades the expressiveness of the GNNs for regular graphs. To address this issue, they propose GraphNorm, which builds on top of InstanceNorm by adding a learnable multiplicative factor to the mean for each channel.
GraphSizeNorm~\citep{dwivedi2023benchmarking} was proposed based on promising empirical results, and aims at normalizing the node features by dividing them by the graph size, before applying a standard BatchNorm layer.
UnityNorm~\citep{chen2022learning} was introduced to learn graph normalization by optimizing a weighted combination of existing techniques, including LayerNorm-node, InstanceNorm, and BatchNorm. 
Finally, SuperNorm~\citep{chen2023improving} first computes subgraph-specific factors, encompassing the number of nodes and the eigenvalues of the neighborhood-induced subgraph for each node. These subgraph-specific factors are then explicitly embedded at the beginning and end of a standard BatchNorm layer. This approach ensures that any arbitrary GNN layer becomes at least as powerful as the 1-WL test, while simultaneously mitigating the oversmoothing issue.
We conclude this paragraph by acknowledging the existence of normalization techniques that focus on normalizing the adjacency matrix before the GNN layers. One common example is the symmetric normalization which is used in GCN \cite{kipf2016semi} and subsequent works. Despite their wide use, especially on node-level tasks, these normalizations may fail to capture structural information and lead to less expressive architectures. 
For instance, a common practice of normalizing the adjacency matrix by dividing its rows by the sum of their entries is equivalent to employing a mean aggregator over the neighbors, which has been shown to fail to distinguish certain non-isomorphic nodes \citep{xu2019how}. For this reason, in this work, we consider only normalization layers applied to node features after every GNN layer.

\textbf{AdaIN and StyleGANs.} Adaptive instance normalization (AdaIN)~\citep{huang2017arbitrary,dumoulin2016learned,ghiasi2017exploring} was originally proposed for real-time arbitrary style transfer. Given a content input and a style input, AdaIN adjusts the mean and variance of the content input to match those of the style input.  This allows for the transfer of stylistic features from the style input to the content input in a flexible and efficient manner.
Motivated by style-transfer literature, \citet{karras2019style} introduced StyleGANs as a variant of GANs~\citep{goodfellow2014generative} which differs in the generator architecture. More precisely, the generator in StyleGANs starts from a learned constant input and adjusts the “style” of the image at each convolution layer based on the latent code through the usage of AdaIN. Similarly to StyleGANs, we adjust node features at every layer through the usage of noise, i.e., random node features, akin to the role played by the latent code in StyleGANs.

\textbf{Random Node Features (RNF) in GNNs.} Random input features were used in the context of GNNs to improve the expressive power of Message Passing Neural Networks (MPNNs)~\citep{murphy2019relational,puny2020global,abboud2020surprising,sato2021random,dasoulas2021coloring}. Importantly, MPNNs augmented with random node features have been shown to be universal (with high probability) in~\citet{puny2020global,abboud2020surprising}, a result we naturally inherit when \ourmethod\ defaults to simply utilizing random node features. Notably, despite the universality results, GNNs augmented with RNF do not consistently improve performance on real-world datasets~\citep{eliasof2023graph}. We show instead that \ourmethod\ consistently leads to improved performances.

\section{Comparison of Different Normalizations} \label{app:norm-eqs}
In this section we present the different normalization techniques that have been specifically proposed for graph data and highlight their connection to standard normalization schemes (i.e., BatchNorm, InstanceNorm, LayerNorm-node, LayerNorm-graph) whenever this is present.

\paragraph{PairNorm.} PairNorm~\citep{zhao2020pairnorm} first centers node features by subtracting the mean computed per channel across all the nodes in the graph, similarly to InstanceNorm. Then, it scales the centered vector by dividing it by the square root of the mean (computed across nodes) of the norm of each node feature vector, where the norm is computed over the channel dimension. That is, the center operation has equation:
\begin{align*}
    \tilde{h}^{(\ell), \text{center}}_{b, n, c} = \tilde{h}^{(\ell)}_{b, n, c} - \mu_{b, n, c}, 
\end{align*}
where $\mu_{b, n, c}$ is as in InstanceNorm (\Cref{eq:in}), while the scale operation can be written as
\begin{align}\label{eq:pairnorm}
    \textsc{Norm}(\tilde{h}^{(\ell)}_{b, n, c}; \tilde{\mH}^{(\ell)}, \ell) =  s \cdot \frac{\tilde{h}^{(\ell), \text{center}}_{b, n, c} }{\sqrt{ \frac{1}{N} \sum_{n=1}^{N} \lVert \tilde{h}^{(\ell), \text{center}}_{b, n} \rVert_2^2 }} ,
\end{align}
with $s \in \mathbb{R}$ a hyperparameter and the norm is
\begin{align*}
    \lVert \tilde{h}^{(\ell), \text{center}}_{b, n} \rVert_2^2 = \sum_{c=1}^{C} \vert \tilde{h}^{(\ell), \text{center}}_{b, n, c}\vert^2.
\end{align*}

\paragraph{MeanSubtractionNorm.} MeanSubtractionNorm~\citep{yang2020revisiting} is similar to InstanceNorm, but it simply shifts its input without dividing it by the standard deviation (and without affine parameters). That is
\begin{align}\label{eq:meansubtractionnorm}
    \textsc{Norm}(\tilde{h}^{(\ell)}_{b, n, c}; \tilde{\mH}^{(\ell)}, \ell) = \tilde{h}^{(\ell)}_{b, n, c} - \mu_{b, n, c} 
\end{align}
and $\mu_{b, n, c}$ as \Cref{eq:in}.

\paragraph{MessageNorm.} MessageNorm~\citep{li2020deepergcn} couples normalization with the GNN layer. In particular, it defines the update rule of $\mH^{(\ell)}$ as follows, 
\begin{align}
    h^{(\ell)}_{b, n} =\phi\left( \text{MLP} \left(h^{(\ell - 1)}_{b, n} + s \lVert h^{(\ell - 1)}_{b, n} \rVert_2 \frac{\lVert m^{(\ell - 1)}_{b, n} \rVert_2 }{\lVert m^{(\ell - 1)}_{b, n} \rVert_2 } \right) \right)
\end{align}
 where $s \in \mathbb{R}$ a hyperparameter and $m^{(\ell - 1)}_{b, n}$ is the message of node $n \in [N]$ in graph $b \in [B]$, which can be defined as 
 \begin{align*}
    m^{(\ell)}_{b, v, u} &= \rho^{(\ell)}(h^{(\ell)}_{b, v}, h^{(\ell)}_{b, u}) \\
    m^{(\ell)}_{b, u} &= \zeta^{(\ell)}(\{  m^{(\ell)}_{b, u, v} \vert u \in \mathcal{N}^b_v\})
\end{align*}
 with $\rho^{(\ell)}, \zeta^{(\ell)}$ learnable functions such as MLPs and $\mathcal{N}^b_v$ neighbors of node $v$ in graph $b$.

\paragraph{DiffGroupNorm.} DiffGroupNorm~\citep{zhou2020towards} first softly clusters nodes and the normalizes nodes within the same cluster by means of BatchNorm. That is, for each graph $b \in [B]$ in the batch, it computes a soft cluster assignment as 
\begin{align*}
    \mS^{(\ell)}_b = \text{softmax}(\tilde{\mH}^{(\ell)}_b \mW^{(\ell)})
\end{align*}
where $\mW^{(\ell)} \in \mathbb{R}^{C \times D}$ with $D \in \mathbb{N}$ the number of clusters, and therefore $\mS^{(\ell)}_b \in \mathbb{R}^{N \times D}$. Let us denote by $\mS^{(\ell)} \in \mathbb{R}^{B \times N \times D}$ the matrix containing the cluster assignments for all graphs in the batch, where the $b$-th row of $\mS^{(\ell)}$ is $\mS^{(\ell)}_b$. DiffGroupNorm computes a linear combination of the output of BatchNorm applied to each cluster (where the feature is weighted by the assignment value), and adds the result to the input embedding, that is
\begin{align}\label{eq:diffgroupnorm}
    \textsc{Norm}(\tilde{h}^{(\ell)}_{b, n, c}; \tilde{\mH}^{(\ell)}, \ell) = \tilde{h}^{(\ell)}_{b, n, c} + \lambda \sum_{i=1}^{D}\textsc{BatchNorm}(s^{(\ell)}_{b, n, i}\tilde{h}^{(\ell)}_{b, n, c}; \tilde{\mH}^{(\ell)}, \ell),
\end{align}
where $\textsc{BatchNorm}$ leverages \Cref{eq:bn}, and $\lambda \in \mathbb{R}$ is a hyperparameter. Notably, the term $\tilde{h}^{(\ell)}_{b, n, c} $ in \Cref{eq:diffgroupnorm} is similar to a skip connection, with the difference lying in the fact that skip connections usually add the output of the previous layer $\ell - 1$ after the norm instead of the output of the current layer before the norm.

\paragraph{NodeNorm.} NodeNorm~\citep{zhou2021understanding} is similar to LayerNorm-node, but it simply divides its input by a root of its standard deviation, without shifting it and without affine parameters:
\begin{align}\label{eq:nodenorm}
    \textsc{Norm}(\tilde{h}^{(\ell)}_{b, n, c}; \tilde{\mH}^{(\ell)}, \ell) = \frac{\tilde{h}^{(\ell)}_{b, n, c} }{\sigma_{b, n, c}^{\frac{1}{p}}}
\end{align}
with $\sigma$ as in \Cref{eq:ln-node} and $p \in \mathbb{N}$ a hyperparameter.

\paragraph{GraphNorm.} GraphNorm~\citep{cai2021graphnorm} builds upon InstanceNorm by adding an additional learnable parameter $\alpha^{(\ell)} \in \mathbb{R}^{C}$ which is the same for all nodes and graphs. The equation can be written as
\begin{align}\label{eq:graphnorm}
    \textsc{Norm}(\tilde{h}^{(\ell)}_{b, n, c}; \tilde{\mH}^{(\ell)}, \ell) = \gamma^{(\ell)}_c \frac{\tilde{h}^{(\ell)}_{b, n, c} - \alpha^{(\ell)}_c \mu_{b, n, c} }{\sigma_{b, n, c}} + \beta^{(\ell)}_c,
\end{align}
and $\mu_{b, n, c}, \sigma_{b, n, c}$ as \Cref{eq:in}.

\paragraph{GraphSizeNorm.} GraphSizeNorm~\citep{dwivedi2023benchmarking} normalizes the node features by dividing them by the graph size, before applying a standard BatchNorm:
\begin{align}\label{eq:graphsizenorm}
    \textsc{Norm}(\tilde{h}^{(\ell)}_{b, n, c}; \tilde{\mH}^{(\ell)}, \ell) = \frac{\tilde{h}^{(\ell)}_{b, n, c} }{\sqrt{N}}.
\end{align}

\paragraph{UnityNorm.} UnityNorm~\citep{chen2022learning} consists of a weighted combination of four normalization techniques, where the weights $\lambda_1, \lambda_2, \lambda_3, \lambda_4 \in \mathbb{R}$ are learnable. That is
\begin{equation}
\begin{aligned}\label{eq:unitynorm}
    \textsc{Norm}(\tilde{h}^{(\ell)}_{b, n, c}; \tilde{\mH}^{(\ell)}, \ell) =& \lambda_1 \textsc{LayerNorm-node}(\tilde{h}^{(\ell)}_{b, n, c}; \tilde{\mH}^{(\ell)}, \ell) + \\
    &\lambda_2 \textsc{AdjacencyNorm}(\tilde{h}^{(\ell)}_{b, n, c}; \tilde{\mH}^{(\ell)}, \ell) + \\
    &\lambda_3 \textsc{InstanceNorm}(\tilde{h}^{(\ell)}_{b, n, c}; \tilde{\mH}^{(\ell)}, \ell)+ \\
    &\lambda_4 \textsc{BatchNorm}(\tilde{h}^{(\ell)}_{b, n, c}; \tilde{\mH}^{(\ell)}, \ell) 
\end{aligned}
\end{equation}
where  $\textsc{LayerNorm-node}$, $\textsc{InstanceNorm}$ and $\textsc{BatchNorm}$ leverage respectively \Cref{eq:ln-node,eq:in,eq:bn}. $\textsc{AdjacencyNorm}$ computes the statistics for each node across all features of all its neighbors, that is
\begin{align*}
    \mu_{b,n,c} = \frac{1}{\vert \mathcal{N}^b_n \vert C} \sum_{u \in \mathcal{N}^b_n} \sum_{c=1}^{C} \tilde{h}^{(\ell)}_{b, u, c},  \qquad
    \sigma^2_{b,n,c} = \frac{1}{\vert \mathcal{N}^b_n \vert C} \sum_{u \in \mathcal{N}^b_n} \sum_{c=1}^{C} (\tilde{h}^{(\ell)}_{b, u, c} - \mu_{b,n,c})^2.
\end{align*}

\paragraph{SuperNorm.} SuperNorm~\citep{chen2023improving} embeds subgraph-specific factors into BatchNorm.
First, for each node $n$ in graph $b$ it extracts the subgraph induced by its neighbors, denoted as $S_{b, n}$. Then, it computes the so called subgraph-specific factors for each subgraph, which are the output of an hash function over the number of nodes in the subgraph and its eigenvalues. That is the subgraph-specific factor of $S_{b,n}$ is
\begin{align*}
    \xi(S_{b,n}) = \text{Hash}(\phi(\vert V_{S_{b,n}} \vert), \psi( \text{Eig}_{S_{b,n}}))
\end{align*}
where $\vert V_{S_{b,n}} \vert$ denotes the number of nodes in $S_{b,n}$ and $\text{Eig}_{S_{b,n}}$ its eigenvalues, with $\phi$ and $\psi$ injective functions.

Subgraph-specific factors for all subgraphs in a graph $b$ are collected into a vector $M^{G}_b \in \mathbb{R}^{N \times 1}$
\begin{align*}
    M^{G}_b &= [ \xi(S_{b, 1}); \xi(S_{b, 2}); \ldots; \xi(S_{b, N}) ]
\end{align*}
which is used to obtain two additional vectors $M^{SN}_b, M^{RC}_b \in \mathbb{R}^{N \times 1}$ defined as
\begin{align*}
M^{SN}_b &= \left[ \frac{\xi(S_{b, 1})}{\sum_{n=1}^{N}\xi(S_{b, n})} ;  \frac{\xi(S_{b, 2})}{\sum_{n=1}^{N}\xi(S_{b, n})}; \ldots;  \frac{\xi(S_{b, N})}{\sum_{n=1}^{N}\xi(S_{b, n})} \right] \\
M^{RC}_b &= M^{G}_b \odot M^{SN}_b.
\end{align*}
where $\odot$ denotes the element-wise product.
Then, the normalization computes the segment average of $\tilde{\mH}^{(\ell)}_b$ for each graph $b$, denoted as $\tilde{\mH}^{(\ell), \text{segment}}_b \in \mathbb{R}^{N \times C}$, where a row $n$ is defined as 
\begin{align*}
\tilde{h}^{(\ell), \text{segment}}_{b, n}  &= \sum_{n=1}^{N} \tilde{h}^{(\ell)}_{b, n}.
\end{align*}
Then, the input $\tilde{\mH}^{(\ell)}_b$ is calibrated by injecting the subgraph-specific factors as well as the graph statistics obtained via $\tilde{\mH}^{(\ell), \text{segment}}_b $. That is,
\begin{align*}
\tilde{\mH}^{(\ell), \text{calibration}}_b &=  \tilde{\mH}^{(\ell)}_b + \mW^{(\ell)}  \tilde{\mH}^{(\ell), \text{segment}}_b \odot ( M^{RC}_b \mathbf{1}_C^\top )
\end{align*}
where $\mathbf{1}_C \in \{1\}^{C \times 1}$ and $\mW^{(\ell)} =  \mathbf{1}_N \mathbf{w}^{(\ell)\top}$ with $\mathbf{1}_N \in \{1\}^{N \times 1}$ and $\mathbf{w}^{(\ell)} \in \mathbb{R}^{C \times 1}$ is a learnable parameter.
After injecting subgraph-specific factors, the normalization layer performs BatchNorm on the calibration features, that is 
\begin{align*}
\tilde{h}^{(\ell), \text{CS}}_{b, n, c} &=  \textsc{BatchNorm}(\tilde{h}^{(\ell), \text{calibration}}_{b, n, c}; \tilde{\mH}^{(\ell), \text{calibration}})
\end{align*}
Finally, subgraph-specific factors are embedded after BatchNorm, by computing 
\begin{align}\label{eq:supernorm}
\mH^{(\ell)}_b &= \phi \left( \tilde{\mH}^{(\ell), \text{CS}}_b \odot (\mathbf{1}_N \gamma^{(\ell)\top} + \mathbb{P}^{(\ell)}_b) / 2 + \mathbf{1}_N \beta^{(\ell)\top} \right)
\end{align}
where $\gamma^{(\ell)}, \beta^{(\ell)} \in \mathbb{R}^{C \times 1}$ are learnable affine parameters, and $\mathbb{P}^{(\ell)}_b \in \mathbb{R}^{N \times C}$ is a matrix where each entry is obtained as
\begin{align*}
  \mathbb{P}^{(\ell)}_{b, n, c} = (M^{RE}_{b, n, c})^{\mathbf{w}^{(\ell)}_{RE, c}}
\end{align*}
where $\mathbf{w}^{(\ell)}_{RE} \in \mathbb{R}^{C}$ is a learnable parameter and $M^{RE}_{b} \in \mathbb{R}^{N \times C}$ is obtained as
\begin{align*}
    M^{RE}_{b} = \frac{M^{RC}_b}{\sum_{n=1}^{N} M^{RC}_{b, n}} \mathbf{1}_C^\top.
\end{align*}
\Cref{eq:supernorm} represents the output of SuperNorm, followed by an activation function $\phi$.

\section{Additional Motivating Examples} \label{app:limitations}
\revision{
In the following we elaborate on the failure cases of existing normalization layers. Throughout this section, we will assume that all GNN layers are message-passing layers, and, in particular, we focus on the maximally expressive MPNN layers presented in \citet{morris2019weisfeiler}, which have been shown to be as expressive as the 1-WL test. In particular \Cref{eq:intermediateFeature} can be rewritten as follows,
\begin{equation}\label{eq:graphconv}
    \tilde{\mH}^{(\ell)}_b = \mH^{(\ell - 1)}_b \mW^{(\ell - 1)}_1 +  \mA_b \mH^{(\ell - 1)}_b \mW^{(\ell - 1)}_2.
\end{equation}
}

\begin{example}[BatchNorm reduces GNN capabilities to compute node degrees \rebuttal{(\Cref{fig:BNfailExample})}]
\label{ex:BNlimitations}
Consider the following task: Given a graph $G$, for each node predict its degree. Assume that our batch contains $B$ graphs, and, for simplicity, assume that they all have the same number of nodes $N$.\footnote{This assumption is included only to simplify the notation, but can be removed without affecting the results.}  Assuming such graphs do not have initial node features, we follow standard practice~\citep{xu2019how} and initialize $\mH^{(0)}_b$ for each graph $b \in [B]$ as a vector of ones, i.e., $\mH^{(0)}_b = \mathbf{1} \in \mathbb{R}^{N \times 1}$. The output of the first GNN layer is already the degree of each node, or a function thereof:
\begin{equation}
    \label{eq:bn-first-layer}
    \mH^{(1)}_b = \phi\left(\textsc{Norm}\left( \mH^{(0)}_b \mW^{(0)}_1 +  \mA_b \mH^{(0)}_b \mW^{(0)}_2  ;  \ell \right) \right),
\end{equation}
where $\mW^{(0)}_1 \in \mathbb{R}^{1 \times C}$,  $\mW^{(0)}_2 \in \mathbb{R}^{1 \times C}$, are learnable weight matrices, and $C \in \mathbb{N}$ is the hidden feature dimensionality. Note that since the input is one dimensional, all output channels behave identically. We consider the case where $C=1$. Importantly, in our example, we have that $\mH^{(0)}_b \mW^{(0)}_1$ is the same for all nodes in all graphs in the batch, because $\mH^{(0)}_b$ and $\mW^{(0)}_1$ are the same for all nodes and graphs. Therefore, the term $\mH^{(0)}_b\mW^{(0)}_1$  acts as a bias term in \cref{eq:bn-first-layer}. Thus, for each node, the output of the first GNN layer is simply a linear function of the node degree, which is computed by the term $\mA_{b}\mH_{b}^{(0)}$ in \cref{eq:bn-first-layer}. Now, consider the normalization layer $\textsc{Norm}$ applied to the output of this function, and assume for now that there are no affine parameters. First, the BatchNorm normalization layer subtracts the mean computed across all nodes in all graphs, as shown in \cref{eq:bn}.
For all nodes having an output smaller than the mean, this subtraction returns a negative number, which is zeroed out by the application of the ReLU which follows the normalization. Therefore, assuming no further layers, for these nodes the prediction can only be 0 regardless of the actual degree, and therefore is incorrect. In \cref{fig:BNfailExample}, we provide a concrete example where this limitation occurs.
\end{example}

\begin{remark}[Deeper networks are also limited]
    It is important to note that even when the number of layers is greater than one, and assuming $C=1$ in all layers, the problem persists, because the analysis can be applied to every subsequent layer's output by simply noticing that subtracting the mean will always zero out features less than the mean.
\end{remark}

\begin{remark}[BN with Affine transformation is also limited]    
 We highlight that the inclusion of learnable affine parameters in BatchNorm, while potentially mitigating the problem during training by means of a large affine parameter $\beta^{(1)}_c$ that shifts the negative outputs before the ReLU, does not offer a definitive solution.  Indeed, it is always possible to construct a test scenario where one graph has nodes with degrees significantly smaller than those seen while training, for which the learned shift $\beta^{(1)}_c$ is not sufficiently large to make the output of the normalization layer positive. 
\end{remark}

It is important to mention that the aforementioned example potentially serves as a failure case for other methods that are based on removing the mean $\mu_{b,n,c}$ as calculated by BatchNorm in \cref{eq:bn}. For instance, GraphSizeNorm \cite{dwivedi2023benchmarking} also suffers from the same issue, as this normalization technique scales the node features by the number of nodes before applying BatchNorm. Furthermore, \Cref{ex:BNlimitations} can be adjusted to induce a failure case for the normalization operation (excluding the skip link) in DiffGroupNorm \cite{zhou2020towards}, which is also based on BatchNorm. This can be achieved by ensuring that a subset of nodes has a degree less than the average within all the clusters to which it is (softly) assigned.

\Cref{ex:BNlimitations} can be easily adapted to represent a failure case for InstanceNorm. The adaptation involves treating the entire batch as a single graph, which can be further refined to avoid having disconnected components by adding a small number of edges connecting them. 

\begin{example}[InstanceNorm reduces GNN capabilities to compute node degree \rebuttal{(\Cref{fig:BNfailExample} considering all graphs as disconnected components in a single graph)}]
\label{ex:INlimitations}
Consider the following task: Given a graph $G$, for each node   predict its degree. Assuming the graph does not have initial node features, we follow standard practice~\citep{xu2019how} and initialize $\mH^{(0)}_b = \mathbf{1} \in \mathbb{R}^{N \times 1}$. The output of the first GNN layer is already the degree of each node, or a function thereof. Now, consider the normalization layer $\textsc{Norm}$ applied to the output of this function. First, the InstanceNorm normalization layer subtracts the mean computed across all nodes within each graph in the batch, as shown in \cref{eq:in}.
For all nodes having an output smaller than the mean within their graph, this subtraction returns a negative number, which is zeroed out by the application of the ReLU activation function, applied following the normalization. Therefore, assuming no further layers, for these nodes the prediction can only be 0 regardless of the actual degree, and therefore is incorrect. Similarly to BatchNorm, the inclusion of learnable affine parameters as well as the stacking of additional single-channel layers is not sufficient to solve the problem.
\end{example}

The aforementioned limitation extends directly to other graph-designed normalizations based on InstanceNorm, such as PairNorm and GraphNorm\footnote{Assuming the learned shift parameter in GraphNorm is not all zeros.} as they also shifts the input in the same way, based on the mean $\mu_{b,n,c}$ as defined in \cref{eq:in}.
\begin{wrapfigure}[30]{r}{0.5\textwidth}
  \centering
  \begin{subfigure}{0.5\textwidth}
    \centering \includegraphics[width=0.75\linewidth]{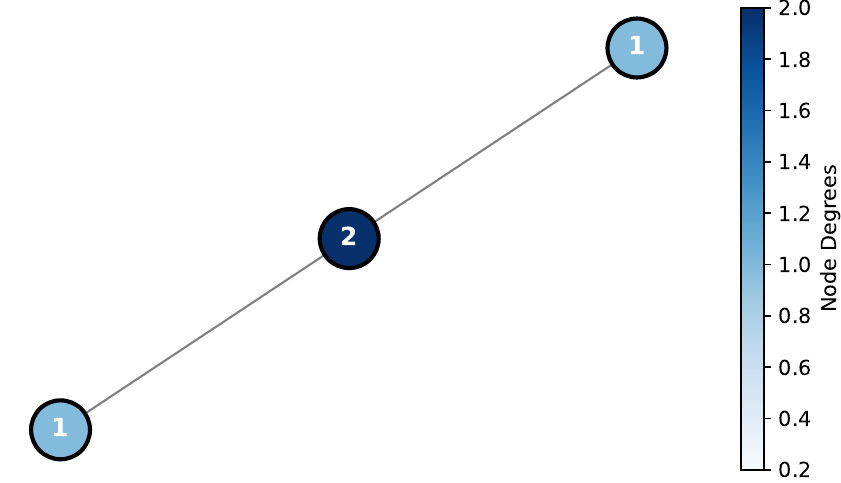}
    \caption{Node degrees, computed by one message-passing layer.}
    \label{fig:lnsub1}
  \end{subfigure}
  \hspace{1.5cm} 
  \begin{subfigure}{.5\textwidth}
    \centering
    \includegraphics[width=0.75\linewidth]{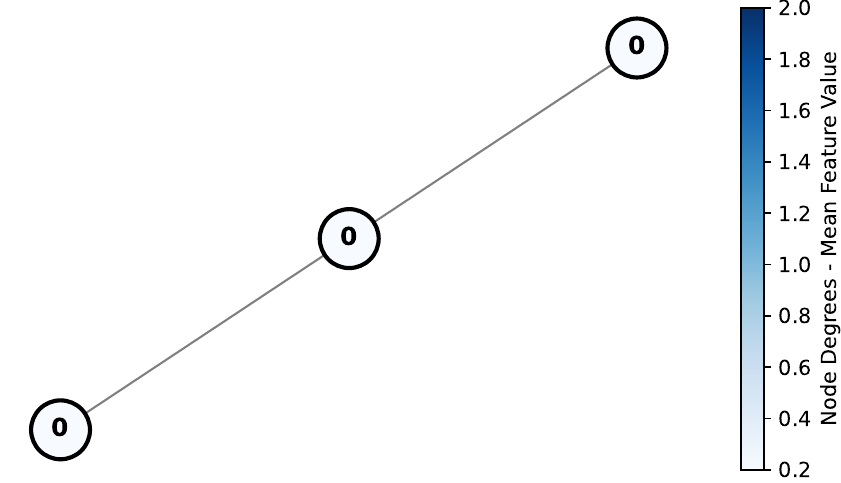}
    \caption{After removing the mean computed on the feature dimension, all nodes are mapped to the same value of 0, despite having different degrees.}
    \label{fig:lnsub2}
  \end{subfigure}
  \caption{\rebuttal{A graph, where subtracting the mean of the node features computed on the feature dimension, as in LayerNorm-node and related methods, results in the loss of capacity to compute node degrees.}}
  \label{fig:LNfailExample}
\end{wrapfigure}
\begin{example}[LayerNorm-node reduces GNN capabilities to compute node degree \rebuttal{(\Cref{fig:LNfailExample})}]
\label{ex:LNlimitations}
Consider the following task: Given a graph $G$, for each node predict its degree. Consider a graph $G$ consisting of a path of 3 nodes. Assuming the graph does not have initial node features, we initialize $\mH^{(0)}_b = \mathbf{1} \in \mathbb{R}^{N \times 1}$, with $b=0$ as we only have one graph. We will assume that $\mW^{(0)}_1 = \mathbf{0}$ \revision{(\Cref{eq:graphconv})}. Therefore, the output of the first GNN layer can be written as:
\begin{equation}
    \label{eq:ln-first-layer}
    \mH^{(1)}_b = \phi\left(\textsc{Norm}\left( \mA_b \mH^{(0)}_b \mW^{(0)}_2  ; \ell \right) \right),
\end{equation}
which, for each node, is a vector having at each entry the degree of the node multiplied by a learnable weight entry. Compare now one of the node having degree $d \in \mathbb{N}$, denoted as $v$ with the node having degree $2d$, denoted as $u$. Clearly, we have that the vector of node features in $u$ is equal to two times the vector of node features in $v$.
Therefore, by subtracting the mean computed across the channel dimension, as per \Cref{eq:ln-node}, we obtain 
\begin{equation*}
    h^{(1)}_{b, u, c} = h^{(1)}_{b, v, c}
\end{equation*}
for all $c \in [C]$. This implies that it is not possible to then distinguish the degree of these two nodes.
\end{example}

\begin{remark}[Deeper networks are also limited]
    It is important to note that even when the number of layers is greater than one, assuming $\mW^{(\ell - 1)}_1 = \mathbf{0}$ \revision{in all GNN layers (\Cref{eq:graphconv})},  the problem persists. Indeed, in the next layer, both nodes $v$ and $u$ will aggregate neighbors having identical features. Therefore the vector of node features in $u$ will still be equal to two times the vector of node features in $v$, a difference that is removed by the subtraction of the mean computed per node.
\end{remark}

\section{Variants of \ourmethod}
In the main paper, we have additionally considered \ourmethod-\textsc{no-rnf}, a variant of \ourmethod\ that does not sample RNF and instead uses only $\tilde{\mH}^{(\ell)}_b$ (\Cref{eq:hat-h-features}) . In the following, we present an additional variant.

\paragraph{\ourmethod-\textsc{ms}.} \Cref{eq:granola} can be specialized to the case where the affine parameters correspond to the mean and standard deviation computed per node across the feature dimension of $\mZ^{(\ell)}_b$. That is,
\begin{equation}
\label{eq:granolaMS}
    \gamma^{(\ell)}_{b,n,c} =   \frac{1}{C}  \sum_{c=1}^{C} z^{(\ell)}_{b, n, c}, \qquad
    \beta^{(\ell)}_{b,n,c} =  \frac{1}{C} \sum_{c=1}^{C} (z^{(\ell)}_{b, n, c} -  \gamma^{(\ell)}_{b,n,c} )^2.
\end{equation}
We refer to this variant of \Cref{eq:granola} which uses the pre-defined mean and standard deviation functions in \Cref{eq:granolaMS} for $f_1^{(\ell)},f_2^{(\ell)}$ as \ourmethod-\textsc{ms}, where \textsc{ms} stands indeed for mean and standard deviation. The idea behind \ourmethod-\textsc{ms} is to explicitly align the node-wise mean and variance of $\tilde{\mH}^{(\ell)}_b$ to match those of $\mZ^{(\ell)}_b$. \revision{Note that, while \ourmethod-\textsc{ms} follows a predefined realization of $f_1$ and $f_2$ by using the mean and standard deviation, it is still a learnable normalization technique, as $\mathbf{Z}_{b}^{(\ell)}$ is learned in \cref{eq:hat-h}. In Appendix \ref{app:additionalBaselines}, we provide empirical comparisons with this specialized case.}

\section{Proofs}\label{app:proofs}

\rebuttal{
\begin{restatable}[Existing normalization techniques limit MPNNs’ expressivity]{theorem}{batchnorm}\label{theo:batchnorm}
Let $f$ be a stacking of GIN layers with non-linear activations followed by sum pooling. Let $f^{\texttt{norm}}$ be the architecture obtained by adding InstanceNorm or BatchNorm without affine parameters. Then $f^{\texttt{norm}}$ is strictly less expressive than $f$.
\end{restatable}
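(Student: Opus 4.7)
My plan is to decompose the proof into two claims: (i) the separation power of $f^{\texttt{norm}}$ is contained in that of $f$, and (ii) there is a pair of graphs that $f$ can separate but $f^{\texttt{norm}}$ cannot. Claim (i) will follow by induction on the layer index, using the fact that the normalization is a deterministic function of the multiset of per-channel feature values on each graph; if $G_1$ and $G_2$ are $1$-WL equivalent with the same number of nodes, GIN produces identical feature multisets on both graphs at every layer, so the normalization statistics coincide, the post-normalization multiset is preserved, and the inductive step closes since the next GIN layer sees equal input multisets on $1$-WL-equivalent graphs. The bulk of the argument is claim (ii).

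\textbf{Witness graphs and separation by $f$.} For (ii) I would take explicit node-automorphism-transitive witnesses: $G_1$ a single isolated node and $G_2$ two disjoint isolated nodes (suitable when $\phi(0)=0$), or equal-size but different-degree pairs such as $G_1=C_6$ and $G_2=K_{3,3}$ (suitable when $\phi(0) \neq 0$). With the standard uniform initialization $\mH^{(0)}_b = \mathbf{1}$, the intermediate tensor $\tilde{\mH}^{(\ell)}_b$ is constant across nodes at every layer for each of these graphs, because all nodes belong to the same automorphism orbit. Consequently, under $f$, the sum-pooled readout reduces to $N \cdot v^{(L)}$ where $v^{(L)}$ depends on the parameters and on the common node degree. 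Choosing parameters so that $v^{(L)} \neq 0$ (or so that the two per-node constants differ in the equal-size case) immediately gives different outputs on $G_1$ and $G_2$, so $f$ distinguishes the pair.

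\textbf{Collapse under $f^{\texttt{norm}}$ and main obstacle.} The key technical step is to show that both InstanceNorm (\Cref{eq:in}) and BatchNorm (\Cref{eq:bn}) without affine parameters map any constant-across-nodes tensor to the zero tensor: mean subtraction cancels the constant and rescaling preserves zero. Restricting to batches containing a single graph (or to batches of copies of one graph) makes \Cref{eq:bn} coincide with \Cref{eq:in}, so the two cases collapse to one. For activations with $\phi(0)=0$, the zero then propagates as the input of the next GIN layer, which outputs zero, is again sent to zero by the normalization, and so on through all $L$ layers, yielding a zero sum-pooled readout on both witnesses. For activations with $\phi(0) \neq 0$, the post-activation features become the constant $\phi(0)$ uniformly on every node, the next GIN layer outputs a (possibly degree-dependent) constant that is again collapsed to zero by the normalization, and the sum pooling returns $N \cdot \phi(0)$; this value is identical on $C_6$ and $K_{3,3}$, completing the separation failure. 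The main obstacles I anticipate are handling the vanishing standard deviation on constant inputs, which I would resolve via the standard $\varepsilon$-regularization in the denominator while noting that mean subtraction has already produced zero before the division, and giving a uniform statement of claim (i) that covers InstanceNorm and BatchNorm simultaneously, which I would do by the batch reduction above so that both normalizations act as the same function of the per-graph feature multiset.
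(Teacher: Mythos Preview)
Your proposal is correct and shares the paper's core mechanism: on vertex-transitive (hence constant-feature) inputs, mean-subtracting normalization without affine parameters collapses every layer's pre-activation to zero, so the final readout carries no graph-specific information. The paper's proof is considerably terser: for claim (i) it simply remarks that normalization ``only shifts and scales'' (your inductive argument via equal feature multisets is more rigorous), and for claim (ii) it takes as witnesses two CSL graphs with different numbers of nodes, citing \citet[Proposition~4.1]{cai2021graphnorm} for the zero-matrix collapse under InstanceNorm, and handles BatchNorm by putting both graphs in the same batch (they share the same degree, hence the same per-node constant, so the batch mean still cancels). Your choice of witnesses differs and is in one respect more complete: the paper's different-size CSL pair only yields identical $f^{\texttt{norm}}$ outputs when $\phi(0)=0$, since otherwise sum pooling returns $n_i\,\phi(0)$; your equal-size pair $C_6$ vs.\ $K_{3,3}$ avoids this and covers arbitrary activations in one shot (indeed it makes your isolated-node case redundant). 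One small slip: you write that on zero input ``the next GIN layer \dots outputs zero,'' which is false if the MLP has biases; but your argument does not actually need this, since the GIN output is still constant across nodes and is therefore sent to zero by the normalization, so the conclusion stands.
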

\begin{proof}
All non-isomorphic graphs that can be distinguished by $f^{\texttt{norm}}$ can clearly be distinguished by $f$ as the normalization only shifts and scale its input. To show that $f^{\texttt{norm}}$ is strictly less expressive than $f$, consider two CSL graphs with different numbers of nodes. These are distinguishable by $f$. However, applying InstanceNorm to the output of GIN results in a zero matrix, as shown in \citet[Proposition 4.1]{cai2021graphnorm}. Similarly, if the batch consists of these two graphs, applying BatchNorm results in a zero matrix. Since the output of the normalization is a zero matrix, they are indistinguishable by $f^{\texttt{norm}}$, concluding our proof.
\end{proof}
}

\begin{restatable}[MPNN with \ourmethod\ can implement MPNN with RNF]{proposition}{implement}\label{theo:implement}
    Let $f^{\texttt{norm}}$ be a stacking of MPNN layers interleaved with \ourmethod\ normalization layers (\Cref{eq:granola}), followed by activation functions. There exists a choice of hyperparameters and weights such that $f^{\texttt{norm}}$ defaults to MPNN + RNF~\cite{abboud2020surprising}.
\end{restatable}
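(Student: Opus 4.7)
\textbf{Proof plan for \Cref{theo:implement}.}

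The plan is to exhibit an explicit choice of the learnable components of $f^{\texttt{norm}}$ (the weights of the main MPNN layers, the auxiliary GNNs $\text{GNN}^{(\ell)}_{\textsc{norm}}$, and the MLPs $f_1^{(\ell)}, f_2^{(\ell)}$) such that the overall computation coincides with a given MPNN applied to the concatenation of the input features and a sample of RNF. The architecture will be arranged so that the first \ourmethod\ layer is responsible for ``injecting'' the RNF into the node representations, and every subsequent \ourmethod\ layer defaults to the identity so that the remaining stack behaves as a plain MPNN acting on features augmented with RNF.

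First I would pick the widths. Let the target architecture be an MPNN with hidden width $D$ operating on the RNF-augmented input $\mathbf{X} \oplus \mathbf{R}$, where $\mathbf{R} \in \mathbb{R}^{N \times K}$. I choose $f^{\texttt{norm}}$ to have the same hidden width $D$ and take its first MPNN layer to be the embedding $\tilde{\mathbf{H}}^{(1)}_b$ that places $\mathbf{X}$ into the first $C_0$ channels and zeros into the remaining $D-C_0$ channels (this can be realized by standard GIN/MPNN weights since the map is linear and independent of the graph). Next, I configure $\text{GNN}^{(1)}_{\textsc{norm}}$ to be a depth-$0$ message-passing layer that simply forwards its node input, so that $\mathbf{Z}^{(1)}_b = \tilde{\mathbf{H}}^{(1)}_b \oplus \mathbf{R}^{(1)}_b$ (up to a linear relabeling of channels). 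I then choose the MLPs $f_1^{(1)}, f_2^{(1)}$ so that, in the $C_0$ feature channels, $\gamma^{(1)}_{b,n,c}=\sigma_{b,n,c}$ and $\beta^{(1)}_{b,n,c}=\mu_{b,n,c}$, which undoes the standardization and recovers $\tilde{h}^{(1)}_{b,n,c}$; in the last $K$ channels reserved for RNF, I set $\gamma^{(1)}_{b,n,c}=0$ and $\beta^{(1)}_{b,n,c}=r^{(1)}_{b,n,c}$. Since $\mu_{b,n,c}$ and $\sigma_{b,n,c}$ are polynomial functions of the entries of $\tilde{\mathbf{H}}^{(1)}_b$ (mean and standard deviation along the channel dimension), and $\mathbf{Z}^{(1)}_b$ contains both $\tilde{\mathbf{H}}^{(1)}_b$ and $\mathbf{R}^{(1)}_b$ by construction, universal MLPs $f_1^{(1)}, f_2^{(1)}$ can compute these quantities. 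After the subsequent activation $\phi$ (chosen to act as the identity on the relevant range, e.g., by pre-shifting if $\phi=\mathrm{ReLU}$), the output satisfies $\mathbf{H}^{(1)}_b = \mathbf{X}_b \oplus \mathbf{R}^{(1)}_b$, matching the initial state of an MPNN $+$ RNF.

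For every subsequent layer $\ell \geq 2$, I default the normalization to the identity: $\text{GNN}^{(\ell)}_{\textsc{norm}}$ is set to forward only the $\tilde{\mathbf{H}}^{(\ell)}_b$ portion of its input, and $f_1^{(\ell)}, f_2^{(\ell)}$ compute $\gamma^{(\ell)}_{b,n,c}=\sigma_{b,n,c}$ and $\beta^{(\ell)}_{b,n,c}=\mu_{b,n,c}$, which makes the normalization map $\tilde{h}^{(\ell)}_{b,n,c}\mapsto \tilde{h}^{(\ell)}_{b,n,c}$. The MPNN layers of $f^{\texttt{norm}}$ from layer $2$ onwards are then set to the weights of the target MPNN $+$ RNF, acting on the RNF-augmented representation that was installed by the first \ourmethod\ layer. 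Composing these choices yields exactly an MPNN $+$ RNF.

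The main obstacle I anticipate is the bookkeeping around the first layer: ensuring that \emph{simultaneously} (i) the feature channels pass through unperturbed, (ii) the RNF channels are overwritten with the sampled $\mathbf{R}^{(1)}_b$, and (iii) the subsequent elementwise activation $\phi$ does not destroy this encoding. Item (iii) is easy if $\phi$ is invertible on the relevant domain; for $\phi=\mathrm{ReLU}$ one can pre-shift the RNF by a sufficiently large constant via $f_2^{(1)}$ and subtract it in the next MPNN layer, which is a standard technique and does not alter the expressive power argument. The rest is a careful, but routine, channel-wise construction, relying on the fact that $f_1^{(\ell)}, f_2^{(\ell)}$ are arbitrary MLPs and $\text{GNN}^{(\ell)}_{\textsc{norm}}$ is an arbitrary MPNN, so both can be forced to compute any of the required auxiliary quantities.
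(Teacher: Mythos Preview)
Your high-level plan is the same as the paper's: use the first \textsc{Granola} layer to inject the RNF into the node representation and force every later \textsc{Granola} layer to act as the identity, so that the remaining MPNN stack operates on $\mX \oplus \mR$. That part is fine.

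The gap is in your mechanism for making the normalization act as the identity. You propose setting $\gamma^{(\ell)}_{b,n,c}=\sigma_{b,n,c}$ and $\beta^{(\ell)}_{b,n,c}=\mu_{b,n,c}$ and claim these are ``polynomial functions of the entries of $\tilde{\mathbf H}^{(1)}_b$'' so that universal MLPs $f_1,f_2$ can recover them. But $\sigma_{b,n,c}$ involves a square root and is \emph{not} polynomial; a finite ReLU MLP can only approximate it on a compact set, not compute it exactly. Since the proposition claims an \emph{exact} defaulting (``there exist weights such that $f^{\texttt{norm}}$ defaults to MPNN + RNF''), an approximation argument for $\sigma$ does not close the proof, and your handling of the outer activation then inherits the same issue.

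The paper avoids this entirely with a much simpler trick: set $f_1^{(\ell)}\equiv 0$, so $\gamma^{(\ell)}_{b,n,c}=0$ in every channel and the standardization term $\gamma\cdot(\tilde h-\mu)/\sigma$ vanishes identically. The output of the \textsc{Granola} layer is then just $\beta^{(\ell)}_{b,n}=f_2^{(\ell)}(z^{(\ell)}_{b,n})$. For $\ell=1$, the paper duplicates $\mX$ in the first MPNN layer so that $\tilde{\mathbf H}^{(1)}_b=\mX_b\oplus\mX_b$, then uses a single linear GraphConv layer in $\text{GNN}^{(1)}_{\textsc{norm}}$ to route channels so that $\mZ^{(1)}_b=\mX_b\oplus\mR^{(1)}_b$; taking $f_2^{(1)}$ to be the identity gives output exactly $\mX_b\oplus\mR^{(1)}_b$. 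For $\ell\geq 2$, the norm-GNN linearly projects out the fresh RNF so that $\mZ^{(\ell)}_b=\tilde{\mathbf H}^{(\ell)}_b$, and again $f_1\equiv 0$, $f_2=\mathrm{id}$ makes the layer output $\tilde{\mathbf H}^{(\ell)}_b$ exactly. Everything is a fixed linear map, so there is no approximation anywhere and no need for your ReLU pre-shifting workaround. Your construction can be repaired simply by replacing the ``undo standardization via $\gamma=\sigma$'' step with this $\gamma=0$ trick.
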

\begin{proof}
\revision{We only need to show a choice of hyperparameters and weights that makes an MPNN + \ourmethod\ default to an MPNN + RNF, which is the model obtained by performing message passing on the input graph where the initial node features are concatenated to RNF.

    Since RNF in an MPNN + \ourmethod\ are introduced in the \ourmethod\ layer, we choose the first MPNN layer, which precedes any normalization (\Cref{eq:intermediateFeature}), to simply repeat its inputs using additional channels. In the case of GraphConv layers, this is easily obtained by $\mW^{(0)}_1 = \begin{psmallmatrix} \mI & \mI \end{psmallmatrix}$ and $\mW^{(0)}_2 = \mathbf{0}$, with $\mathbf{I} \in \{0, 1\}^{C \times C}$. With these choices, $\tilde{\mH}^{(1)}_b$ in \Cref{eq:intermediateFeature} becomes $\tilde{\mH}^{(1)}_b = \mH^{(0)}_b \oplus \mH^{(0)}_b = \mX_b \oplus \mX_b$. Notably, this concatenation is only introduced to make the dimension of $\tilde{\mH}^{(1)}_b$ match the dimension of the concatenation of the initial node features with RNF having the same dimension.}

    Consider the first \ourmethod\ layer, $\ell=1$.
    It is sufficient to set the activation functions inside $\text{GNN}^{(1)}_{\textsc{norm}}$ to be the identity function and to properly set the weights of $\text{GNN}^{(1)}_{\textsc{norm}}$ (\Cref{eq:hat-h}) and $f_1^{(1)},f_2^{(1)}$ (\Cref{eq:f1-f2}) such that the normalization layer returns $\revision{\mH^{(0)}_b \oplus } \mR^{(1)}_b$, \revision{with $ \mR^{(1)}_b \in \mathbb{R}^{N \times K}$ and $K$ chosen such that $K=C$}. For example, if $\text{GNN}^{(1)}_{\textsc{norm}}$ is composed by a single GraphConv layer \citep{morris2019weisfeiler} (with an identity activation function), we have
    \begin{align*}
    \mZ^{(1)}_b &= (\tilde{\mH}^{(1)}_b \oplus \mR^{(1)}_b )  \mW^{\textsc{norm}, (1)}_1 + \mA_b (\tilde{\mH}^{(1)}_b \oplus \mR^{(1)}_b) \mW^{\textsc{norm}, (1)}_2 \\
    &= (\mH^{(0)}_b \oplus \mH^{(0)}_b \oplus \mR^{(1)}_b )  \mW^{\textsc{norm}, (1)}_1 + \mA_b (\mH^{(0)}_b \oplus \mH^{(0)}_b \oplus \mR^{(1)}_b) \mW^{\textsc{norm}, (1)}_2,
    \end{align*}
    then for $\text{GNN}^{(1)}_{\textsc{norm}}$ is sufficient to choose $\mW^{\textsc{norm}, (1)}_2 = \mathbf{0}$, \revision{$\mW^{\textsc{norm}, (1)}_1 =  \begin{psmallmatrix} \mathbf{0} & \mathbf{0} \\
    \mI & \mathbf{0} \\
    \mathbf{0} & \mI \end{psmallmatrix} $, where $\mathbf{I} \in \{0, 1\}^{C \times C}$} is the identity matrix. For $f_1^{(1)},f_2^{(1)}$  its is sufficient to set $f_1^{(1)}$ to always return a zero vector, and $f_2^{(1)}$ to be the identity function. With these choices, \Cref{eq:gnnAndNorm} becomes
    \begin{align*}
        \mH^{(1)} = \phi\left(\mH^{(0)}_b  \oplus  \mR^{(1)}_b\right) = \phi\left( \mX_b \oplus  \mR^{(1)}_b\right),
    \end{align*}
    which represents the input of the next GNN layer, and matches the input of an MPNN + RNF.
    \revision{Therefore, we are only left to show that subsequent applications of \ourmethod\ layers behave as the identity function, since the GNN layers will instead behave as those in the MPNN + RNF.}

    After the first \ourmethod\ layer, $\ell > 1$, it is sufficient to set $\text{GNN}^{(\ell)}_{\textsc{norm}}$ (\Cref{eq:hat-h}) and $f_1^{(\ell)},f_2^{(\ell)}$ (\Cref{eq:f1-f2}) to return its input $\tilde{\mH}^{(\ell)} \in \mathbb{R}^{N \times C}$ \revision{(while discarding $\mR^{(\ell)}_b \in \mathbb{R}^{N \times K}$)}. Assuming a single layer in it (with an identity activation function), this can be accomplished by setting $\mW^{\textsc{norm}, (\ell)}_2 = \mathbf{0}$ and $\mW^{\textsc{norm}, (\ell)}_1 =  \begin{psmallmatrix}  \mI \\ \mathbf{0} \end{psmallmatrix}$, \revision{$f_1^{(\ell)}$} to always return a zero vector, and $f_2^{(\ell)}$ to be the identity function. With these choices, \Cref{eq:granola} becomes
    \begin{align*}
        \textsc{Norm}(\tilde{h}^{(\ell)}_{b, n, c} ; \tilde{\mH}^{(\ell)}, \ell) = \tilde{h}^{(\ell)}_{b, n, c}.
    \end{align*}
    Therefore, these two steps imply that an MPNN with \ourmethod\ implements an MPNN with RNF.
\end{proof}

\begin{restatable}[MPNN + \ourmethod\ is universal with high probability]{corollary}{universal}\label{theo:universality}
    Let $\Omega_N$ be a compact set of graphs with $N \in \mathbb{N}$ nodes and $g$ a continuous permutation-invariant graph function defined over $\Omega_N$. Let $f^{\texttt{norm}}$ be a stacking of MPNN layers interleaved with \ourmethod\ normalization layers (\Cref{eq:granola}) followed by activation functions. Then, if random node features inside \ourmethod\ are sampled from a continuous bounded distribution with zero mean and finite variance, for all $\epsilon, \delta > 0$, there exist a choice of hyperparameters and weights such that, $\forall G= (\mA, \mX) \in \Omega$,
    \begin{align}
        P(\vert g(\mA, \mX) - f^{\texttt{norm}}(\mA, \mX) \vert \leq \epsilon) \geq 1 - \delta
    \end{align}
    where $f^{\texttt{norm}}(\mA, \mX) $ is the output for the considered choice of hyperparameters and weights.
\end{restatable}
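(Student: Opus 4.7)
The plan is to derive this corollary as a direct composition of two ingredients: Proposition \ref{theo:implement}, which shows that an MPNN interleaved with \ourmethod\ layers can implement an MPNN augmented with Random Node Features, together with the $(\epsilon,\delta)$-universal approximation theorem for MPNN + RNF established in \citet{abboud2020surprising,puny2020global}. Since the distributional hypothesis on the RNF assumed in the corollary (continuous, bounded, zero mean, finite variance) matches the hypothesis required by those works, essentially no additional analytic work is required; the main content of the argument is the reduction itself.

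First, I would invoke the universality theorem for MPNN + RNF: given $\epsilon,\delta > 0$, the compact set $\Omega_N$, and the continuous permutation-invariant target $g$, there exists an MPNN + RNF architecture $f^{\texttt{rnf}}$ and a number of hidden channels such that, for every $G=(\mA,\mX)\in\Omega_N$,
\[
P\bigl(\vert g(\mA,\mX) - f^{\texttt{rnf}}(\mA,\mX)\vert \leq \epsilon\bigr) \geq 1-\delta,
\]
where the probability is taken over the sampling of the RNF. Next, I would apply Proposition \ref{theo:implement} to realize $f^{\texttt{rnf}}$ as an instance $f^{\texttt{norm}}$ of MPNN + \ourmethod: the construction in that proposition chooses the weights of the first MPNN layer so that its output concatenates the input features with themselves, sets the first $\text{GNN}^{(1)}_{\textsc{norm}}$ together with $f_1^{(1)},f_2^{(1)}$ so that the first \ourmethod\ layer emits the concatenation of the features with the sampled RNF, and sets all subsequent \ourmethod\ layers to act as the identity while the intervening MPNN layers play the role of the layers of $f^{\texttt{rnf}}$. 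The approximation guarantee above then transfers verbatim to $f^{\texttt{norm}}$.

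The only subtlety to flag is that \ourmethod\ samples fresh RNF at every layer, whereas the universality theorem refers to a single draw at the input. This is precisely what Proposition \ref{theo:implement} handles: in the constructed $f^{\texttt{norm}}$, only the RNF of the first \ourmethod\ layer actually influence the output, since the weights in every subsequent normalization layer are chosen so as to discard their RNF. Consequently, the effective randomness driving the output of $f^{\texttt{norm}}$ is exactly one draw of RNF from the distribution assumed in the corollary, which matches the distributional assumption of the universality theorem, completing the reduction. I expect no genuine obstacle here — the substance of the result has been imported from the expressiveness of MPNN + RNF, and the novelty lies solely in the architectural embedding provided by Proposition \ref{theo:implement}.
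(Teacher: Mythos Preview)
Your proposal is correct and follows essentially the same approach as the paper: reduce to the universality result of \citet{puny2020global} (and \citet{abboud2020surprising}) for MPNN + RNF by invoking Proposition~\ref{theo:implement} to realize an MPNN + RNF inside MPNN + \ourmethod. Your write-up is actually more explicit than the paper's, in particular in flagging and resolving the point that only the first layer's RNF influence the output under the construction of Proposition~\ref{theo:implement}.
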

\begin{proof}
    The proof follows by showing the choice of hyperparameters and weights which makes an MPNN augmented with \ourmethod\ satisfy the assumptions of \citet[Proposition 1]{puny2020global}. In all our \ourmethod\ layers, $\mR^{(\ell)}_{\revision{b}} \in \mathbb{R}_b^{N \times K}$ is drawn from a continuous and bounded distribution \revision{for any graph $b$ in a batch of $B$ graphs}. Therefore, we only need to show that the overall architecture can default to an MPNN augmented with these RNF. This follows from \Cref{theo:implement}.
\end{proof}

\norf*
\begin{proof}
    The proof follows by showing that an MPNN with \ourmethod-\textsc{no-rnf} can be implemented by a standard MPNN (without any normalization), which therefore represents an upper-bound of its expressive power. 
    
    We assume that all the MPNN layers are maximally expressive MPNNs layers of the form of GraphConv~\citep{morris2019weisfeiler}, and that all activation functions inside \ourmethod-\textsc{no-rnf} are ReLU activation functions. Since the convolution layers are the same in both $f^{\text{norm}}$ and $f$, we are only left to show that an MPNN (without normalization, \revision{with ReLU activations}) can implement a layer of \ourmethod-\textsc{no-rnf}. Recall that a single \ourmethod\ layer can be written as
    \begin{align}\label{eq:granola-norf}
    \textsc{Norm}(\tilde{h}^{(\ell)}_{b, n, c} ; \tilde{\mH}^{(\ell)}, \ell ) = f_1^{(\ell)} ( z^{(\ell)}_{b, n})_c \frac{\tilde{h}^{(\ell)}_{b, n, c} - \mu_{b, n, c} }{\sigma_{b, n, c}} + f_2^{(\ell)} (z^{(\ell)}_{b, n})_c ,
    \end{align}
    and, since we are not considering RNF, $\mZ^{(\ell)}_b$ is obtained with \Cref{eq:hat-h-features},
    recalling that $\text{GNN}^{(\ell)}_{\textsc{norm}}$ is also composed by MPNN layers \revision{interleaved by ReLU activation functions} per our assumption. We denote the number of layers of this GNN by $L^{(\ell)}_{\textsc{norm}}$. We next show how to obtain \Cref{eq:granola-norf} using multiple layers of an MPNN which takes as input $\tilde{\mH}^{(\ell)}$. We will denote intermediate layers of this MPNN as $\hat{\mH}^{(t)}$, $t \geq 1$.

    \textbf{First Step, compute $\mZ^{(\ell)}_b$.} The first layers of this MPNN are used to obtain $\mZ^{(\ell)}_b$ given $\tilde{\mH}^{(\ell)}_b$, effectively mimicking $\text{GNN}^{(\ell)}_{\textsc{norm}}$. 
    \revision{Since we will later need also $\tilde{\mH}^{(\ell)}_b$, we will use the last feature dimensions in every layer representation to simply copy it in every subsequent layer. Importantly, however, simply copying $\tilde{\mH}^{(\ell)}_b$ in the last dimensions using an identity weight matrix may not be sufficient, as the application of ReLU non-linearity would clip negative entries to 0. Therefore,  we copy both $\tilde{\mH}^{(\ell)}_b$ and $-\tilde{\mH}^{(\ell)}_b$ in the last dimensions, and at the end recover $\tilde{\mH}^{(\ell)}_b$ as $\tilde{\mH}^{(\ell)}_b = \phi(\tilde{\mH}^{(\ell)}_b) - \phi(-\tilde{\mH}^{(\ell)}_b)$, with $\phi$ the ReLU activation.}
    
    For $t=1$, the MPNN layer is simply responsible of replicating its input $\tilde{\mH}_b^{(\ell)}$, so that we can later use the first $C$ channels to obtain $\mZ^{(\ell)}_b$ and the last to later recover $\tilde{\mH}_b^{(\ell)}$. Therefore, 
    \begin{align*}
    \hat{\mH}^{(1)}_b = \phi(\tilde{\mH}^{(\ell)}_b \hat{\mW}^{(\revision{1})}_1 + \mA_b \tilde{\mH}^{(\ell)}_b \hat{\mW}^{(\revision{1})}_2)
    \end{align*}
    with $\hat{\mW}^{(1)}_1 = \begin{psmallmatrix} \mI & \mI  & \revision{-\mI}\end{psmallmatrix} $ and $\hat{\mW}^{(1)}_2 = \mathbf{0}$, with $\mI \in \{0, 1\}^{C \times C}$ the identity matrix, and $\phi$ is the identity activation function. This means that $\hat{\mH}^{(1)}_b = \tilde{\mH}^{(\ell)}_b \oplus \tilde{\mH}^{(\ell)}_b \oplus  \revision{- \tilde{\mH}^{(\ell)}_b}$. For $t > 1$ and $t \leq L^{(\ell)}_{\textsc{norm}} + 1 $, we need to mimic $\text{GNN}^{(\ell)}_{\textsc{norm}}$ from \Cref{eq:hat-h-features} on the first $C$ dimensions. This is achievable by 
    \begin{align*}
    \hat{\mH}^{(t+1)}_b = \phi(\hat{\mH}^{(t)}_b \hat{\mW}^{(t)}_1 + \mA_b \hat{\mH}^{(t)}_b \hat{\mW}^{(t)}_2)
    \end{align*}
    with $\hat{\mW}^{(t)}_1 = \begin{psmallmatrix} \mW^{\textsc{norm}, (t)}_1 & \mathbf{0} & \mathbf{0} \\
    \mathbf{0} & \mI & \mathbf{0} \\
    \mathbf{0}  & \mathbf{0} & \revision{\mI}
    \end{psmallmatrix} $ and $\hat{\mW}^{(t)}_2 = \begin{psmallmatrix} \mW^{\textsc{norm}, (t)}_2 & \mathbf{0} & \mathbf{0} \\
    \mathbf{0} & \mI & \mathbf{0} \\
    \mathbf{0}  & \mathbf{0} & \revision{\mI} \end{psmallmatrix}$, where $\mW^{\textsc{norm}, (t)}_1, \mW^{\textsc{norm}, (t)}_2$ are exactly the same as the corresponding weights of $\text{GNN}^{(\ell)}_{\textsc{norm}}$, $\phi$ is the ReLU activation. Therefore, after $L^{(\ell)}_{\textsc{norm}} + 1 $ layers, we have
    \begin{align*}
        \hat{\mH}^{(L^{(\ell)}_{\textsc{norm}} + 1)}_b = \mZ^{(\ell)}_b \oplus \phi(\tilde{\mH}^{(\ell)}_b) \revision{\oplus \phi(-\tilde{\mH}^{(\ell)}_b)} 
    \end{align*}
    \revision{We can then use an additional MPNN layer $t$ to recover $\tilde{\mH}^{(\ell)}_b$ by setting
    $\hat{\mW}^{(t)}_1 = \begin{psmallmatrix} \mI & \mathbf{0}  \\
    \mathbf{0} & \mI  \\
    \mathbf{0}  & -\mI \\
    \end{psmallmatrix}$, $\hat{\mW}^{(1)}_2 = \mathbf{0}$, and obtain
    \begin{align*}
        \hat{\mH}^{(L^{(\ell)}_{\textsc{norm}} + 2)}_b = \mZ^{(\ell)}_b \oplus \tilde{\mH}^{(\ell)}_b
    \end{align*}
    }
    Finally, we rely on the ability of MLPs to memorize a finite number of input-output pairs (see \citet{yun2019small} and \citet[Lemma B.2]{yehudai2021local}) to implement \Cref{eq:granola-norf} given its input. This is achieved by making the three subsequent layers of the MPNN behave as a 3-layer MLP (with ReLU activations) by simply zeroing out $\hat{\mW}^{(t)}_2$. In this way, we have obtained \Cref{eq:granola-norf} through MPNN layers only.
\end{proof}

\section{Experimental Details}\label{app:exp}
We implemented \ourmethod\ using Pytorch~\citep{pytorch} (BSD-style license) and Pytorch-Geometric~\citep{pyg2019} (MIT license). We ran our experiments on NVIDIA RTX3090 and RTX4090 GPUs, both having 24GB of memory. 
We performed hyperparameter tuning using the Weight and Biases framework~\citep{wandb}. To sample the random node features, we followed \citet{abboud2020surprising} and used a standard Gaussian with mean 0 and variance 1, \rebuttal{sampling random node features for each forward pass}. For each node, we sample independently a number of random features equal to the channel dimension of $\tilde{h}^{(\ell)}_{b,n}$, that is $K$ is equal to $C$ in \Cref{eq:hat-h}.
Recall that when augmenting a GNN architecture with \ourmethod, we have two types of networks: (i) the main downstream network, and (ii) the normalization network returning $\mZ^{(\ell)}_b$ (see \Cref{eq:hat-h}). In both networks, and throughout all datasets that do not contain edge features (e.g., in TUDatasets), we employ GIN layers~\citep{xu2019how} to perform message passing. In case edge features are available, such as in ZINC, we use the GINE variant of GIN, as prescribed in \citet{dwivedi2023benchmarking}. For brevity, we refer to both as GIN throughout the paper. Our MLPs are composed of two linear layers with ReLU non-linearities. Our normalization network is kept small and comprises a number of layers tuned in $\{1,2, 3\}$ and an embedding dimension that is the same as its input $\tilde{\mH}^{(\ell)}$.  Each experiment is repeated for 5 different seeds, and we report the average and standard deviation result. Details of hyperparameter grid for each dataset can be found in the following subsections.

\subsection{ZINC-12k} We consider the dataset splits proposed in~\citet{dwivedi2023benchmarking}, and use the Mean Absolute Error (MAE) both as loss and evaluation metric. For all models, we used a batch size tuned in $\{32, 64, 128\}$. To optimize the model we use the Adam optimizer with initial learning rate of $0.001$, which is decayed by $0.5$ every 300 epochs. The maximum number of epochs is set to $500$. The test metric is computed at the best validation epoch. The downstream network is composed of a number of layers in $\{4,6\}$, with an embedding dimension tuned in $\{32,64\}$.

\subsection{OGB datasets} We consider the scaffold splits proposed in~\citet{hu2020open}, and for each dataset we used the loss and evaluation metric prescribed therein. In all experiments, we used the Adam optimizer with initial learning rate of $0.001$. We tune the batch size in $\{64, 128$.
We employ a learning rate scheduler that that follows the procedure prescribed in \citet{bevilacqua2022equivariant}. We also consider dropout in between layers with probabilities in $\{0, 0.5\}$. The downstream network has the number of layers in $\{4, 6\}$ with embedding dimensions in $\{32, 64, 128\}$.
The maximum number of epochs is set to $500$ for all models. The test metric is computed at the best validation epoch.

\subsection{TUDatasets} For all the experiments with datasets from the TUDatasets repository, we followed the evaluation procedure proposed in \citet{xu2019how}, consisting of 10-fold cross validation and metric at the best averaged validation accuracy across the folds. The downstream network is composed of a number of layers tuned in $\{4, 6\}$ layers with embedding dimension in $\{32, 64\}$. We use the Adam optimizer with learning rate tuned in $\{0.01, 0.001\}$.  We consider batch size in $\{32,64, 128\}$, and trained for $500$ epochs. 

\section{Complexity and Runtimes} \label{app:complexity}
\textbf{Complexity.} As described in \cref{sec:method}, and specifically in \cref{eq:granola}, our \ourmethod\ takes random node feature and hidden learned node features, and propagates them using a GNN backbone denoted by $\text{GNN}_{\textsc{norm}}$ to compute intermediate (expressive) features, which are then used to calculate the normalization statistics, as shown in \cref{eq:f1-f2}. The calculation can be implemented either by considering their mean and standard deviation, as in \ourmethod-\textsc{ms}, or more generally by employing an MLP in \ourmethod, as described in \cref{eq:granolaMS} and \cref{eq:f1-f2}, respectively. In our experiments, $\text{GNN}^{(\ell)}_{\textsc{norm}}$ in \ourmethod\ is also an MPNN, similar to the downstream backbone model. 
Therefore, including our \ourmethod\ layers does not change the asymptotic computational complexity of the architecture, which remains within the computational complexity of MPNNs (e.g., \citet{morris2019weisfeiler, xu2019how}). Specifically, each MPNN layer is linear in the number of nodes $|V|$ and edges $|E|$. Since a single \ourmethod\ layer is composed by $L^{(\ell)}_{\textsc{norm}}$ MPNN layers, assuming it the same for all $\ell $, it has a time complexity of $\mathcal{O}(L_{\textsc{norm}} \cdot (|V|+|E|))$.  Every downstream  MPNN layer in our framework uses a \ourmethod\ normalization layer, and therefore, assuming $L$ downstream MPNN layers, the overall complexity of an MPNN augmented with \ourmethod\ is $\mathcal{O} \left((L\cdot L_{\textsc{norm}})\cdot(|V|+|E|) \right)$, compared to the complexity of an MPNN without \ourmethod\ which amounts to $\mathcal{O} \left(L\cdot(|V|+|E|) \right)$. In practice, $L_{\textsc{norm}}$  is a hyperparameter between 1 to 3 and can therefore be considered a constant.

\begin{table*}[t]
    \centering
    \scriptsize
    \caption{Average batch runtimes on a Nvidia RTX-2080 GPU of \ourmethod\ and other methods, with 8 layers, batch size of 128, and 128 channels on the  \textsc{ogbg-molhiv dataset}. For reference, we also include the measured metric, which is ROC-AUC.}
    \begin{tabular}{lccc}
    \toprule
    \multirow{2}{*}{Method} &  \multicolumn{3}{c}{\textsc{molhiv}} \\
     & Training Time (ms) & Inference Time (ms) & ROC-AUC $\uparrow$  \\
        \midrule
        \textbf{MPNN} \\
       $\,$ GIN + BatchNorm \cite{xu2019how}  & 4.12 & 3.23 &  75.58$\pm$1.40\\
              \midrule
\textbf{Subgraph GNNs} \\
       $\,$ \textsc{DSS-GNN (EGO+)}~\citep{bevilacqua2022equivariant} &  69.58 & 49.88   & 76.78$\pm$1.66 \\
        \midrule
        \textbf{Natural Baselines} \\
        $\,$ GIN + \revision{BatchNorm +} RNF-PE  \cite{sato2021random}  & 5.16 & 4.54  &  75.98$\pm$1.63 \\
        \revision{$\,$ \textsc{GIN} + RNF-NORM} & 7.34 & 5.59 &  77.61$\pm$1.64 \\
       \midrule
        \revision{\textsc{GIN} +} \ourmethod-\textsc{no-rnf} & 10.82 & 9.21 & 77.09$\pm$1.49 \\
        \revision{\textsc{GIN} +} \ourmethod-\textsc{ms} & 11.15   & 9.37 & 78.84$\pm$1.22\\
        \revision{\textsc{GIN} +} \ourmethod &  11.24 & 9.55 & 78.98$\pm$1.17  \\
       \bottomrule
    \end{tabular}
    \label{tab:runtimes}
\end{table*}

\textbf{Runtimes.} While the asymptotic complexity remains linear with respect to the number of nodes and edges in the graph, as in standard MPNNs, our \ourmethod\ requires some additional computations due to the hidden layers in the normalization mechanism. 
To measure the impact of these additional layers, we measure the required training and inference times of \ourmethod, whose results are reported in \cref{tab:runtimes}. Specifically, we report the average time per batch measured on a Nvidia RTX-2080 GPU. For a fair comparison, in all methods, we use the same number of layers, batch size and number of channels. Our results indicate that while \ourmethod\ requires additional computational time, it is still a fraction of the cost of more complex methods like Subgraph GNNs \rebuttal{(5.2$\times$ faster than efficient expressive models like Subgraph GNNs)}, while yielding favorable downstream performance. \rebuttal{These results indicate that \ourmethod\ offers a strong tradeoff between performance and cost.}

\section{Additional Experimental Results}
\label{app:additionalResults}
\rebuttal{
\subsection{\textsc{\ourmethod} Coupled with Additional Backbones}
\label{app:additional_backbones}
In this subsection, we evaluate the performance of \ourmethod\ when coupled with additional backbones beyond GIN and GSN as presented in \Cref{sec:experiments}. Specifically, we evaluate its integration with GCN \citep{kipf2016semi}, GAT \citep{veličković2018graph}, and GPS \citep{rampavsek2022recipe}, using \ourmethod\ as the normalization layer. \Cref{tab:additional_backbones} shows that adding \ourmethod\ results in improved performance regardless of the architecture. These results underscore the versatility of \ourmethod, which can be  coupled with any GNN layer and improve its performance.

\begin{table*}[t]
    \centering
    \scriptsize
    \caption{\ourmethod\ coupled with additional backbones.  Incorporating \ourmethod\ enhances performance across all backbones. }
    \begin{tabular}{lcc}
    \toprule
    \multirow{2}{*}{Method} &  \textsc{ZINC} & \textsc{molhiv}\\
     & MAE $\downarrow$ & ROC-AUC $\uparrow$  \\
        \midrule
GCN &	0.367$\pm$0.011 & 	76.06$\pm$0.97 \\ 
GCN + \ourmethod & 	0.233$\pm$0.005 & 	77.54$\pm$1.10 \\ 
\midrule
GAT	& 0.384$\pm$0.007	& 76.0$\pm$0.80 \\ 
GAT + \ourmethod	 &  0.254$\pm$0.009  &	77.39$\pm$1.03 \\ 
        \midrule
GPS &	0.070$\pm$0.004 & 	78.80$\pm$1.01 \\ 
GPS + \ourmethod &	0.062$\pm$0.006	 & 79.21$\pm$1.26 \\
       \bottomrule
    \end{tabular}
    \label{tab:additional_backbones}
\end{table*}

\subsection{Using other Expressive Mechanisms in \textsc{\ourmethod}}
\label{app:expressiveness_by_subgraphs}
As explained in \Cref{sec:adaptiveNorm,sec:theory}, in this paper, we chose to use RNF within \ourmethod.
Incorporating RNF into our \ourmethod\ allows it to fully adapt to the input graph, providing different affine parameters for non-isomorphic nodes. Full adaptivity is lost when removing RNF and using a standard MPNN as $\text{GNN}_\text{NORM}$, as in \ourmethod-\textsc{no-rnf}. This is because \ourmethod-\textsc{no-rnf} is not more expressive than an MPNN (\Cref{theo:no-rnf}), and thus, there exist non-isomorphic nodes that will get the same representation (and the same affine parameters). However, any other most expressive architecture used as $\text{GNN}_\text{NORM}$ would achieve the same full adaptivity, and our choice of MPNN + RNF was motivated by its linear complexity.

To make this point clearer, in this subsection, we analyze the performance of a variant of \ourmethod\ that uses a Subgraph GNN, namely DS-GNN \citep{bevilacqua2022equivariant}, as the $\text{GNN}_\text{NORM}$, instead of an MPNN + RNF. We denote this variant as \ourmethod-SubgraphGNN. \Cref{tab:subgraphs} shows that \ourmethod-SubgraphGNN behaves similarly to \ourmethod. However, employing a SubgraphGNN as the $\text{GNN}_\text{NORM}$ results in additional complexity coming from the Subgraph GNN itself (which is quadratic rather than linear). Thus, while enhanced expressivity in $\text{GNN}_\text{NORM}$ does not require employing RNF specifically, using an MPNN + RNF provides the advantage of linear complexity.

\begin{table*}[t]
    \centering
    \scriptsize
    \caption{Using RNF vs. Subgraph GNN for Expressiveness in \textsc{\ourmethod}. While alternative expressive GNNs for $\text{GNN}_\text{NORM}$ result in comparable performance, the MPNN + RNF approach offers the benefit of retaining linear complexity.}
    \begin{tabular}{lcc}
    \toprule
    \multirow{2}{*}{Method} &  \textsc{ZINC} & \textsc{molhiv}\\
     & MAE $\downarrow$ & ROC-AUC $\uparrow$  \\
        \midrule
GIN + \ourmethod-SubgraphGNN & 0.1186$\pm$0.008 & 78.62$\pm$1.31 \\
\midrule
GIN + \ourmethod\ (Using RNF) & 0.1203$\pm$0.006 & 78.98$\pm$1.17 \\
       \bottomrule
    \end{tabular}
    \label{tab:subgraphs}
\end{table*}

\subsection{\textsc{\ourmethod} building from BatchNorm}
As discussed in \Cref{sec:method}, in \Cref{eq:granola}, $\mu_{b,n,c}$ and $\sigma_{b,n,c}$ are the mean and std of $\tilde{\mH}^{(\ell)}$ computed per node across the feature dimension, exactly as in LayerNorm-node. This choice was dictated by the fact that we found LayerNorm-node to offer consistent performance across different benchmarks. However, it is also possible to compute $\mu_{b,n,c}$ and $\sigma_{b,n,c}$ across different dimensions, resulting in a variant of \ourmethod\ that builds on top of other normalization layers rather than LayerNorm-node. In \Cref{tab:granola_batchnorm} we study the effectiveness of the variant of \textsc{\ourmethod} that builds on top of  BatchNorm. As can be seen from the table, \textsc{\ourmethod}-BatchNorm outperforms BatchNorm, further highlighting the role of graph adaptivity in normalizations.

\begin{table*}[t]

\centering
\caption{A comparison where we further modified BatchNorm to be graph adaptive using our \ourmethod\ design.
\ourmethod-BatchNorm surpasses BatchNorm, showcasing the importance of graph adaptivity across different normalization blueprints.
}
\label{tab:granola_batchnorm}
\small

\begin{tabular}{lcc}
\toprule
    Method & \textsc{ZINC-12k} $\downarrow$     & \textsc{molhiv} $\uparrow$ \\
    \midrule

    GIN + BatchNorm             & 0.1630 $\pm$ 0.004 & 75.58  $\pm$  1.40  \\
    GIN + LayerNorm-node        & 0.1649  $\pm$  0.009 & 75.24  $\pm$  1.71  \\
    \midrule
    GIN + \ourmethod-BatchNorm                        & 0.1397  $\pm$  0.007 & 77.93  $\pm$  1.22  \\
    GIN + \ourmethod\ (LayerNorm-node) & 0.1203  $\pm$  0.006 & 78.98  $\pm$  1.17  \\
\bottomrule
\end{tabular}
\end{table*}

\subsection{Shared weights vs. Per Layer $\text{GNN}^{(\ell)}_{\textsc{norm}}$}
\label{app:weight_sharing}
In our experiments  in \Cref{sec:experiments}, the additional normalization GNN layer, $\text{GNN}^{(\ell)}_{\textsc{norm}}$, had unique parameters per layer, as evidenced by the superscript $\ell$. In \Cref{tab:shared_weights}, we show that by using the same $\text{GNN}^{(\ell)}_{\textsc{norm}}$ for all $\ell$, that is by sharing the weights across all layers, which overall requires less parameters, our \textsc{\ourmethod} continues to offer competitive results, further highlighting its competitiveness also in cases where parameter budget is low.

\begin{table*}[t]
\centering
\caption{Comparison of \ourmethod\ with its variant \ourmethod\textsc{-Shared$\textsc{GNN}_\textsc{norm}$} obtained by sharing $\textsc{GNN}^{(\ell)}_\textsc{norm}$ across layers (instead of having a different one for each layer), and BatchNorm and LayerNorm-node for reference. While using a $\textsc{GNN}^{(\ell)}_\textsc{norm}$ per GNN layer leads to better results, sharing it for all $\ell$ also offers significant improvements over baseline methods.}
\small
\label{tab:shared_weights}
\begin{tabular}{lcc}
\toprule
    Method & \textsc{ZINC-12k} $\downarrow$     & \textsc{molhiv} $\uparrow$ \\
    \midrule
    \textsc{GIN} + BatchNorm  & 0.1630  $\pm$  0.004 & 75.58 $\pm$ 1.40 \\
        GIN + LayerNorm-node          & 0.1649  $\pm$  0.009 & 75.24  $\pm$  1.71  \\

    \textsc{GIN} + \ourmethod-\textsc{Shared$\textsc{GNN}_\textsc{norm}$} & 0.1293 $\pm$ 0.009 & 78.47 $\pm$ 1.20  \\
     GIN + \ourmethod\   & 0.1203 $\pm$ 0.006 & 78.98 $\pm$ 1.17  \\
\bottomrule
\end{tabular}

\end{table*}

\subsection{The role of \textsc{\ourmethod} Normalization}
\label{app:no_bias}
To better understand the contribution of \textsc{\ourmethod}, we provide results where we set the normalization term in \textsc{\ourmethod} to zero, leaving only the bias term to be learned.  That is achieved by setting $\gamma_{b,n,c}^{(\ell)}=0$ in \Cref{eq:granola}.  By following this approach, we isolate the contribution of the normalization itself from the bias in the normalization process. Our results, shown in \Cref{tab:only_bias}, indicate that the normalization term in \textsc{\ourmethod} is significant, and cannot be replaced by a simple bias.

\begin{table*}[t]
\centering
\caption{Comparison of \ourmethod\ with its variant where $\gamma_{b,n,c}^{(\ell)}=0$ in \Cref{eq:granola} shows the importance of the normalization in \ourmethod. }
\small
\label{tab:only_bias}
\begin{tabular}{lcc}
\toprule
    Method & \textsc{ZINC-12k} $\downarrow$     & \textsc{molhiv} $\uparrow$ \\
    \midrule

GIN + \ourmethod-$\beta_{b,n,c}^{(\ell)}$-only &
0.1928$\pm$0.018 &
74.11$\pm$1.39\\
GIN + \ourmethod\ &
0.1203$\pm$0.006 &
78.98$\pm$1.17 
\\
\bottomrule
\end{tabular}

\end{table*}

}

\subsection{RNF as PE Ablation Study}
\label{app:rnfAblation}
\revision{\ourmethod\ benefits from (i) enhanced expressiveness, and (ii) graph adaptivity. Property (i) is obtained by augmenting our normalization scheme with RNF, as shown in \Cref{fig:granola-new}. Therefore, it is important to ensure that the contribution \ourmethod\ does not stem solely from the use of RNF, but rather the overall approach and design of our method.}

\revision{To this end, in addition to the natural baseline of RNF-PE, which uses RNF as positional encoding combined with GIN + BatchNorm (as in \cite{sato2021random}), we now provide results of RNF-PE when combined with GIN and different normalization layers. Specifically, we consider Identity (no normalization) and LayerNorm (both graph and node variants). The results are provided in \Cref{table:rnfAblation}, together with the results of our \ourmethod, for reference and convenience of comparison. The results suggest that while the different variants of RNF-PE do not show significant improvement over the baseline of GIN + BatchNorm, our \ourmethod\ does. These results are further evidence that while RNF are theoretically powerful, they may not be significant in practice, as shown in \citet{eliasof2023graph}. Instead, it is important to incorporate them in a thoughtful manner, for example, to obtain graph adaptivity within the normalization layer, as in our \ourmethod.}

\begin{table*}[t]
    \centering
    \scriptsize
    \caption{Ablation study of RNF-PE with GIN and various normalization methods.}
    \begin{tabular}{lcc}
    \toprule
    \multirow{2}{*}{Method} &  \textsc{ZINC} & \textsc{molhiv}\\
     & MAE $\downarrow$ & ROC-AUC $\uparrow$  \\
        \midrule
    GIN + BatchNorm \cite{xu2019how} & 0.1630$\pm$0.04 & 75.58$\pm$1.40  \\
        \midrule
    GIN + BatchNorm + RNF-PE \cite{sato2021random}& 0.1621$\pm$0.014 & 75.98$\pm$1.63 \\
    GIN + LayerNorm-node + RNF-PE& 0.1663$\pm$0.015 & 76.22$\pm$1.58 \\
    GIN + LayerNorm-graph + RNF-PE& 0.1624$\pm$0.018 & 76.49$\pm$1.64 \\
    GIN +  Identity + RNF-PE & 0.2063$\pm$0.018 & 75.31$\pm$2.04 \\
\midrule
\ourmethod & 0.1203$\pm$0.006 & 78.98$\pm$1.17 \\
       \bottomrule
    \end{tabular}
    \label{table:rnfAblation}
\end{table*}

\subsection{Normalization GNN depth ablation study}
\label{app:depthGNNnorm}
\revision{As discussed in \Cref{sec:method}, our \ourmethod\ utilizes a GNN to learn graph-adaptive normalization shift and scaling parameters, and we denote this GNN by $\text{GNN}^{(\ell)}_{\textsc{norm}}$. Combined with the RNF as part of the input to $\text{GNN}^{(\ell)}_{\textsc{norm}}$, we are able to obtain both enhanced expressiveness (from RNF) and graph-adaptivity (by $\text{GNN}^{(\ell)}_{\textsc{norm}}$). It is therefore interesting to study the effect of the number of layers in $\text{GNN}^{(\ell)}_{\textsc{norm}}$ on the downstream performance. Specifically, in the case where $\text{GNN}^{(\ell)}_{\textsc{norm}}$ has 0 layers, the experiment defaults to a model very similar the RNF-NORM baseline (with the difference that in RNF-NORM we have $\mZ^{(\ell)}_{b}=\mR^{(\ell)}_b$, while in this case we have $\mZ^{(\ell)}_{b}=\tilde{\mH}_b^{(\ell)} \oplus \mR^{(\ell)}_b$), thereby losing graph adaptivity. In \Cref{table:gnn_norm_ablation_depth}, we provide results on a varying number of layers, from 0 to 4. Our results suggest that there is a significant importance in terms of performance to having graph adaptivity in the normalization technique, as offered by our \ourmethod.}

\begin{table*}[t]
    \centering
    \caption{Ablation study of the depth (number of layers) of $\text{GNN}^{(\ell)}_{\textsc{norm}}$}
    \scriptsize
    \begin{tabular}{lccccc}
    \toprule
    Depth &  0 & 1
     & 2 & 3 & 4  \\
        \midrule
    \textsc{ZINC (MAE $\downarrow$)} & 0.1562$\pm$0.013 & 0.1218$\pm$0.009 & 0.1203$\pm$0.006 &	0.1209$\pm$0.010 &	0.1224$\pm$0.008 \\
    \midrule
        \textsc{molhiv (ROC-AUC $\uparrow$)} & 77.61$\pm$1.64  & 78.33$\pm$1.34	&78.98$\pm$1.17 &	78.86$\pm$1.20 &	78.21$\pm$1.31 \\
       \bottomrule
    \end{tabular}
\label{table:gnn_norm_ablation_depth}
\end{table*}

\subsection{Comparison with additional baselines}
\label{app:additionalBaselines}
In the main paper, in \cref{sec:experiments}, we focused on providing a comprehensive comparison with directly comparable methods, i.e., standard normalization methods, graph normalization methods, as well as our own set of natural baselines. In this section, we provide additional comparisons with other expressive approaches, such as positional encoding methods and Subgraph GNNs. Our additional comparisons on \textsc{ZINC-12k}, OGB, and TUDatasets are provided in  \cref{tab:zinc_appendix}, \cref{tab:ogb_appendix}, and \cref{tab:tud_datasets_appendix}, respectively.

It is important to note, that while some of these methods achieve better performance than our \ourmethod, they are not within the same complexity class as \ourmethod, and specifically, they are not linear with respect to the number of nodes and edges in the graph, as discussed in \cref{app:complexity}. For example, \textsc{RFP-QR-$\hat{\mathbf{L}},\hat{\mathbf{A}},\mathbf{S}^{\rm{learn}}$-DSS} \cite{eliasof2023graph}, which also utilizes the expressive power of RNF, achieves an MAE of 0.1106 on \textsc{ZINC-12k}, while our \ourmethod\ achieves 0.1203. However, the former is of quadratic complexity with respect to the number of nodes, while \ourmethod\ is linear, as standard MPNNs. On the other hand, the linear and thus directly comparable RFP - $\ell_2$ - $\hat{\mathbf{L}},\hat{\mathbf{A}}$ achieves a higher (worse) MAE of 0.1368. 
\rebuttal{Similarly, while there are Subgraph GNNs that can outperform \ourmethod, these are at least a quadratic in the number of nodes. }
Therefore, we find that our \ourmethod\ offers a practical yet powerful approach for utilizing RNF.

Additionally, we observe that in some cases, \ourmethod\ achieves similar or better performance than other expressive and asymptotically more complex methods. For example, our results on TUDatsets in \cref{tab:tud_datasets_appendix} show that \ourmethod\ offers better performance than DSS-GNN \cite{bevilacqua2022equivariant} on all considered datasets, \revision{despite DSS being quadratic in the number of nodes}.

\begin{table*}[t]
\centering
\caption{Additional comparisons of \ourmethod\ with various baselines on the \textsc{ZINC-12k} graph dataset. All methods obey to the 500k parameter budget.}
\scriptsize
\begin{tabular}{l c}
    \toprule
        Method & \textsc{ZINC (MAE $\downarrow$)} \\
        \midrule  
        \textbf{\textsc{MPNNs}} \\
        $\,$ \textsc{GCN}~\citep{kipf2016semi}        & 0.321$\pm$0.009\\
        $\,$ \textsc{PNA}~\citep{corso2020pna} & 0.133$\pm$0.011 \\
        \midrule
                \textbf{\textsc{Positional Encoding Methods}} \\
                $\,$ \textsc{GIN} + Laplacian PE \cite{eliasof2023graph} &  0.1557$\pm$0.012 \\
                $\,$ RFP - $\ell_2$ - $\hat{\mathbf{L}},\hat{\mathbf{A}}$ \cite{eliasof2023graph} & 0.1368$\pm$0.010 \\
                                $\,$ RWPE \cite{dwivedi2022graph} & 0.1279$\pm$0.005 \\
                $\,$ \textsc{RFP-QR-$\hat{\mathbf{L}},\hat{\mathbf{A}},\mathbf{S}^{\rm{learn}}$-DSS} \cite{eliasof2023graph} & 0.1106$\pm$0.012 \\
        \midrule
        \textbf{\textsc{Domain-aware GNNs}} \\
        $\,$ \textsc{GSN}~\citep{bouritsas2020improving}        & 0.101$\pm$0.010\\
        $\,$ \textsc{CIN}~\citep{bodnar2021weisfeiler2} & 0.079$\pm$0.006 \\
                \midrule
        \textbf{\textsc{Higher Order GNNs}} \\
        $\,$  \textsc{PPGN} \cite{maron2019provably} & 0.079$\pm$0.005 \\
        $\,$  \textsc{PPGN++ (6)} \cite{puny2023equivariant} & 0.071$\pm$0.001 \\
                        \midrule
        \textbf{\textsc{Graph Transformers}} \\
        $\,$  \textsc{GPS} \cite{rampavsek2022recipe} & 0.070$\pm$0.004 \\
        $\,$  \textsc{Graphormer} \cite{ying2021transformers} & 0.122$\pm$0.006 \\
        $\,$  \textsc{Graphormer-GD} \cite{zhang2023rethinking} & 0.081$\pm$0.009 \\
                
        \midrule
        \textbf{\textsc{Subgraph GNNs}} \\
        $\,$ \textsc{NGNN}~\citep{zhang2021nested} & 0.111$\pm$0.003 \\
        $\,$ \textsc{DS-GNN (EGO+)}~\citep{bevilacqua2022equivariant} & 0.105$\pm$0.003 \\
        $\,$ \textsc{DSS-GNN (EGO+)}~\citep{bevilacqua2022equivariant} & 0.097$\pm$0.006 \\
        $\,$ \textsc{GNN-AK}~\citep{zhao2022from} & 0.105$\pm$0.010 \\
        $\,$ \textsc{GNN-AK+}~\citep{zhao2022from} & 0.091$\pm$0.011 \\
        $\,$ \textsc{SUN (EGO+)}~\citep{frasca2022understanding}  & 0.084$\pm$0.002\\
        $\,$ \textsc{GNN-SSWL}~\citep{zhang2023complete}  & 0.082$\pm$0.003\\
        $\,$ \textsc{GNN-SSWL+}~\citep{zhang2023complete}  & 0.070$\pm$0.005\\
        $\,$ \textsc{DS-GNN (NM)}~\citep{bevilacqua2023efficient} & 0.087$\pm$0.003\\
        \midrule
        \textbf{\textsc{Natural Baselines}} \\
        $\,$ \textsc{GIN} + \revision{BatchNorm +} RNF-PE \cite{sato2021random} & 0.1621$\pm$0.014\\
            $\,$ \textsc{GIN} + RNF-NORM & 0.1562$\pm$0.013\\
        \midrule
                        \textbf{\textsc{Standard Normalization Layers}} \\
        $\,$ \textsc{GIN} + BatchNorm~\citep{xu2019how}        & 0.1630$\pm$0.004\\
        $\,$ \textsc{GIN} + InstanceNorm \cite{ulyanov2016instance} & 0.2984$\pm$0.017 \\
        $\,$ \textsc{GIN} + LayerNorm-node \cite{ba2016layer} & 0.1649$\pm$0.009 \\
        $\,$ \textsc{GIN} + LayerNorm-graph \cite{ba2016layer} & 0.1609$\pm$0.014  \\
        $\,$ \textsc{GIN} + Identity & 0.2209$\pm$0.018 \\ 
                        \midrule
\textbf{\textsc{Graph Normalization Layers}} \\
  
        $\,$ \textsc{GIN} + PairNorm \cite{zhao2020pairnorm} & 0.3519$\pm$0.008 \\
        $\,$ \revision{\textsc{GIN} + MeanSubtractionNorm} \citep{yang2020revisiting} & 0.1632$\pm$0.021    \\
        $\,$ \textsc{GIN} + DiffGroupNorm \cite{zhou2020towards} & 0.2705$\pm$0.024 \\
        $\,$ \revision{\textsc{GIN} + NodeNorm} \citep{zhou2021understanding} & 0.2119$\pm$0.017 \\
       $\,$ \textsc{GIN} + GraphNorm \cite{cai2021graphnorm}&   0.3104$\pm$0.012 \\
        $\,$ \textsc{GIN} + GraphSizeNorm \cite{dwivedi2023benchmarking} & 0.1931$\pm$0.016 \\
        $\,$ \revision{\textsc{GIN} + SuperNorm} \citep{chen2023improving}& 0.1574$\pm$0.018\\
        \midrule
                \textsc{GIN} + \ourmethod-\textsc{no-rnf}& 0.1497$\pm$0.008\\
        \textsc{GIN} + \ourmethod-\textsc{ms} & 0.1238$\pm$0.009  \\
        \textsc{GIN} + \ourmethod & 0.1203$\pm$0.006\\
        \bottomrule
    \end{tabular}
    \label{tab:zinc_appendix}
\end{table*}

\begin{table*}[t]
\centering
\scriptsize
\caption{Additional comparisons of \ourmethod\ to natural baselines, standard and graph normalization layers, and subgraph GNNs, demonstrating the practical advantages of our approach. -- indicates the result was not reported in the original paper.
}
\begin{tabular}{l c c c c}
    \toprule
        \multirow{2}*{Method $\downarrow$ / Dataset $\rightarrow$} & \textsc{molesol} & \textsc{moltox21} & \textsc{molbace} &\textsc{molhiv} \\
        & \textsc{RMSE $\downarrow$} & \textsc{ROC-AUC $\uparrow$} &\textsc{ROC-AUC $\uparrow$} &\textsc{ROC-AUC $\uparrow$} \\
        \midrule  
        \textbf{\textsc{MPNNs}} \\
        $\,$ \textsc{GCN}~\citep{kipf2016semi}        &  1.114$\pm$0.036 & 75.29$\pm$0.69 & 79.15$\pm$1.44 & 76.06$\pm$0.97 \\
        $\,$ \textsc{GIN}~\citep{xu2019how}        & 1.173$\pm$0.057 & 74.91$\pm$0.51& 72.97$\pm$4.00 & 75.58$\pm$1.40 \\
        \midrule
        \textbf{\textsc{Positional Encoding Methods}} \\
        $\,$ GIN + Laplacian PE \cite{eliasof2023graph} & -- & -- & -- & 77.88$\pm$1.82 \\
        $\,$ RFP - $\ell_2$ - $\hat{\mathbf{L}},\hat{\mathbf{A}}$ \cite{eliasof2023graph} &-- & -- & -- & 77.91$\pm$1.43 \\
        $\,$ RWPE \cite{dwivedi2022graph} & -- & -- & -- & 
        78.62$\pm$1.13 \\
        $\,$ \textsc{RFP-QR-$\hat{\mathbf{L}},\hat{\mathbf{A}},\mathbf{S}^{\rm{learn}}$-DSS} \cite{eliasof2023graph} & -- & -- & -- & 80.58$\pm$1.21 \\
        \midrule
        
        \revision{\textbf{\textsc{Expressive GNNs}} }\\
        $\,$ \textsc{GSN} \cite{bouritsas2020improving} & -- & -- & -- &  {80.39}$\pm$0.90\\
        $\,$ \textsc{CIN}~\citep{bodnar2021weisfeiler2} & -- & -- & -- & {80.94}$\pm$0.57 \\
        \midrule
        \textbf{\textsc{Subgraph GNNs}} \\
        $\,$ \textsc{Reconstr. GNN}~\citep{cotta2021reconstruction} & 1.026$\pm$0.033 & 75.15$\pm$1.40 & -- & 76.32$\pm$1.40 \\
        $\,$ \textsc{NGNN}~\citep{zhang2021nested} & -- & -- & -- & 78.34$\pm$1.86\\
        $\,$ \textsc{DS-GNN (EGO+)}~\citep{bevilacqua2022equivariant} & -- & 76.39$\pm$1.18 & -- & 77.40$\pm$2.19 \\
        $\,$ \textsc{DSS-GNN (EGO+)}~\citep{bevilacqua2022equivariant} & -- & 77.95$\pm$0.40 &  -- & 76.78$\pm$1.66 \\
        $\,$ \textsc{GNN-AK+}~\citep{zhao2022from} & -- & -- & -- & 79.61$\pm$1.19 \\
        $\,$ \textsc{SUN (GIN) (EGO+)}~\citep{frasca2022understanding}  & -- & -- & -- & 80.03$\pm$0.55 \\
        $\,$ \textsc{GNN-SSWL+}~\citep{zhang2023complete}  & -- & -- & -- & 79.58$\pm$0.35 \\
        $\,$ \textsc{DS-GNN (NM)}~\citep{bevilacqua2023efficient} & 0.847$\pm$0.015 & 76.25$\pm$1.12 & 78.41$\pm$1.94&  76.54$\pm$1.37\\
        \midrule
        \textbf{\textsc{Natural Baselines}} \\
        $\,$ \textsc{GIN} + \revision{BatchNorm +} RNF-PE \cite{sato2021random} & 1.052$\pm$0.041 & 75.14$\pm$0.67 &  74.28$\pm$3.80 & 75.98$\pm$1.63\\
               $\,$  \textsc{GIN} + RNF-NORM & 1.039$\pm$0.040 & 75.12$\pm$0.92 & 77.96$\pm$4.36 & 77.61$\pm$1.64\\ 
        \midrule
                \textbf{\textsc{Standard Normalization Layers}} \\

        $\,$ \textsc{GIN} + BatchNorm~\citep{xu2019how}        & 1.173$\pm$0.057 & 74.91$\pm$0.51& 72.97$\pm$4.00 & 75.58$\pm$1.40 \\
        $\,$ \textsc{GIN} + InstanceNorm \cite{ulyanov2016instance} & 1.099$\pm$0.038 & 73.82$\pm$0.96 & 74.86$\pm$3.37 & 76.88$\pm$1.93 \\
        $\,$ \textsc{GIN} + LayerNorm-node \cite{ba2016layer} & 1.058$\pm$0.024 & 74.81$\pm$0.44 & 77.12$\pm$2.70 & 75.24$\pm$1.71 \\
        $\,$ \textsc{GIN} + LayerNorm-graph \cite{ba2016layer} & 1.061$\pm$0.043 & 75.03$\pm$1.24 &76.49$\pm$4.07 & 76.13$\pm$1.84  \\
        $\,$ \textsc{GIN} + Identity & 1.164$\pm$0.059 & 73.34$\pm$1.08 & 72.55$\pm$2.98 & 71.89$\pm$1.32   \\
        \midrule
                \textbf{\textsc{Graph Normalization Layers}} \\
        $\,$ \textsc{GIN} + PairNorm \cite{zhao2020pairnorm} & 1.084$\pm$0.031 & 73.27$\pm$1.05 & 75.11$\pm$4.24 & 76.18$\pm$1.47 \\
        $\,$ \revision{\textsc{GIN} + MeanSubtractionNorm} \citep{yang2020revisiting} & 1.062$\pm$0.045 & 74.98$\pm$0.62 & 76.36$\pm$4.47 & 76.37$\pm$1.40  \\
        $\,$ \textsc{GIN} + DiffGroupNorm \cite{zhou2020towards} & 1.087$\pm$0.063 & 74.48$\pm$0.76 &  75.96$\pm$3.79 & 74.37$\pm$1.68 \\
        $\,$ \revision{\textsc{GIN} + NodeNorm} \citep{zhou2021understanding} & 1.068$\pm$0.029 & 73.27$\pm$0.83 & 75.67$\pm$4.03 & 75.50$\pm$1.32  \\
        $\,$ \textsc{GIN} + GraphNorm \cite{cai2021graphnorm} & 1.044$\pm$0.027 & 73.54$\pm$0.80 & 73.23$\pm$3.88 & 78.08$\pm$1.16 \\
        $\,$ \textsc{GIN} + GraphSizeNorm \cite{dwivedi2023benchmarking} & 1.121$\pm$0.051 &  74.07$\pm$0.30 & 76.18$\pm$3.52& 75.44$\pm$1.51\\
        $\,$ \revision{\textsc{GIN} + SuperNorm} \citep{chen2023improving}& 1.037$\pm$0.044 & 75.08$\pm$0.98 & 75.12$\pm$3.38 & 76.55$\pm$1.76  \\
        \midrule
                 \textsc{GIN} + \ourmethod-\textsc{no-rnf} & 1.088$\pm$0.032 & 75.87$\pm$0.72  &  76.23$\pm$2.06 & 77.09$\pm$1.49\\

        \textsc{GIN} + \ourmethod-\textsc{ms} & 0.971$\pm$0.026 & 77.32$\pm$0.67 & 79.18$\pm$2.41 & 78.84$\pm$1.22 \\
        \textsc{GIN} + \ourmethod & 0.960$\pm$0.020 & 77.19$\pm$0.85 & 79.92$\pm$2.56 & 78.98$\pm$1.17 \\
        \bottomrule
\end{tabular}
\label{tab:ogb_appendix}
\end{table*}

\begin{table*}[t]
    \centering
    \scriptsize
    \caption{Additional Graph classification accuracy (\%) $\uparrow$ on TUDatasets. -- indicates the result was not reported in the original paper. 
    }
    \label{tab:tud_datasets_appendix}
    \begin{tabular}{l ccccc }
        \toprule
        Method $\downarrow$ / Dataset $\rightarrow$ & 
        \textsc{MUTAG} &
        \textsc{PTC} &
        \textsc{PROTEINS} &
        \textsc{NCI1} &
        \textsc{NCI109} 
        \\
        \midrule  
                \textbf{\textsc{Expressive GNNs}} \\
        $\,$ \textsc{GSN} \citep{bouritsas2020improving} &
        {92.2}$\pm$7.5 &
        {68.2}$\pm$7.2 & 
        76.6$\pm$5.0 &
        83.5$\pm$2.0 &
        -- 
        \\
        
        $\,$ \textsc{SIN} \citep{bodnar2021weisfeiler} & 
        --  &
        -- & 
        76.4$\pm$3.3 &
        82.7$\pm$2.1 &
        -- 
        \\

         $\,$ \textsc{CIN} \citep{bodnar2021weisfeiler2} & 
        {92.7}$\pm$6.1 &
        {68.2}$\pm$5.6 &
        {77.0}$\pm$4.3 &
        {83.6}$\pm$1.4 &
        {84.0}$\pm$1.6 
        \\

        \midrule

                \textbf{\textsc{Subgraph GNNs}} \\
 
         $\,$ \textsc{DropEdge} \cite{rong2019dropedge} & 
        91.0$\pm$5.7 & 
        64.5$\pm$2.6 & 
        73.5$\pm$4.5 &
         82.0$\pm$2.6 &
        82.2$\pm$1.4 
        
        \\

        $\,$ \textsc{GraphConv + ID-GNN} \citep{you2021identity} & 
        89.4$\pm$4.1 &
        65.4$\pm$7.1 &
        71.9$\pm$4.6 &
        83.4$\pm$2.4 &
        {82.9}$\pm$1.2 
        \\
        $\,$  \textsc{DS-GNN (GIN) (EGO+)} \cite{bevilacqua2022equivariant}  & 
         91.0$\pm$4.8 &
         68.7$\pm$7.0 &
         76.7$\pm$4.4 &
         82.0$\pm$1.4 &
         80.3$\pm$0.9 
        \\
        
        $\,$  \textsc{DSS-GNN (GIN) (EGO+)} \cite{bevilacqua2022equivariant}   & 
        91.1$\pm$7.0 &
        {69.2}$\pm$6.5 &
        75.9$\pm$4.3 &
        83.7$\pm$1.8 &
        82.8$\pm$1.2 
        \\
         $\,$  \textsc{GNN-AK+}~\citep{zhao2022from}& 91.3$\pm$7.0 & 67.8$\pm$8.8 & 77.1$\pm$5.7 & 85.0$\pm$2.0 & -- \\
             $\,$   \textsc{SUN (GIN) (EGO+)} \cite{frasca2022understanding} & 92.1$\pm$5.8 & 67.6$\pm$5.5 & 76.1$\pm$5.1 & 84.2$\pm$1.5 & 83.1$\pm$1.0 \\

        \midrule

        \textbf{\textsc{Natural Baselines}} \\
        $\,$ \textsc{GIN} + \revision{BatchNorm +} RNF-PE \cite{sato2021random} & 90.8$\pm$4.8 & 64.4$\pm$6.7 & 74.1$\pm$2.6 & 82.1$\pm$1.5 & 81.3$\pm$1.1     \\
        $\,$ \textsc{GIN} + RNF-NORM & 88.9$\pm$5.1 & 67.1$\pm$4.3 & 76.4$\pm$4.8 & 81.8$\pm$2.2 & 81.9$\pm$2.5 \\ 
       
        \midrule

        \textbf{\textsc{Standard Normalization Layers}} \\
        $\,$       \textsc{GIN} + BatchNorm \citep{xu2019how} & 
        89.4$\pm$5.6 & 
        64.6$\pm$7.0 & 
        76.2$\pm$2.8 &
        82.7$\pm$1.7 &
        82.2$\pm$1.6 
        \\
        $\,$ \textsc{GIN} + InstanceNorm \cite{ulyanov2016instance} & 90.5$\pm$7.8 & 64.7$\pm$5.9 & 76.5$\pm$3.9 & 81.2$\pm$1.8 & 81.8$\pm$1.6\\
        $\,$ \textsc{GIN} + LayerNorm-node \cite{ba2016layer} & 90.1$\pm$5.9 & 65.3$\pm$4.7 & 76.2$\pm$3.0 & 81.9$\pm$1.5 & 82.0$\pm$2.1\\
        $\,$ \textsc{GIN} + Layernorm-graph \cite{ba2016layer} & 90.4$\pm$6.1 & 66.4$\pm$6.5 & 76.1$\pm$4.9 & 82.0$\pm$1.6 & 81.5$\pm$1.3 \\
        $\,$ \textsc{GIN} + Identity & 87.9$\pm$7.8 & 63.1$\pm$7.2 & 75.8$\pm$6.3 & 81.3$\pm$2.1 & 80.6$\pm$1.7 \\
        \midrule
        \textbf{\textsc{Graph Normalization Layers}} \\
        $\,$ \textsc{GIN} + PairNorm \cite{zhao2020pairnorm} & 87.8$\pm$7.1 & 67.1$\pm$6.3 & 76.7$\pm$4.8 & 75.8$\pm$2.1 & 75.3$\pm$1.4  \\
        $\,$ \revision{\textsc{GIN} + MeanSubtractionNorm} \citep{yang2020revisiting} & 90.1$\pm$5.4 & 68.0$\pm$5.9 & 76.4$\pm$4.6 & 79.2$\pm$1.2 & 79.0$\pm$1.1\\
        $\,$ \textsc{GIN} + DiffGroupNorm \cite{zhou2020towards}  & 87.8$\pm$7.6  & 67.4$\pm$6.8 & 76.9$\pm$4.3 & 77.2$\pm$2.6 & 77.1$\pm$1.9 \\
        $\,$ \revision{\textsc{GIN} + NodeNorm} \citep{zhou2021understanding} & 88.3$\pm$7.0 & 65.1$\pm$8.3 & 74.5$\pm$4.6 & 81.2$\pm$1.4 & 79.4$\pm$1.0 \\
        $\,$ \textsc{GIN} + GraphNorm \cite{cai2021graphnorm} & 91.6$\pm$6.5 & 64.9$\pm$7.5 & 77.4$\pm$4.9 & 81.4$\pm$2.4 & 82.4$\pm$1.7 \\
        $\,$ \textsc{GIN} + GraphSizeNorm \cite{dwivedi2023benchmarking} & 88.2$\pm$6.3 & 68.0$\pm$8.1 & 77.0$\pm$5.0 & 79.8$\pm$1.5 & 80.1$\pm$1.8\\
        $\,$ \revision{\textsc{GIN} + SuperNorm} \citep{chen2023improving} & 89.3$\pm$5.6 & 64.7$\pm$3.9 & 76.1$\pm$4.7 & 83.0$\pm$1.5 & 82.8$\pm$1.7 \\
            \midrule
                        
        \textsc{GIN} + \ourmethod-\textsc{no-rnf} & 89.7$\pm$5.4 & 65.8$\pm$5.7 & 76.6$\pm$2.5 & 83.1$\pm$1.2 & 83.0$\pm$1.5     \\
        \textsc{GIN} + \ourmethod-\textsc{ms} & 92.1$\pm$4.8 & 69.8$\pm$4.7 & 77.3$\pm$3.5  & 84.3$\pm$1.5 &
        83.5$\pm$1.8 \\
        \textsc{GIN} + \ourmethod & 92.2$\pm$4.6 & 69.9$\pm$4.5 & 77.5$\pm$3.7 & 84.0$\pm$1.7 &  83.7$\pm$1.6 \\
        \bottomrule

    \end{tabular}

\end{table*}

\newpage
\clearpage

\section*{NeurIPS Paper Checklist}


\begin{enumerate}

\item {\bf Claims}
    \item[] Question: Do the main claims made in the abstract and introduction accurately reflect the paper's contributions and scope?
    \item[] Answer: \answerYes{} 
    \item[] Justification: The contribution paragraph in the introduction (\Cref{sec:intro}) explicitly describes the main contributions of the paper. 
    \item[] Guidelines:
    \begin{itemize}
        \item The answer NA means that the abstract and introduction do not include the claims made in the paper.
        \item The abstract and/or introduction should clearly state the claims made, including the contributions made in the paper and important assumptions and limitations. A No or NA answer to this question will not be perceived well by the reviewers. 
        \item The claims made should match theoretical and experimental results, and reflect how much the results can be expected to generalize to other settings. 
        \item It is fine to include aspirational goals as motivation as long as it is clear that these goals are not attained by the paper. 
    \end{itemize}

\item {\bf Limitations}
    \item[] Question: Does the paper discuss the limitations of the work performed by the authors?
    \item[] Answer: \answerYes 
    \item[] Justification: The conclusion section (\Cref{sec:conclusions}) includes a separate paragraph discussing the limitations. 
    \item[] Guidelines:
    \begin{itemize}
        \item The answer NA means that the paper has no limitation while the answer No means that the paper has limitations, but those are not discussed in the paper. 
        \item The authors are encouraged to create a separate "Limitations" section in their paper.
        \item The paper should point out any strong assumptions and how robust the results are to violations of these assumptions (e.g., independence assumptions, noiseless settings, model well-specification, asymptotic approximations only holding locally). The authors should reflect on how these assumptions might be violated in practice and what the implications would be.
        \item The authors should reflect on the scope of the claims made, e.g., if the approach was only tested on a few datasets or with a few runs. In general, empirical results often depend on implicit assumptions, which should be articulated.
        \item The authors should reflect on the factors that influence the performance of the approach. For example, a facial recognition algorithm may perform poorly when image resolution is low or images are taken in low lighting. Or a speech-to-text system might not be used reliably to provide closed captions for online lectures because it fails to handle technical jargon.
        \item The authors should discuss the computational efficiency of the proposed algorithms and how they scale with dataset size.
        \item If applicable, the authors should discuss possible limitations of their approach to address problems of privacy and fairness.
        \item While the authors might fear that complete honesty about limitations might be used by reviewers as grounds for rejection, a worse outcome might be that reviewers discover limitations that aren't acknowledged in the paper. The authors should use their best judgment and recognize that individual actions in favor of transparency play an important role in developing norms that preserve the integrity of the community. Reviewers will be specifically instructed to not penalize honesty concerning limitations.
    \end{itemize}

\item {\bf Theory Assumptions and Proofs}
    \item[] Question: For each theoretical result, does the paper provide the full set of assumptions and a complete (and correct) proof?
    \item[] Answer: \answerYes{} 
    \item[] Justification: \Cref{sec:theory} presents our main theoretical results, and \Cref{app:proofs} contains the full set of assumptions as well as the proofs.
    \item[] Guidelines:
    \begin{itemize}
        \item The answer NA means that the paper does not include theoretical results. 
        \item All the theorems, formulas, and proofs in the paper should be numbered and cross-referenced.
        \item All assumptions should be clearly stated or referenced in the statement of any theorems.
        \item The proofs can either appear in the main paper or the supplemental material, but if they appear in the supplemental material, the authors are encouraged to provide a short proof sketch to provide intuition. 
        \item Inversely, any informal proof provided in the core of the paper should be complemented by formal proofs provided in appendix or supplemental material.
        \item Theorems and Lemmas that the proof relies upon should be properly referenced. 
    \end{itemize}

    \item {\bf Experimental Result Reproducibility}
    \item[] Question: Does the paper fully disclose all the information needed to reproduce the main experimental results of the paper to the extent that it affects the main claims and/or conclusions of the paper (regardless of whether the code and data are provided or not)?
    \item[] Answer: \answerYes{} 
    \item[] Justification: \Cref{sec:method} fully describes our proposed method, and \Cref{app:exp} contains all experimental details to reproduce our results.
    \item[] Guidelines:
    \begin{itemize}
        \item The answer NA means that the paper does not include experiments.
        \item If the paper includes experiments, a No answer to this question will not be perceived well by the reviewers: Making the paper reproducible is important, regardless of whether the code and data are provided or not.
        \item If the contribution is a dataset and/or model, the authors should describe the steps taken to make their results reproducible or verifiable. 
        \item Depending on the contribution, reproducibility can be accomplished in various ways. For example, if the contribution is a novel architecture, describing the architecture fully might suffice, or if the contribution is a specific model and empirical evaluation, it may be necessary to either make it possible for others to replicate the model with the same dataset, or provide access to the model. In general. releasing code and data is often one good way to accomplish this, but reproducibility can also be provided via detailed instructions for how to replicate the results, access to a hosted model (e.g., in the case of a large language model), releasing of a model checkpoint, or other means that are appropriate to the research performed.
        \item While NeurIPS does not require releasing code, the conference does require all submissions to provide some reasonable avenue for reproducibility, which may depend on the nature of the contribution. For example
        \begin{enumerate}
            \item If the contribution is primarily a new algorithm, the paper should make it clear how to reproduce that algorithm.
            \item If the contribution is primarily a new model architecture, the paper should describe the architecture clearly and fully.
            \item If the contribution is a new model (e.g., a large language model), then there should either be a way to access this model for reproducing the results or a way to reproduce the model (e.g., with an open-source dataset or instructions for how to construct the dataset).
            \item We recognize that reproducibility may be tricky in some cases, in which case authors are welcome to describe the particular way they provide for reproducibility. In the case of closed-source models, it may be that access to the model is limited in some way (e.g., to registered users), but it should be possible for other researchers to have some path to reproducing or verifying the results.
        \end{enumerate}
    \end{itemize}

\item {\bf Open access to data and code}
    \item[] Question: Does the paper provide open access to the data and code, with sufficient instructions to faithfully reproduce the main experimental results, as described in supplemental material?
    \item[] Answer: \answerYes{} 
    \item[] Justification: We offer comprehensive details regarding the implementation and evaluation of our method (\Cref{app:exp}), \rebuttal{our code is available}.
    \item[] Guidelines:
    \begin{itemize}
        \item The answer NA means that paper does not include experiments requiring code.
        \item Please see the NeurIPS code and data submission guidelines (\url{https://nips.cc/public/guides/CodeSubmissionPolicy}) for more details.
        \item While we encourage the release of code and data, we understand that this might not be possible, so “No” is an acceptable answer. Papers cannot be rejected simply for not including code, unless this is central to the contribution (e.g., for a new open-source benchmark).
        \item The instructions should contain the exact command and environment needed to run to reproduce the results. See the NeurIPS code and data submission guidelines (\url{https://nips.cc/public/guides/CodeSubmissionPolicy}) for more details.
        \item The authors should provide instructions on data access and preparation, including how to access the raw data, preprocessed data, intermediate data, and generated data, etc.
        \item The authors should provide scripts to reproduce all experimental results for the new proposed method and baselines. If only a subset of experiments are reproducible, they should state which ones are omitted from the script and why.
        \item At submission time, to preserve anonymity, the authors should release anonymized versions (if applicable).
        \item Providing as much information as possible in supplemental material (appended to the paper) is recommended, but including URLs to data and code is permitted.
    \end{itemize}

\item {\bf Experimental Setting/Details}
    \item[] Question: Does the paper specify all the training and test details (e.g., data splits, hyperparameters, how they were chosen, type of optimizer, etc.) necessary to understand the results?
    \item[] Answer: \answerYes{} 
    \item[] Justification: \Cref{app:exp} contains all the details, including the hyperparameters and other experimental choices.
    \item[] Guidelines:
    \begin{itemize}
        \item The answer NA means that the paper does not include experiments.
        \item The experimental setting should be presented in the core of the paper to a level of detail that is necessary to appreciate the results and make sense of them.
        \item The full details can be provided either with the code, in appendix, or as supplemental material.
    \end{itemize}

\item {\bf Experiment Statistical Significance}
    \item[] Question: Does the paper report error bars suitably and correctly defined or other appropriate information about the statistical significance of the experiments?
    \item[] Answer: \answerYes{} 
    \item[] Justification: All results (\Cref{sec:experiments,app:additionalBaselines}) are accompanied by the corresponding standard deviation.
    \item[] Guidelines:
    \begin{itemize}
        \item The answer NA means that the paper does not include experiments.
        \item The authors should answer "Yes" if the results are accompanied by error bars, confidence intervals, or statistical significance tests, at least for the experiments that support the main claims of the paper.
        \item The factors of variability that the error bars are capturing should be clearly stated (for example, train/test split, initialization, random drawing of some parameter, or overall run with given experimental conditions).
        \item The method for calculating the error bars should be explained (closed form formula, call to a library function, bootstrap, etc.)
        \item The assumptions made should be given (e.g., Normally distributed errors).
        \item It should be clear whether the error bar is the standard deviation or the standard error of the mean.
        \item It is OK to report 1-sigma error bars, but one should state it. The authors should preferably report a 2-sigma error bar than state that they have a 96\% CI, if the hypothesis of Normality of errors is not verified.
        \item For asymmetric distributions, the authors should be careful not to show in tables or figures symmetric error bars that would yield results that are out of range (e.g. negative error rates).
        \item If error bars are reported in tables or plots, The authors should explain in the text how they were calculated and reference the corresponding figures or tables in the text.
    \end{itemize}

\item {\bf Experiments Compute Resources}
    \item[] Question: For each experiment, does the paper provide sufficient information on the computer resources (type of compute workers, memory, time of execution) needed to reproduce the experiments?
    \item[] Answer: \answerYes{} 
    \item[] Justification: \Cref{app:exp} describes the type of GPUs used in the experiments, \Cref{app:complexity} provides a runtime comparison.
    \item[] Guidelines:
    \begin{itemize}
        \item The answer NA means that the paper does not include experiments.
        \item The paper should indicate the type of compute workers CPU or GPU, internal cluster, or cloud provider, including relevant memory and storage.
        \item The paper should provide the amount of compute required for each of the individual experimental runs as well as estimate the total compute. 
        \item The paper should disclose whether the full research project required more compute than the experiments reported in the paper (e.g., preliminary or failed experiments that didn't make it into the paper). 
    \end{itemize}
    
\item {\bf Code Of Ethics}
    \item[] Question: Does the research conducted in the paper conform, in every respect, with the NeurIPS Code of Ethics \url{https://neurips.cc/public/EthicsGuidelines}?
    \item[] Answer: \answerYes{} 
    \item[] Justification: The research conforms with the code of ethics.
    \item[] Guidelines:
    \begin{itemize}
        \item The answer NA means that the authors have not reviewed the NeurIPS Code of Ethics.
        \item If the authors answer No, they should explain the special circumstances that require a deviation from the Code of Ethics.
        \item The authors should make sure to preserve anonymity (e.g., if there is a special consideration due to laws or regulations in their jurisdiction).
    \end{itemize}

\item {\bf Broader Impacts}
    \item[] Question: Does the paper discuss both potential positive societal impacts and negative societal impacts of the work performed?
    \item[] Answer: \answerYes{} 
    \item[] Justification: We discuss potential societal impacts in \Cref{sec:conclusions}.
    \item[] Guidelines:
    \begin{itemize}
        \item The answer NA means that there is no societal impact of the work performed.
        \item If the authors answer NA or No, they should explain why their work has no societal impact or why the paper does not address societal impact.
        \item Examples of negative societal impacts include potential malicious or unintended uses (e.g., disinformation, generating fake profiles, surveillance), fairness considerations (e.g., deployment of technologies that could make decisions that unfairly impact specific groups), privacy considerations, and security considerations.
        \item The conference expects that many papers will be foundational research and not tied to particular applications, let alone deployments. However, if there is a direct path to any negative applications, the authors should point it out. For example, it is legitimate to point out that an improvement in the quality of generative models could be used to generate deepfakes for disinformation. On the other hand, it is not needed to point out that a generic algorithm for optimizing neural networks could enable people to train models that generate Deepfakes faster.
        \item The authors should consider possible harms that could arise when the technology is being used as intended and functioning correctly, harms that could arise when the technology is being used as intended but gives incorrect results, and harms following from (intentional or unintentional) misuse of the technology.
        \item If there are negative societal impacts, the authors could also discuss possible mitigation strategies (e.g., gated release of models, providing defenses in addition to attacks, mechanisms for monitoring misuse, mechanisms to monitor how a system learns from feedback over time, improving the efficiency and accessibility of ML).
    \end{itemize}
    
\item {\bf Safeguards}
    \item[] Question: Does the paper describe safeguards that have been put in place for responsible release of data or models that have a high risk for misuse (e.g., pretrained language models, image generators, or scraped datasets)?
    \item[] Answer: \answerNA{} 
    \item[] Justification: Not applicable.
    \item[] Guidelines:
    \begin{itemize}
        \item The answer NA means that the paper poses no such risks.
        \item Released models that have a high risk for misuse or dual-use should be released with necessary safeguards to allow for controlled use of the model, for example by requiring that users adhere to usage guidelines or restrictions to access the model or implementing safety filters. 
        \item Datasets that have been scraped from the Internet could pose safety risks. The authors should describe how they avoided releasing unsafe images.
        \item We recognize that providing effective safeguards is challenging, and many papers do not require this, but we encourage authors to take this into account and make a best faith effort.
    \end{itemize}

\item {\bf Licenses for existing assets}
    \item[] Question: Are the creators or original owners of assets (e.g., code, data, models), used in the paper, properly credited and are the license and terms of use explicitly mentioned and properly respected?
    \item[] Answer: \answerYes{} 
    \item[] Justification: We credited the owners of the assets and the licenses in \Cref{app:exp}.
    \item[] Guidelines:
    \begin{itemize}
        \item The answer NA means that the paper does not use existing assets.
        \item The authors should cite the original paper that produced the code package or dataset.
        \item The authors should state which version of the asset is used and, if possible, include a URL.
        \item The name of the license (e.g., CC-BY 4.0) should be included for each asset.
        \item For scraped data from a particular source (e.g., website), the copyright and terms of service of that source should be provided.
        \item If assets are released, the license, copyright information, and terms of use in the package should be provided. For popular datasets, \url{paperswithcode.com/datasets} has curated licenses for some datasets. Their licensing guide can help determine the license of a dataset.
        \item For existing datasets that are re-packaged, both the original license and the license of the derived asset (if it has changed) should be provided.
        \item If this information is not available online, the authors are encouraged to reach out to the asset's creators.
    \end{itemize}

\item {\bf New Assets}
    \item[] Question: Are new assets introduced in the paper well documented and is the documentation provided alongside the assets?
    \item[] Answer: \answerNA{} 
    \item[] Justification: We do not introduce new assets.
    \item[] Guidelines:
    \begin{itemize}
        \item The answer NA means that the paper does not release new assets.
        \item Researchers should communicate the details of the dataset/code/model as part of their submissions via structured templates. This includes details about training, license, limitations, etc. 
        \item The paper should discuss whether and how consent was obtained from people whose asset is used.
        \item At submission time, remember to anonymize your assets (if applicable). You can either create an anonymized URL or include an anonymized zip file.
    \end{itemize}

\item {\bf Crowdsourcing and Research with Human Subjects}
    \item[] Question: For crowdsourcing experiments and research with human subjects, does the paper include the full text of instructions given to participants and screenshots, if applicable, as well as details about compensation (if any)? 
    \item[] Answer: \answerNA{} 
    \item[] Justification: Not applicable.
    \item[] Guidelines:
    \begin{itemize}
        \item The answer NA means that the paper does not involve crowdsourcing nor research with human subjects.
        \item Including this information in the supplemental material is fine, but if the main contribution of the paper involves human subjects, then as much detail as possible should be included in the main paper. 
        \item According to the NeurIPS Code of Ethics, workers involved in data collection, curation, or other labor should be paid at least the minimum wage in the country of the data collector. 
    \end{itemize}

\item {\bf Institutional Review Board (IRB) Approvals or Equivalent for Research with Human Subjects}
    \item[] Question: Does the paper describe potential risks incurred by study participants, whether such risks were disclosed to the subjects, and whether Institutional Review Board (IRB) approvals (or an equivalent approval/review based on the requirements of your country or institution) were obtained?
    \item[] Answer: \answerNA{} 
    \item[] Justification: Not applicable.
    \item[] Guidelines:
    \begin{itemize}
        \item The answer NA means that the paper does not involve crowdsourcing nor research with human subjects.
        \item Depending on the country in which research is conducted, IRB approval (or equivalent) may be required for any human subjects research. If you obtained IRB approval, you should clearly state this in the paper. 
        \item We recognize that the procedures for this may vary significantly between institutions and locations, and we expect authors to adhere to the NeurIPS Code of Ethics and the guidelines for their institution. 
        \item For initial submissions, do not include any information that would break anonymity (if applicable), such as the institution conducting the review.
    \end{itemize}

\end{enumerate}

\end{document}